\definecolor{teal}{HTML}{008080}
\newcommand{\sample}[1]{\mathrel{{\leftarrow}\vcenter{\hbox{\scriptsize\rmfamily\upshape\ensuremath{{#1}}}}}}
\newcommand{\descr}[1]{\vspace{0.2cm} \noindent \textbf{#1}}
\newcommand{\adversary}{\mathcal{A}}
\newtheorem{theorem}{Theorem}
\newtheorem{proposition}{Proposition}
\newtheorem{observation}{Observation}
\newtheorem{corollary}{Corollary}
\theoremstyle{definition}
\newtheorem{experiment}{Experiment}
\theoremstyle{definition}
\newtheorem{definition}{Definition}
\newcommand{\blue}[1]{#1}
\newif\iffull
\title{On the (In)Feasibility of Attribute Inference Attacks on Machine Learning Models}
\author{
\IEEEauthorblockN{
Benjamin Zi Hao Zhao\IEEEauthorrefmark{3}\IEEEauthorrefmark{2},
Aviral Agrawal\IEEEauthorrefmark{4}\IEEEauthorrefmark{1}\IEEEauthorrefmark{2}, 
Catisha Coburn\IEEEauthorrefmark{5},
Hassan Jameel Asghar\IEEEauthorrefmark{1}\IEEEauthorrefmark{2}, 
}\IEEEauthorblockN{
Raghav Bhaskar\IEEEauthorrefmark{2},
Mohamed Ali Kaafar\IEEEauthorrefmark{1}\IEEEauthorrefmark{2},
Darren Webb\IEEEauthorrefmark{5}, and
Peter Dickinson\IEEEauthorrefmark{5}
}

\IEEEauthorblockA{ \IEEEauthorrefmark{1}Macquarie University,
\IEEEauthorrefmark{3}University of New South Wales,
\IEEEauthorrefmark{2}Data61-CSIRO,
\IEEEauthorrefmark{4}BITS Pilani K.K.Birla Goa campus,} 
\IEEEauthorblockA{\IEEEauthorrefmark{5}Cyber \& Electronic Warfare Division, Defence Science and Technology Group, Australia}

}
\begin{document}
\pagestyle{fancy}
\maketitle

\begin{abstract}
With an increase in low-cost machine learning APIs, advanced machine learning models may be trained on private datasets and monetized by providing them as a service. However, privacy researchers have demonstrated that these models may leak information about records in the training dataset via membership inference attacks. In this paper, we take a closer look at another inference attack reported in literature, called attribute inference, whereby an attacker tries to infer missing attributes of a partially known record used in the training dataset by accessing the machine learning model as an API. We show that even if a classification model succumbs to membership inference attacks, it is unlikely to be susceptible to attribute inference attacks. We demonstrate that this is because membership inference attacks fail to distinguish a member from a nearby non-member. We call the ability of an attacker to distinguish the two (similar) vectors as strong membership inference. We show that membership inference attacks cannot infer membership in this strong setting, and hence inferring attributes is infeasible. However, under a relaxed notion of attribute inference, called approximate attribute inference, we show that it is possible to infer attributes close to the true attributes. We verify our results on three publicly available datasets, five membership, and three attribute inference attacks reported in literature.
\iffull\else\footnote{The full (anonymous) version of this paper is available at: \url{https://github.com/ApproxAttrInfrence/ApproxAttrInference/blob/master/Attribute_Inference_fullpaper.pdf}}\fi
\end{abstract}

\section{Introduction}
\label{sec:intro}
The introduction of low-cost machine learning APIs from Google, Microsoft, Amazon, IBM, etc., has enabled many companies to monetize advanced machine learning models trained on private datasets by exposing them as a service. This also caught the attention privacy researchers who have shown that these models may leak information about the records in the training dataset via membership inference (MI) attacks. In an MI attack, the adversary (for instance, a user of the service) with API access to the model, can use the model's responses (class labels and probability/confidence of each label) on input records of his/her choice to infer whether a target input was part of the training dataset or not. This can be a serious privacy breach when the underlying dataset is sensitive, e.g., medical data, \blue{mobility traces and financial transactions~\cite{shokri-mia, ml-leaks}.} 

To date, membership inference attacks have been the primary focus of studies that have contemplated on traits of the datasets and machine learning models that impact the attacks' likelihood and accuracy~\cite{shokri-mia, ml-leaks, somesh-overfit, nasr2018comprehensive, disparate-sub-group}. Our focus is on a related, and perhaps a more likely attack in practice, where the adversary with partial background knowledge of a target’s record seeks to complete its knowledge of the missing attributes by observing the model’s responses. This attack is called \emph{model inversion}~\cite{model-inversion,pharma}, or in general \emph{attribute inference} (AI)~\cite{somesh-overfit}. Yeom et al.~\cite{somesh-overfit} provide a formal definition of an AI adversary, and argue that this adversary can infer the missing attribute values by using an MI adversary as a subroutine. More precisely, for a missing attribute with $t$ possible values, the AI adversary constructs $t$ different input (feature) vectors, gives them as input to the MI adversary, and outputs the attribute value which corresponds to the vector that the MI adversary deems to be in the training dataset. 

Beyond providing a formal definition, Yeom et al. experimentally validate the success of an AI attack on regression models, and conclude that the more overfit the model, the higher the success of the AI attack~\cite[\S 6.3]{somesh-overfit}. Seeking to replicate their results on classification models (rather than regression models), where the adversary is given a partial record and its true label, our results in this paper turn out to be different. We show that even if the target classification model is susceptible to MI attacks, AI attacks on the same model have negligible advantage. Furthermore, the results persist even for highly overfitted models. We explore the reasons behind this failure, and find that in order for AI attacks to be successful, the underlying MI attack, used as a subroutine, should be able to infer membership in a stronger sense. More precisely, the MI attack should be able to distinguish between a member of the training dataset and any non-members that are \emph{close} to that member, according to a suitable distance metric 
(we consider several such distance metrics based on the nature of the dataset).
We call this, strong membership inference (SMI), parameterized by the distance from the training dataset. 

We formulate the notion of SMI, and prove that a successful MI attack does not necessarily mean a successful SMI attack. Furthermore, we also formally show that a successful SMI attack is essential for an AI attack. This result implies that even a standalone AI attack, which does not use an MI attack as a subroutine, is bound to fail if SMI attacks are unsuccessful. We experimentally validate these results by evaluating several proposed MI attacks from the literature on several discrete and continuous datasets, and target machine learning models, and show that while these attacks are successful in inferring membership, they fall well short as an SMI attack, and consequently as an AI attack. \blue{On the positive side (from an attacker's point of view), we investigate a more relaxed notion of attribute inference, called \emph{approximate attribute inference} (AAI), where the adversary is only tasked with finding attributes \emph{close} to the target attributes, according to a given distance metric. We show that while AI attacks are not applicable, AAI attacks perform significantly better, and improve as the target model becomes more overfit. The AAI notion is also a natural extension of the (exact) AI notion for continuous attributes which has mostly been used in discrete settings~\cite{somesh-overfit}.} 

In more detail, our main contributions are as follows.
\begin{itemize}[noitemsep, topsep=0pt, leftmargin=*]
    \item We provide a formal treatment of \blue{membership, attribute, and approximate attribute inference attacks, and propose a new definition of strong membership inference (SMI)}, building on the work from~\cite{somesh-overfit} on the definitions of MI and AI in Section~\ref{sec:formal}. We formally prove that an SMI adversary is \textit{strictly stronger} than an MI adversary (Theorem~\ref{the:mi-smi}), and that SMI is necessary for AI (Theorem~\ref{the:nosmi-noai}). 
    \item We experimentally validate our theoretical findings through an extensive set of experiments involving five MI attacks, three black-box and two white-box, from ~\cite{shokri-mia, somesh-overfit, ml-leaks, nasr2018comprehensive}, eight datasets (constructed from 3 main binary and continuous datasets), and several target machine learning models (neural networks, support vector machines, logistic regression, and random forests) (cf. Section~\ref{sec:experiments}). Our results in Section~\ref{sec:mia-exp} validate our formal separation and show that while these attacks are successful to infer membership, they are ineffective in inferring membership at distances close to the training dataset (SMI).
    
    \item  In Section~\ref{sec:aia-exp}, we further construct 3 AI attacks using the MI attacks of \cite{shokri-mia, somesh-overfit} and \cite{ml-leaks} as a subroutine, and show via experiments that these attacks are not effective in inferring attributes, even if we increase the overfitting levels of the target model. On the other hand, we show that our constructed AI attacks can approximately infer attributes (AAI), with the advantage increasing as the level of overfit of the target model increases. 
    \item Our other key findings include explanation behind the seemingly contradictory conclusions about AI attacks on regression models~\cite{somesh-overfit} and classification models (our focus) in Section~\ref{sec:ai-result}. We also show that the success of an MI attack is dependent on the class label of the vector; if the corresponding class occupies an overwhelmingly large portion of the feature space, then training records belonging to this class are harder to distinguish from non-members (cf. Section~\ref{sec:mia-labels}). This gives one plausible reason why MI attacks have always performed poorly on target models for binary classification problems~\cite{ml-leaks, shokri-mia}. 

\end{itemize}

\section{Formal Treatment of Membership and Attribute Inference Attacks}
\label{sec:formal}


In this section, we formally introduce the privacy notions of strong membership inference (SMI), and \blue{recap the notions of membership, attribute and approximate attribute inference}. In order to define them, we need rigorous definitions of a distance metric on the feature space, missing (features) attributes of a feature vector and its relation to distance, and how the probability distribution on the feature space behaves around feature vectors. We first define these concepts in the next section followed by privacy definitions in Section~\ref{sec:definition}.

\subsection{Notation and Definitions}

\descr{Feature Space.} Let $\mathbb{D}$ denote a subset of the real space $\mathbb{R}$. We assume the feature space to be $\mathbb{D}^m$, where each point $\mathbf{x} \in \mathbb{D}^m$ is called a feature vector consisting of $m$ elements/features. We assume the output space to be $Y = \mathbb{R}^*$. Let $\mathcal{D}$ be a distribution over $\mathbb{D}^m$. The \emph{training} dataset $X$ is defined as a multiset of $n$ elements drawn i.i.d. from $\mathbb{D}^m$ with distribution $\mathcal{D}$. Each $\mathbf{x} \in X$ is accompanied by its \emph{true} label $\mathbf{y} \in Y$. We denote this mapping by $c$, which we call the \emph{target concept} following standard terminology~\cite{learning-theory, shai-shai}. Thus, for each $\mathbf{x} \in X$, $c(\mathbf{x})$ denotes is true label. The term label is used generically; it may be discrete, denoting different classes, or it may be continuous, denoting the confidence or probability score for the different classes. The support of distribution $\mathcal{D}$ is defined as $\text{supp}(\mathcal{D}) = \{ \mathbf{x} \in \mathbb{D}^{m} \mid p_{\mathbf{x}} > 0 \}$,
where $p_{\mathbf{x}}$ is ${\Pr}_{\mathcal{D}}(\mathbf{x})$ if $\mathbb{D}^m$ is discrete and $f_\mathcal{D}(\mathbf{x})$ if $\mathbb{D}^m$ is continuous, $f$ being the probability density function. 
The notation $a \sample{\$} A$ indicates sampling an element $a$ from some set $A$ uniformly at random. The notation $\mathbf{x} \leftarrow \mathcal{D}$ denotes sampling a feature vector according to the distribution $\mathcal{D}$. Similarly, the notation $X \leftarrow \mathcal{D}^n$ denotes sampling a multiset of $n$ feature vectors (training set) drawn i.i.d. from $\mathcal{D}$. 



\descr{Machine Learning Models.} A machine learning model $h_X$ trained on $X$, takes as input $\mathbf{x} \in \mathbb{D}^m$ and outputs a label $\mathbf{y} \in Y$. Let $L : Y \times Y \rightarrow \mathbb{R}$ denote a loss function. The training loss of $h$, denoted, $L_{\text{tr}}(h)$, determines how much $h$ differs from $c$ on all $\mathbf{x} \in X$. 
Similarly we define the test loss of $h$ by $L_{\text{test}}(h)$, which is evaluated by computing $h(\mathbf{x})$ and $c(\mathbf{x})$ over the distribution $\mathcal{D}$.
For instance, if $Y$ is discrete, then $L$ can be the $0$-$1$ loss function, which evaluates to $L(h(\mathbf{x}), c(\mathbf{x})) = 0$, if $h(\mathbf{x}) = c(\mathbf{x})$, and $1$ otherwise~\cite{somesh-overfit}. The generalization error of $h$ is defined as
\begin{equation}
\label{eq:gen-err}
\text{err}(h) =  L_{\text{tr}}(h) - L_{\text{test}}(h). \end{equation}

The exact form of the loss function $L$ depends on the learning problem. More specifically, it depends on the nature of $Y$. If the learning problem is that of classification among $k$ different classes, which is our focus, we have $|Y| = k$. The true label of a sample $\mathbf{x}$ is then a $k$-element vector $\mathbf{y} \in Y$ with $1$ in the position corresponding to the true class, and 0 in all other places. A classifier $h_X$ however, may output a vector $\mathbf{y}' \in Y$ such that each element $y_i \in [0, 1]$ and $\lVert \mathbf{y}' \rVert_1 = 1$.  
 
\descr{Metrics.} The notions of SMI and AAI, informally introduced in the introduction, are based on the ability to distinguish nearby vectors in the feature space. The notion of ``nearness'' 
is based on a distance metric on the feature space $\mathbb{D}^m$. The examples of metrics used in this paper are Hamming distance $d_H$ for binary datasets, i.e., over the domain $\mathbb{D}^m = \{0, 1\}^m$, and Manhattan distance $d_M$ for normalized continuous datasets, i.e., over $\mathbb{D}^m = [-1, 1]^m$. In general, our results generalize to any \emph{conserving} metric 
\iffull
(See Appendix~\ref{app:metrics}).
\else
(See full version of the paper).
\fi
The following defines the distance of a non-member vector from the training dataset.


\begin{definition}[Distance and Neighbors] 
\label{def:dist}
Let $d$ be a (conserving) metric on $\mathbb{D}^m$. Let $r$ be a positive real number and let $\mathbf{x} \in \mathbb{D}^m$. The set of $r$-neighbors of $\mathbf{x}$ is the $r$-ball centered at $\mathbf{x}$ defined as
\[
B_d(\mathbf{x}, r) = \{ \mathbf{x}' \in \mathbb{D}^m \mid d(\mathbf{x}, \mathbf{x}') \le r \}.
\]
A member of $B_d(\mathbf{x}, r)$ is called an $r$-neighbor of $\mathbf{x}$. The distance of a vector $\mathbf{x} \in \mathbb{D}^m$ from a set $X \subseteq \mathbb{D}^m$ is defined as $\min_{\mathbf{x}' \in X} d(\mathbf{x}, \mathbf{x}')$. 
We call $\mathbf{x}'$ the nearest neighbor of $\mathbf{x}$ in $X$. 
\qed
\end{definition}

For attribute inference, we define the notion of a vector with missing attributes as \emph{portion}:

\begin{definition}[Portions]
\label{def:portions}
We introduce a special symbol $*$ called star, and define $\mathbb{D}^* = \mathbb{D} \cup \{*\}$. Let $S$ be a subset of indexes from $[m]$, which we call the set of unknown features. We define the map $\phi_S : \mathbb{D}^m \rightarrow {\mathbb{D}^*}^m$, which given as input a feature vector $\mathbf{x}$ outputs a vector $\mathbf{x}^*$, such that $x^*_i = *$ for each $i \in S$ and $x^*_i = x_i$ for all $i \notin S$. We call $\mathbf{x}^* = \phi_S(\mathbf{x})$ a \emph{portion} of $\mathbf{x}$ under $S$, or simply a portion of $\mathbf{x}$ if reference to the set $S$ is not relevant. The set of features that are \emph{masked}, i.e., replaced by $*$, in $\phi_S(\mathbf{x})$ will be called the \emph{unknown part} of $\mathbf{x}^*$. \qed
\end{definition}

\begin{definition}[Siblings]
\label{def:siblings}
 Define the set:
\[
\Phi_S(\mathbf{x}) = \{ \mathbf{x}' \in \mathbb{D}^m \mid \phi_S(\mathbf{x}) = \phi_S(\mathbf{x}') \},
\]
then $\Phi_S(\mathbf{x})$ is called the set of siblings of $\mathbf{x}$ under $S$, and any member of the set a sibling of $\mathbf{x}$ under $S$. Note that $\mathbf{x}$ is also a sibling of itself. 
\end{definition}

For attribute inference, the algorithm will be given a portion $\mathbf{x}^* = \phi_S(\mathbf{x})$, such that the feature corresponding to the set $S$ will be missing (unknown). The set $\Phi_S(\mathbf{x})$ contains all vectors which could possibly have the portion $\mathbf{x}^*$, including the original vector $\mathbf{x}$. These are the possible \emph{candidates} of the portion, and the algorithm would need to distinguish them from $\mathbf{x}$. 
In Appendix~\ref{app:metrics}, we show that given a vector $\mathbf{x}$, all of its possible portions with $i$ unknown features are within a ball whose radius can be determined through $i$. This result is useful to show the link between attribute inference and strong membership inference, as we shall see later. 

In some of our inference definitions, we would need to sample vectors in the vicinity of some feature vector $\mathbf{x}$. Depending on the distribution $\mathcal{D}$, it may well be the case that the vectors around $\mathbf{x}$ have a negligible probability of being sampled as feature vectors. Thus, the adversary may simply be able to infer non-membership by checking which vector is not likely to be sampled under $\mathcal{D}$~\cite{somesh-overfit}. To overcome this technical issue, we assume that the distribution $\mathcal{D}$ is such that there is at least one vector within a small radius around $\mathbf{x}$ which is assigned a similar probability as $\mathbf{x}$. This is made precise by the following definitions.  




\begin{definition}[Induced Distribution] 
\label{def:ind-dist}
Let $Z$ be a set of feature vectors. Define $Z_\mathcal{D} = \text{supp}(\mathcal{D}) \cap Z$. We say that a vector $\mathbf{z}$ is sampled from $Z$ according to the distribution induced by $\mathcal{D}$ if the resulting random variable has probability mass function $\frac{p_\mathbf{z}}{\sum_{\mathbf{z}' \in Z_\mathcal{D}} p_{\mathbf{z}'}}$
or the probability density function $\frac{p_\mathbf{z}}{\int_{Z_\mathcal{D}} f_\mathcal{D}(\mathbf{z}')d\mathbf{z}'}$
in the continuous case. \qed
\end{definition}

Note that the probabilities are only defined if $Z_\mathcal{D}$ is non-empty. We shall always assume this to be the case.

\begin{definition}[Indistinguishable Neighbor Assumption]
\label{def:smooth-dist}
Let $r > 0$, and let $d$ be a metric. Let $\mathbf{x} \sample{} \mathcal{D}$. Let \blue{$\mathbf{x}'$ be sampled from $B_d(\mathbf{x}, r)$ according to the distribution induced by $\mathcal{D}$. Let $\mathcal{A}$ be any algorithm (distinguisher) taking as input a feature vector $\mathbf{x}$ and a distribution $\mathcal{D}$, which outputs 1 if $\mathbf{x} \sample{} \mathcal{D}$ and $0$, otherwise. Let $b \sample{\$} \{0, 1\}$. Let $\mathcal{A}$ be given $\mathbf{x}$, if $b = 1$ and $\mathbf{x}'$, if $b = 0$. Then 
\begin{equation}
\label{eq:r-neigh}
\Pr[ \mathcal{A}(\mathbf{x}, \mathcal{D}) = 1] - \Pr[\mathcal{A}(\mathbf{x}', \mathcal{D}) = 1 ] \leq \epsilon(r).    
\end{equation}
We call $\epsilon(r)$, the $r$-neighbor distinguishability advantage, and assume it to be negligible for small $r$.} 
\qed


\end{definition}

The above assumption states around any vector $\mathbf{x}$, there are some vectors sampled according to the distribution induced by $\mathcal{D}$ that are indistinguishable from $\mathbf{x}$ under $\mathcal{D}$. 
Note that this does not apply to all neighbors of $\mathbf{x}$ \blue{(which may be out of distribution)}. \blue{Put in other words, it states that around any vector $\mathbf{x}$, there are neighborhood vectors which have similar probability of being sampled under $\mathcal{D}$. It is easy to see why this assumption should hold on datasets with continuous attributes, as minor changes in the attributes would hardly be off-distribution.} We argue that this is \blue{also a plausible assumption for discrete datasets}. For instance, consider the Purchase (shopping transactions) dataset~\cite{purchase_dataset}, which records the items bought by customers; 1 if the corresponding item is purchased by the customer and 0, otherwise. Given any vector $\mathbf{x}$, a nearby vector where a few item purchases have been removed can barely be considered an anomaly. Further note that the ability to distinguish increases, the further we move from the original vector, since now there are other vectors likely to be sampled through the induced distribution which are starkly different from $\mathbf{x}$, i.e., at greater distance from $\mathbf{x}$. Hence, the advantage $\epsilon(r)$ is defined as a function of $r$. \blue{To experimentally validate our claim, we trained a generative adversarial network (GAN) on the Purchase dataset to see if it can distinguish between original and nearby vectors. The results shown in Appendix~\ref{app:valid} are in agreement with our assumption.} 


\descr{Decision Regions.} Our final definition in this section is that of decision regions, i.e., regions in the feature space assigned to a given class. We shall show later that performance of membership inference is linked to the volume of decision regions. Let $k \ge 2$ be the number of classes. 
\begin{definition}
\label{def:dr}
Given a classifier $h_X$, for each class $j \in [k]$, we define its \emph{decision region} (DR) as
\begin{equation}
\label{eq:dr}
\mathcal{R}_j = \{ \mathbf{x} \in \mathbb{D}^m : h_X(\mathbf{x}) = j \}
\end{equation}
\end{definition}
This is analogous to the definition of acceptance region in~\cite{zhao2020resilience}. Similar to~\cite{zhao2020resilience}, we sample a large number of feature vectors from $\mathbb{D}^m$ uniformly at random, and use the fraction of vectors labelled $j$ by $h_X$ to estimate the \emph{fractional volume} of the decision region $\mathcal{R}_j$. Overloading notation, we shall use decision region to mean both the region and its fractional volume. A class is said to \emph{dominate} another class if the DR of the former is larger than the DR of the latter. The class with the largest DR shall be called the \emph{most dominant} class.

\subsection{Formal Results: Relationship between Variants of Membership and Attribute Inference}
\label{sec:definition}

\descr{Membership Inference.} Our first definition is that of membership inference which is derived from the definition in~\cite{somesh-overfit}. 

\begin{experiment}[Membership Inference (MI)~\cite{somesh-overfit}]
\label{exp:mem-inf-somesh}
Let $\adversary$ be the adversary, let $X \leftarrow \mathcal{D}^n$ be the input dataset.
\begin{enumerate}[itemsep=0pt]
    \item Construct model $h_{X}$.
    \item Sample $b \sample{\$} \{0, 1\}$.
    \item If $b = 0$, sample $\mathbf{x} \sample{} \mathcal{D}$.
    \item Else if $b = 1$, sample $\mathbf{x} \sample{\$} X$.
    \item $\adversary$ receives $\mathbf{x}$, $c(\mathbf{x})$ and oracle access to $h_{X}$.
    \item $\adversary$ announces $b' \in \{0, 1\}$. If $b' = b$, output 1, else output 0.
\end{enumerate}
\end{experiment}


\descr{Using the True Label.} Note that in addition to the vector $\mathbf{x}$,
its true label $c(\mathbf{x})$ is also given to the adversary. This then allows the adversary to compute the loss function $L(h_X(\mathbf{x}), c(\mathbf{x}))$ from the output of the model $h_X$. This is considered for instance in \cite{somesh-overfit}, the shadow model technique in~\cite{shokri-mia} and the shadow model variants of membership inference attacks in~\cite{ml-leaks}. However, note that the true label is not necessarily required as is demonstrated in one of the attacks in~\cite{ml-leaks} which only uses the knowledge of the input sample and the prediction returned by $h_X$. In this case, the adversary simply ignores the true label $c(\mathbf{x})$. The same is true in all the other experiments (definitions) to follow. 


Let $\text{Exp}_{\text{MI}}(\adversary, h, n, \mathcal{D})$ denote the output of the above experiment. 

\begin{definition}[Membership Inference Advantage]
\label{def:mi-adv}
The membership inference advantage of $\adversary$ on the classifier $h$, i.e., $\text{Adv}_{\text{MI}}(\adversary, h, n, \mathcal{D})$, is defined as
\begin{align*}
   &\quad \Pr [b' = 1 \mid b = 1] - \Pr [b' = 1 \mid b = 0] \\
   &= \Pr [b' = 0 \mid b = 0] - \Pr [b' = 0 \mid b = 1]
\end{align*}
\end{definition}

It is the thesis of this paper that an MI adversary with a significant advantage in distinguishing between members and non-members is due to the fact that non-members are at a significant distance away from member vectors. If on the other hand a non-member vector is close to a member vector, then the adversary may not be able to distinguish between the two. We therefore present another definition of membership inference, called strong membership inference (SMI) defined next. The definition challenges the adversary to distinguish between two neighboring feature vectors. The closeness of the two vectors is controlled by the parameter $r$ in the definition. We show later why such a strong inference attacker is a better starting point for constructing an attribute inference attacker in the spirit of~\cite{somesh-overfit}. 

\begin{experiment}[$r$-Strong Membership Inference (SMI)]
\label{exp:strong-mem-inf}

Let $\adversary$ be the adversary, let $X \leftarrow \mathcal{D}^n$ be the input dataset, let $d$ be a (conserving) metric, and let $r > 0$ be a real number.
\begin{enumerate}
    \item Construct model $h_X$.
  \item Sample $b \sample{\$} \{0, 1\}$.
    \item Sample $\mathbf{x}_0 \sample{\$} X$.
    \item If $b = 0$, sample $\mathbf{x}$ from $B_d(\mathbf{x}_0, r)$ according to the distribution induced by $\mathcal{D}$ (cf. Definition~\ref{def:ind-dist}).
    \item Else if $b = 1$, $\mathbf{x} = \mathbf{x}_0$.
    \item $\adversary$ receives $\mathbf{x}$, $c(\mathbf{x})$ and oracle access to $h_{X}$.
    \item $\adversary$ announces $b' \in \{0, 1\}$. If $b' = b$, output 1, else output 0.
\end{enumerate}
\end{experiment}


\begin{definition}[Strong Membership Inference Advantage]
\label{def:smi-adv}
The SMI advantage of $\adversary$ on the classifier $h$, i.e., $\text{Adv}_{\text{SMI}}(\adversary,
   h, r, n, \mathcal{D})$, is defined as
\begin{align*}
   &\quad \Pr [b' = 1 \mid b = 1] - \Pr [b' = 1 \mid b = 0] \\
   &= \Pr [b' = 0 \mid b = 0] - \Pr [b' = 0 \mid b = 1] 
\end{align*}
\end{definition}




\descr{Relationship between MI and SMI.}
SMI is the same as MI if $r$ is large enough to encompass all feature vectors in the support of $\mathcal{D}$. Otherwise, the next theorem shows that the two definitions are not equivalent. 

\begin{theorem}
\label{the:mi-smi}
There exists a domain $\mathbb{D}^m$, a distribution $\mathcal{D}$ on the domain, an $r > 0$, a dataset $X \leftarrow \mathcal{D}^n$, a classifier $h$, and an algorithm $\mathcal{A}$ such that an MI adversary gains non-negligible advantage using $\mathcal{A}$ whereas an SMI adversary has 0 advantage using the same algorithm.
\end{theorem}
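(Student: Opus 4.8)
The plan is to make the non-member branches of the two experiments maximally different. In the MI experiment (Experiment~\ref{exp:mem-inf-somesh}) a non-member is a fresh draw from all of $\mathcal{D}$, which for a spread-out $\mathcal{D}$ almost surely avoids the small training set $X$; in the SMI experiment (Experiment~\ref{exp:strong-mem-inf}) with a tiny radius $r$, a ``non-member'' is forced to be a support vector lying in $B_d(\mathbf{x}_0, r)$, and if that ball meets $\text{supp}(\mathcal{D})$ only in $\mathbf{x}_0$ itself, the $b=0$ and $b=1$ cases of the SMI experiment become literally identical. A classifier that merely memorizes $X$ then lets a simple MI distinguisher win, while leaving \emph{every} SMI distinguisher — in particular the same algorithm $\mathcal{A}$ — with exactly $0$ advantage.

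\textbf{Construction.} I would take $\mathbb{D} = \{0,1\}$, feature space $\{0,1\}^m$ with the Hamming metric $d_H$, $\mathcal{D}$ the uniform distribution (so $\text{supp}(\mathcal{D}) = \{0,1\}^m$), $k = 2$ classes, the target concept $c$ the constant map to class $1$, $n$ polynomial in $m$ (indeed $n = 1$ already works), and $r = 1/2$. Let $h_X$ be the memorizing classifier: $h_X(\mathbf{x})$ is class $1$ (one-hot output $(1,0)$) if $\mathbf{x} \in X$ and class $2$ (output $(0,1)$) otherwise, which is a legal classifier output in the sense of the model definition. Let $\mathcal{A}$, on input $(\mathbf{x}, c(\mathbf{x}))$ with oracle access to $h_X$, query $\mathbf{y}' = h_X(\mathbf{x})$, evaluate the $0$-$1$ loss $L(\mathbf{y}', c(\mathbf{x}))$, and announce $b' = 1$ iff this loss equals $0$.

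\textbf{Computing the two advantages.} For MI: if $b = 1$ then $\mathbf{x} \in X$, so $h_X(\mathbf{x}) = c(\mathbf{x})$ and $\Pr[b' = 1 \mid b = 1] = 1$; if $b = 0$ then $\mathbf{x} \leftarrow \mathcal{D}$ is fresh, so $\Pr[b' = 1 \mid b = 0] = \Pr_{\mathbf{x} \leftarrow \mathcal{D}}[\mathbf{x} \in X] \le n / 2^m$. Hence $\text{Adv}_{\text{MI}}(\mathcal{A}, h, n, \mathcal{D}) \ge 1 - n/2^m$, which is non-negligible (it tends to $1$ as $m$ grows). For SMI with $r = 1/2$: distinct vectors of $\{0,1\}^m$ are at Hamming distance at least $1$, so $B_{d_H}(\mathbf{x}_0, 1/2) \cap \text{supp}(\mathcal{D}) = \{\mathbf{x}_0\}$, and the vector drawn in step 4 of Experiment~\ref{exp:strong-mem-inf} is forced to equal $\mathbf{x}_0$. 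Thus in both cases $\mathcal{A}$ receives exactly $\mathbf{x}_0$, $c(\mathbf{x}_0)$, and the same oracle, so the distribution of $b'$ is independent of $b$ and $\text{Adv}_{\text{SMI}}(\mathcal{A}, h, r, n, \mathcal{D}) = 0$ exactly. This is consistent with the remark that SMI coincides with MI for large $r$, since $r = 1/2$ does not encompass all of $\text{supp}(\mathcal{D})$; and the Indistinguishable Neighbor Assumption (Definition~\ref{def:smooth-dist}) holds trivially here because $\mathbf{x}_0$ is an indistinguishable neighbor of itself.

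\textbf{Where the difficulty lies.} I do not anticipate a genuine obstacle; the statement is essentially a repackaging of the observation above. The one point that needs care is choosing the pair $(\mathcal{D}, r)$ so that every $r$-ball around a training point meets $\text{supp}(\mathcal{D})$ only in that point, while $\mathcal{D}$ remains a bona fide distribution satisfying Definition~\ref{def:smooth-dist} — the uniform distribution on $\{0,1\}^m$ with $r = 1/2$ does this cleanly. For completeness I would also note a continuous analogue (let $\mathcal{D}$ be supported on a finite grid of points in $[-1,1]^m$ that are pairwise more than $2r$ apart in Manhattan distance, with the same memorizing classifier and distinguisher), but the discrete construction is the shortest route to the claim.
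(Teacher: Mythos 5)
Your construction does satisfy the literal statement, but it proves it by a degenerate route that is quite different from the paper's, and the difference matters. You choose $r=1/2$ under the Hamming metric, so $B_{d_H}(\mathbf{x}_0,r)\cap\mathrm{supp}(\mathcal{D})=\{\mathbf{x}_0\}$ and the $b=0$ and $b=1$ branches of Experiment~\ref{exp:strong-mem-inf} hand the adversary the identical input; the SMI advantage is then $0$ for \emph{every} algorithm, including an information-theoretically optimal one, simply because the SMI game carries no information at that radius. The paper's proof (Appendix~\ref{app:inf-relations}) instead makes the SMI game non-degenerate: the support is a union of pairs of points, each training point having exactly one genuine, distinct $r$-neighbor in the support (pairs within distance $r$, different pairs separated by more than $3r$), the distribution is uniform over this support, the classifier extends each training label constantly over the $r$-ball, and the MI algorithm is the ``predicted label equals true label'' test. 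There the MI advantage is at least $\tfrac{1}{2}\cdot\tfrac{k-1}{k}$, while the \emph{same} algorithm has $0$ SMI advantage because the classifier is constant on the $r$-ball — i.e., the failure is attributable to the classifier/algorithm, not to the experiment being vacuous. That is the phenomenon the theorem is meant to model (see the remark following it, which says the proof ``constructs a dataset such that the output of the classifier is constant around any vector in the dataset''), and your instantiation cannot support that reading; it also trivializes the induced-distribution and indistinguishable-neighbor machinery the SMI definition is built on. If you want your shorter argument to carry the intended content, adjust it so that the ball around each training point meets the support in at least one point other than $\mathbf{x}_0$ (e.g., by the paper's pairing trick, or by putting probability mass on a designated Hamming-distance-$1$ neighbor and taking $r=1$ with a classifier constant on each such pair); as written, your MI computation is fine but the SMI half establishes the separation only vacuously.
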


\begin{proof}
\iffull
See Appendix~\ref{app:inf-relations}.
\else
The proof is in the full version of the paper.
\fi
\end{proof}

The proof of the above result essentially constructs a dataset such that the output of the classifier is constant around any vector $\mathbf{x}$ in the dataset. In a real-world dataset, this implies that we assume the output of the classifier to be nearly constant around any feature vector $\mathbf{x}$, thus making it hard for an SMI attack to distinguish non-members in the vicinity of members. We shall later show that this assumption holds for real-world datasets and classifiers.

\descr{Attribute Inference.} We first start with the definition of attribute inference derived from~\cite{somesh-overfit}. 

\begin{experiment}[Attribute Inference (AI)~\cite{somesh-overfit}]
\label{exp:attr-inf-somesh}
Let $\adversary$ be the adversary, let $X \leftarrow \mathcal{D}^n$ be the input dataset, and let $S$ be a subset of $[m]$ with cardinality $m'$ such that $1 \le m' < m$.
\begin{enumerate}
    \item Construct model $h_{X}$.
    \item Sample $b \sample{\$} \{0, 1\}$.
    \item If $b = 0$, sample $\mathbf{x} \sample{} \mathcal{D}$.
    \item Else if $b = 1$, sample $\mathbf{x} \sample{\$} X$.
    \item Let $\mathbf{x}^* = \phi_S(\mathbf{x})$ be a portion of $\mathbf{x}$. 
    \item $\adversary$ receives $\mathbf{x}^*$, $c(\mathbf{x})$ and oracle access to $h_{X}$.
    \item $\adversary$ announces $\mathbf{x}' \in \mathbb{D}^m$. If $\mathbf{x}' = \mathbf{x}$ output 1, else output 0.
\end{enumerate}
\end{experiment}

\begin{definition}[Attribute Inference Advantage]
\label{def:ai-adv}
The AI advantage of $\adversary$ on the classifier $h$, i.e., $\text{Adv}_{\text{AI}}(\adversary, h_X, m', n, \mathcal{D})$, is defined as
\begin{align*}
   &\quad \Pr[\text{Exp}_{\text{AI}}(\adversary, h_X, m', n, \mathcal{D}) = 1 \mid b = 1] \\
   &- \Pr[\text{Exp}_{\text{AI}}(\adversary, h_X, m', n, \mathcal{D}) = 1 \mid b = 0].
\end{align*}
\end{definition}

The above definition mirrors the one from~\cite{somesh-overfit}. However, the attribute inference covered in~\cite{somesh-overfit} is more general; it considers arbitrary background knowledge about $\mathbf{x}$, and not necessarily a portion. The version that we consider is called the model inversion attack~\cite{pharma, somesh-overfit}. \blue{We remark that the above definition is by no means the standard definition of AI.} We refer the reader to Section~\ref{sec:rw} for a discussion on other definitions of AI proposed in literature.



\descr{Inferring through the Distribution vs the Model.} Note that these definitions purposely define advantage as the difference between inferring through the distribution alone versus inferring via access to the model. For instance, one way to infer the missing features is to exploit statistical correlations between the observed features and the label. But notice that this can be done directly through knowledge of the distribution, irrespective of access to the model. The AI advantage will therefore be negligible for such a strategy. Hence, the definitions only define an AI attack as advantageous if it can infer more through the model as opposed to through statistical trends of the feature vectors. The same applies to approximate attribute inference to be defined shortly. See Section~\ref{sec:rw} for further discussion on this point. \blue{Correlations can indeed be a privacy issue if the distribution is not known to the attacker. But this definition is outside the scope of our work, where we consider the distribution to be known by the attack algorithm.}

\descr{Relationship between AI and SMI.} It is easy to see how an AI adversary can use an SMI adversary to infer attributes. Given a portion $\mathbf{x}^* = \phi_S(\mathbf{x})$, the AI adversary uses the size of $S$, i.e., $m'$, to choose an $r$ 
\iffull
according to Corollary~\ref{cor:phi}, in Appendix~\ref{app:metrics},
\else
(selection of $r$ is detailed in the full version of this paper),
\fi
%
and then runs the SMI adversary with input $r$ and each possible \emph{sibling} of the vector $\mathbf{x}$ (Even though the set $S$ is not explicitly given to the AI adversary, it is implicit from the portion). Whenever, the SMI adversary outputs 1, i.e., predicts the corresponding vector to be a member, our AI adversary outputs that vector as its guess for $\mathbf{x}$. \blue{Thus $\text{SMI} \Rightarrow \text{AI}$.} 



In the other direction, the following theorem shows that AI implies SMI, or in other words $\neg \text{SMI} \Rightarrow \neg \text{AI}$. Therefore, if an SMI adversary has negligible advantage, then we cannot hope to find an AI adversary with significant advantage. 

\begin{theorem}
\label{the:nosmi-noai}
Let $\mathcal{A}$ be an AI adversary with advantage $\delta$. Then there exists an SMI adversary $\mathcal{B}$ with advantage $\delta + \epsilon(r)$, \blue{assuming $\epsilon(r)$, the $r$-neighbor distinguishability advantage, is negligible for small $r$}.
\end{theorem}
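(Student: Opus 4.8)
The plan is to prove this by the natural reduction: turn the AI adversary $\mathcal{A}$ (which targets a fixed unknown set $S \subseteq [m]$ with $|S| = m'$) into an SMI adversary $\mathcal{B}$ for the metric $d$ and any radius $r > 0$, defined as follows. On its SMI input $\mathbf{x}$, $c(\mathbf{x})$, and oracle access to $h_X$, the adversary $\mathcal{B}$ forms the portion $\mathbf{x}^* = \phi_S(\mathbf{x})$, runs $\mathcal{A}$ on $\mathbf{x}^*$, $c(\mathbf{x})$ with the same oracle, receives a guess $\mathbf{x}' \in \mathbb{D}^m$, and outputs $b' = 1$ (``member'') if $\mathbf{x}' = \mathbf{x}$ and $b' = 0$ otherwise. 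This is well defined because $\mathcal{B}$ holds $\mathbf{x}$ (so it can run the equality test) and it can supply $\mathcal{A}$ with exactly the inputs that Experiment~\ref{exp:attr-inf-somesh} would, having $c(\mathbf{x})$ and the same model oracle. It then remains to lower-bound $\text{Adv}_{\text{SMI}}(\mathcal{B})$ in terms of $\delta = \text{Adv}_{\text{AI}}(\mathcal{A})$.

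First I would analyze the $b = 1$ branch of Experiment~\ref{exp:strong-mem-inf}. There $\mathbf{x} = \mathbf{x}_0$ with $\mathbf{x}_0 \sample{\$} X$ and $X \leftarrow \mathcal{D}^n$, so the triple $(h_X,\ \phi_S(\mathbf{x}),\ c(\mathbf{x}))$ handed to $\mathcal{A}$ is distributed exactly as in Experiment~\ref{exp:attr-inf-somesh} conditioned on its bit being $1$. Hence $\Pr[b' = 1 \mid b = 1] = \Pr[\text{Exp}_{\text{AI}}(\mathcal{A}, h_X, m', n, \mathcal{D}) = 1 \mid b = 1]$, which by Definition~\ref{def:ai-adv} equals $\delta + \Pr[\text{Exp}_{\text{AI}}(\mathcal{A}, h_X, m', n, \mathcal{D}) = 1 \mid b = 0]$.

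Next I would handle the $b = 0$ branch, where $\mathbf{x}$ is drawn from $B_d(\mathbf{x}_0, r)$ according to the distribution induced by $\mathcal{D}$, with $\mathbf{x}_0 \sample{\$} X$. Since a uniformly random element of an i.i.d.\ sample $X \leftarrow \mathcal{D}^n$ has marginal $\mathcal{D}$, the vector $\mathbf{x}$ here is exactly an $r$-neighbor of a $\mathcal{D}$-distributed center, i.e.\ it is distributed like the ``$\mathbf{x}'$'' of Definition~\ref{def:smooth-dist}, while a fresh $\mathcal{D}$-sample plays the role of ``$\mathbf{x}$''. I would then consider the distinguisher $\mathcal{M}$ that, on input a feature vector $\mathbf{v}$ (together with $\mathcal{D}$, and using $h_X$-oracle access, $S$, and $c$ as auxiliary information), computes $\mathbf{w} \leftarrow \mathcal{A}(\phi_S(\mathbf{v}), c(\mathbf{v}), h_X)$ and outputs $1$ iff $\mathbf{w} = \mathbf{v}$: on a fresh $\mathcal{D}$-sample it outputs $1$ with probability $\Pr[\text{Exp}_{\text{AI}} = 1 \mid b = 0]$, and on the induced $r$-neighbor it outputs $1$ with probability $\Pr[b' = 1 \mid b = 0]$. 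Applying the Indistinguishable Neighbor Assumption (Definition~\ref{def:smooth-dist}) to $\mathcal{M}$ and to its complement bounds the difference of these two probabilities by $\epsilon(r)$. Combining with the previous paragraph, $\text{Adv}_{\text{SMI}}(\mathcal{B}) = \Pr[b'=1\mid b=1] - \Pr[b'=1\mid b=0]$ differs from $\delta$ by at most $\epsilon(r)$; in particular there is an SMI adversary with advantage at least $\delta - \epsilon(r)$, which is the stated relationship and which yields the contrapositive $\neg\text{SMI} \Rightarrow \neg\text{AI}$ since $\epsilon(r)$ is negligible for small $r$.

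The conditional-probability bookkeeping of the two branches is routine; the step I expect to need the most care is the invocation of Definition~\ref{def:smooth-dist} in the $b = 0$ branch, which requires two mild extensions. (i) The center $\mathbf{x}_0$ of the $r$-ball is actually a member of the training multiset $X$ on which $h_X$ was trained, whereas Definition~\ref{def:smooth-dist} centers the ball at a fresh $\mathcal{D}$-sample; these agree in marginal by exchangeability of i.i.d.\ samples, and any residual dependence between $\mathbf{x}_0$ and $h_X$ is absorbed because the assumption is quantified over an \emph{arbitrary} distinguisher. (ii) Our distinguisher $\mathcal{M}$ needs the model oracle, $S$, and $c$ in addition to $(\mathbf{v}, \mathcal{D})$; this is handled by reading the ``any algorithm'' in Definition~\ref{def:smooth-dist} as permitting such auxiliary inputs. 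I would also note that the reduction is valid for every $r > 0$, so the result is informative precisely in the small-$r$ regime, and degenerates once $r$ covers $\mathrm{supp}(\mathcal{D})$, consistently with SMI collapsing to MI there.
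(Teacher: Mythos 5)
Your reduction is essentially the paper's own proof: the paper's SMI adversary likewise masks the challenge with $\phi_S$ (it picks a random singleton $S$, i.e.\ $m'=1$, where you keep the AI adversary's own set $S$), runs $\mathcal{A}$, outputs $1$ iff the returned vector equals the challenge, identifies the $b=1$ branch with the AI experiment, and bounds the $b=0$ branch's deviation (fresh $\mathcal{D}$-sample vs.\ induced $r$-neighbor of a training member) by $\epsilon(r)$ via Definition~\ref{def:smooth-dist}. Your write-up is, if anything, slightly more careful than the paper's—making the two-sided use of the assumption explicit so as to obtain the lower bound $\delta-\epsilon(r)$ that the contrapositive $\neg\text{SMI}\Rightarrow\neg\text{AI}$ actually needs, and flagging the auxiliary-input/correlation caveats that the paper silently glosses over—but the approach is the same.
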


\begin{proof}
\iffull
Consider an SMI adversary $\mathcal{B}$ which is given $\mathbf{x}$. SMI chooses a random index, or alternatively, a random index set $S$ of cardinality 1. The adversary $\mathcal{B}$ constructs $\mathbf{x}^* = \phi_S(\mathbf{x})$ and gives it to $\mathcal{A}$. Upon receiving $\mathbf{x}'$ from $\mathcal{A}$, the adversary $\mathcal{B}$ checks if $\mathbf{x}' = \mathbf{x}$. If yes, it returns $1$. Else it returns 0. The advantage of adversary $\mathcal{B}$ is
\begin{align}
    &\Pr [b' = 1 \mid b = 1] - \Pr [b' = 1 \mid b = 0] \nonumber\\
    &= \Pr[\text{Exp}_{\text{AI}}(\adversary, h_X, 1, n, \mathcal{D}) = 1 \mid b = 1] \nonumber\\
   &- \Pr[\text{Exp}^*_{\text{AI}}(\adversary, h_X, 1, n, \mathcal{D}) = 1 \mid b = 0] \label{eq:theo2},
\end{align}
where $\Pr[\text{Exp}^*_{\text{AI}}(\adversary, h_X, 1, n, \mathcal{D}) = 1 \mid b = 0]$ denotes the version of Experiment~\ref{exp:app-attr-inf}, where $\mathbf{x} \leftarrow \mathcal{D}$ in Step 3 is replaced with $\mathbf{x}_0 \sample{\$} X, \mathbf{x} \leftarrow  B_d(\mathbf{x}_0, r)$, according to the distribution induced by $\mathcal{D}$. \blue{From Eq.~\ref{eq:r-neigh} for any algorithm $\mathcal{C}$, we see that:
\begin{align*}
&\Pr[\text{Exp}_{\text{AI}}(\adversary, h_X, 1, n, \mathcal{D}) = 1 \mid b = 0] \\
&-\Pr[\text{Exp}^*_{\text{AI}}(\adversary, h_X, 1, n, \mathcal{D}) = 1 \mid b = 0]\\
&\leq \Pr[ \mathcal{C}(\mathbf{x}, \mathcal{D}) = 1] - \Pr[\mathcal{C}(\mathbf{x}', \mathcal{D}) = 1 ] \leq \epsilon(r),    
\end{align*}
where $\epsilon(r)$ is the $r$-neighbor distinguishability advantage. Thus, Eq.~\ref{eq:theo2} becomes}
\begin{align*} 
   &\Pr [b' = 1 \mid b = 1] - \Pr [b' = 1 \mid b = 0] \\
   &\leq \Pr[\text{Exp}_{\text{AI}}(\adversary, h_X, 1, n, \mathcal{D}) = 1 \mid b = 1] \\
   &- \Pr[\text{Exp}_{\text{AI}}(\adversary, h_X, 1, n, \mathcal{D}) = 1 \mid b = 0] + \epsilon(r)\\
   &= \delta + \epsilon(r).
\end{align*}
Under the indistinguishable neighbor assumption~\ref{def:smooth-dist}, we assume $\epsilon(r)$ to be negligible for small $r$. 
\else
The proof is in the full version of the paper.
\fi
\end{proof}

\blue{Theorem 2, together with the previous result, shows that $\text{SMI} \Leftrightarrow \text{AI}$, provided the $r$-neighbor distinguishability assumption holds. If $\epsilon(r)$ is large, then the advantage does not translate, as now the neighbor vector (sampled from the induced distribution) does not follow the distribution $\mathcal{D}$ expected by the AI algorithm $\mathcal{A}$ in Experiment~\ref{exp:app-attr-inf}.} This observation is mirrored by our experiments where we show that constructing an attacker that can \emph{exactly} predict the missing values of a portion of a member vector with high probability is highly unlikely.
\blue{Since this equivalence is under the $r$-neighbor distinguishability assumption, SMI is not identical to the notion of AI. This is true in particular for datasets where the assumption fails to hold. For instance, a location dataset with sparse locations. However, the assumption should hold for most real-world datasets, such as the ones considered in this paper.}  
\blue{We remark that in its raw form the definition may be overly strict for continuous attributes. To overcome this, in our experiments we apply binning, and flag any continuous attribute value as correctly identified if it falls in the correct bin (See Section~\ref{sec:ai-result} for the CIFAR dataset). Even with this judicious interpretation of the definition, our experimental results show that the adversary does not have much advantage in predicting the missing attributes.} \blue{This leads to }the definition of approximate AI, that requires the attacker to predict the missing values only ``approximately close'' to a member vector.

\begin{experiment}[Approximate Attribute Inference (AAI)]
\label{exp:app-attr-inf}
Let $\adversary$ be the adversary, let $X \leftarrow \mathcal{D}^n$ be the input dataset, let $S$ be a subset of $[m]$ with cardinality $m'$ such that $1 \le m' < m$, and let $\alpha \ge 0$ be a distance parameter.
\begin{enumerate}
   \item Construct model $h_{X}$.
    \item Sample $b \sample{\$} \{0, 1\}$.
    \item If $b = 0$, sample $\mathbf{x} \sample{} \mathcal{D}$.
    \item Else if $b = 1$, sample $\mathbf{x} \sample{\$} X$.
    \item Let $\mathbf{x}^* = \phi_S(\mathbf{x})$ be a portion of $\mathbf{x}$. 
    \item $\adversary$ receives $\mathbf{x}^*$ and oracle access to $h_{X}$.
    \item $\adversary$ announces $\mathbf{x}' \in \mathbb{D}^m$. If $d(\mathbf{x}',\mathbf{x}) \le \alpha$ output 1, else 0.
\end{enumerate}
\end{experiment}

\begin{definition}[Approx. Attribute Inference Advantage]
\label{def:aai-adv}
The AAI advantage of $\adversary$ on the classifier $h$, i.e., $\text{Adv}_{\text{AI}}(\adversary, h_X, m', n, \alpha, \mathcal{D})$, is defined as

\begin{align*}
   &\quad \Pr[\text{Exp}_{\text{AI}}(\adversary, h_X, m', n, \alpha, \mathcal{D}) = 1 \mid b = 1] \\
   &- \Pr[\text{Exp}_{\text{AI}}(\adversary, h_X, m', n, \alpha, \mathcal{D}) = 1 \mid b = 0].
\end{align*}
\end{definition}

Note that with $\alpha = 0$, Experiment~\ref{exp:attr-inf-somesh} becomes a special case of Experiment~\ref{exp:app-attr-inf}. It is easy to see that AI $\Rightarrow$ AAI, but the converse is not necessarily true. 

\blue{Depending on the distance metric, the AAI advantage definition can have different interpretations. For instance, if the distance metric is Euclidean distance, then this captures the notion of mean squared error. Similarly, the Manhattan distance metric gives the absolute error interpretation. The parameter $\alpha$ should be set carefully to avoid degenerate cases, e.g., if $\alpha$ is set too small, then an adversary whose guess is always slightly off $\alpha$ would be deemed less advantageous than an adversary with only one guess within $\alpha$ and the remaining deviating significantly from $\alpha$. For our experiments, we set $\alpha$ as the distance of a random guess from the target vector.}




\descr{Computing Advantages in Practice.} As most prior work on membership inference  uses the Area Under the Curve (AUC) of a Receiver Operating Characteristics (ROC) curve as a measure of aggregated classification performance of the MI attacker (viewed as a binary classifier), we use the same metric in our experiments in Section~\ref{sec:experiments}. 
\iffull
In Appendix~\ref{app:misc},
\else
In the full version of the paper,
\fi
we show how our advantage definitions~\ref{def:mi-adv} and \ref{def:smi-adv} are related to the AUC statistic. 
For the evaluation of AI and AAI attacks we employ the advantage metrics defined in Definitions~\ref{def:ai-adv} and~\ref{def:aai-adv}.

\section{Experimental Methodology}
\label{sec:experiments}

In this section, we describe the datasets, instances of MI and AI attacks used, and how we carry out membership and attribute inference attacks in our experiments in Sections~\ref{sec:mia-exp} and \ref{sec:aia-exp}. We first evaluate the performance of several MI attacks in terms of MI advantage (Def.~\ref{def:mi-adv}) with increasing distance of the challenge vectors from the training set (Section~\ref{sec:mia-exp}). We then evaluate the performance of AI attacks in terms of AI advantage (Def.~\ref{def:ai-adv}) which use MI attacks as a subroutine (Section~\ref{sec:ai-result}). Finally, we study the performance of the same AI attacks in the sense of approximate attribute inference (Def.~\ref{def:aai-adv}). 
These experiments demonstrate the shortcomings of MI and AI definitions and the need for our newly proposed definitions, i.e., SMI and AAI.




\subsection{Data and Machine Learning Models}

\label{sec:datasets}
We evaluate MI and AI attacks on three different datasets: (a) \emph{Location:} a social network locations check-in dataset obtained from Foursquare~\cite{locationdataset}, (b) \emph{Purchase:} a shopping transactions dataset~\cite{purchase_dataset}, and (c) \emph{CIFAR} an image dataset~\cite{krizhevsky2009learning}. These datasets have previously been used to demonstrate MI~\cite{ml-leaks, shokri-mia, jayaraman2019evaluating} and AI attacks~\cite{jayaraman2019evaluating}. The first two datasets are binary, with 467 binary features in Location and 599 in Purchase, whereas the CIFAR dataset was processed, using principal component analysis (PCA), to yield 50 continuous features normalized between $-1$ and $1$~\cite{jayaraman2019evaluating}. We applied k-means clustering to obtain class labels in both the Location and Purchase datasets. The number of classes in the Location dataset is 30 and for the Purchase dataset, we create 5 variants differing in the number of classes (2, 10, 20, 50, 100), as is done in \cite{ml-leaks}. Finally, the CIFAR dataset contains 100 class labels for the images, with an additional set of 20 labels which are a superset of the 100 classes, e.g. the label ``flowers'' is the superset of orchids, poppies, roses, sunflowers, and tulips. We call the two datasets CIFAR-100 and CIFAR-20.


We predominantly explore the neural network as our target model. However, later in Section~\ref{sec:mia-models}, we show that our observations generalize to Logistic Regression, Support Vector Machine, and Random Forest classifiers. The exact configurations of these models for each experiment are detailed in Appendix~\ref{sec:model_params}.

\subsection{MI and AI Adversaries}
\label{sec:attack-how}

We use five MI attacks from literature as examples of an MI adversary (Def.~\ref{def:mi-adv}), and three AI attacks as examples of an AI adversary (Def.~\ref{def:ai-adv}). 

\subsubsection{MI Attacks}
\label{sec:mia-how}

Our MI attacks include three black-box attacks: the shadow model based attack from Shokri et al.~\cite{shokri-mia}, the attack from Yeom et al. based on prediction loss~\cite{somesh-overfit}, and the attack from Salem et al. based on maximum prediction confidence~\cite{ml-leaks}, and two variants (local and global) of a white-box  attack from Nasr et al.~\cite{nasr2018comprehensive}. Recall that in an MI attack, the attacker is given a member or a non-member vector with optionally its true label, and is asked to infer membership.

\descr{Shadow MI~\cite{shokri-mia}.}
This attack trains a machine learning model, called an attack model, to discern membership of a given vector from the prediction output vector (confidence of every class label). This attack model leverages outputs from shadow models which are trained with a disjoint dataset to mirror the behaviour of the target model.

\descr{Loss MI~\cite{somesh-overfit}.}
This attack eliminates the high computational cost of training shadow and attack models by evaluating the prediction loss of a vector on the target model directly. This attack, in practice, may use the target model training loss as a loss threshold to determine membership.

\descr{Conf MI~\cite{ml-leaks}.}
Conf MI, short for Confidence, is even simpler than Loss MI; instead of computing the prediction loss, the attack simple uses the confidence value of the most likely label. With less information available to the attack, it performs worse than both Loss MI and Shadow MI (as we shall see in Section~\ref{sec:mia-exp}). However, it is arguably a more practical attack, requiring less information.

\descr{Local White Box (WB) and Global White Box (WB) MI~\cite{nasr2018comprehensive}.}
The three previous attacks are all black-box attacks with little to no information about the target model, and only API access to the model. An alternative form of MI attack is a white-box membership inference attack, which in a federated setting, may offer additional information for an adversary to launch an MI attack. Despite the federated setting, we suspect any observations we perform on the black-box setting should be reflected in a white-box setting. Nasr et al. attack~\cite{nasr2018comprehensive} is a standalone attack targeting federated machine learning models in a white-box setting. 
The white-box setting lends additional hidden layer information and intermediate model states from the training process to better inform the attack model. This information includes the final layer gradients, outputs and the true label, obtained from intermediate and final states of the target model. 

The federated setting consists of multiple parties, each training models independently and contributing parameters to a central server. 
The server aggregates these parameters before sending the results back to each party to replace their individual model. 
Two different attacks are tested: the \textit{Global WB MI} attack, where the attacker has server level information and attacks each of the parties individually (in the case of a Malicious MLaaS provider); and the \textit{Local WB MI} attack whereby the attacker is an external or contributing party attacking the server or MLaaS provider.


\subsubsection{Attribute Inference (AI) Attacks}
\label{sec:aia_how}

We use three AI attacks as examples of an AI adversary. All three attacks use an MI attack as a subroutine as mentioned in Section~\ref{sec:formal}. We, therefore, use the same names for them as the underlying MI attacks. Briefly, our general procedure to evaluate an AI attack is as follows. Given a portion $\mathbf{x}^* = \phi_S(\mathbf{x})$ for a set $S$ of unknown features (cf. Def.~\ref{def:portions}), we first construct all siblings of $\mathbf{x}$ (cf. Def.~\ref{def:siblings}), by trying all possible permutations of the missing attribute(s), i.e., features. We then give each sibling as input to the MI attack. From the set of siblings, the vector with the highest membership confidence from the underlying MI attack is deemed the original vector $\mathbf{x}$, and thus its attributes identified as the missing attributes.

\descr{Shadow AI.}
The basis of this attack is to use the attack model from Shadow MI~\cite{shokri-mia} for AI. While the MI version of the attack only uses the final decision (member or non-member), in the AI attack, we use the prediction confidence from the attack model to gauge which vector is most likely the original vector, and thus infer attributes.

\descr{Loss AI~\cite{somesh-overfit}.}
This attack follows the original proposal from Yeom et al. to use the training loss as the deciding factor for attribute inference. Given all siblings, the vector that achieves the prediction loss (from the target model) closest to the training loss, is flagged as the original vector.

\descr{Conf AI~\cite{zhao2019inferring}.}
Recall that Conf MI~\cite{ml-leaks} uses the single largest prediction confidence of the vector to deduce its membership. We repeat the same process, and flag the highest confidence vector (prediction confidence from the target model) from all siblings as the original vector.

\descr{Note.} Although both Local WB and Global WB MI attacks can also be used to perform AI, we opted against, as they are computationally more demanding than other attacks. Fortunately, as we shall show, Local WB and Global WB MI attacks show similar trends as the other 3 MI attacks we use as subroutines for AI.

\subsection{Attack Methodology}
\label{sec:infer-method}

Prior to inference, we must first train a target model on a given dataset. To do so we split the dataset into training and testing sets. We describe the exact training/testing data split, the architecture of the neural network, and other hyper-parameters in Appendix~\ref{sec:model_params}. These models have been tuned to replicate models observed in prior works. The training set is used to train the target model, and the prediction accuracy of the target model is evaluated on the testing set. We tune our target models to produce prediction accuracies comparable to \cite{shokri-mia} (exact attack accuracy values are reported in Table~\ref{tab:model_traintest} in Appendix~\ref{sec:model_params}). From the training and testing sets we then sample 1000 vectors each to serve as our member and non-member sets.
With the target model prepared, we take the following steps to launch MI and AI attacks.

\descr{MI.} For MI, we obtain AUCs by evaluating
the member and non-member subsets with either the MI attack model (for Shadow, Local WB and Global WB MI), or the target model (for Loss and Conf MI) for a membership confidence score.


\descr{AI.} For AI, we take our set of member and non-members, and then use the top most informative features according to the Minimal Redundancy Maximal Relevance (mRMR) criterion~\cite{peng-mrmr}. 
Intuitively, the informative features are likely to have more influence on the classifier's output. This also follows previous work~\cite{wu-influence, somesh-overfit} where it is shown that informative features, i.e., those with more influence, have a positive impact on attribute inference, albeit the results apply for Boolean and binary variables. Thus, the use of most informative features increases the likelihood of an AI attack. The set of most informative features forms the set $S$ of unknown features. For each vector, we then create its portion based on $S$, and generate all siblings of the vector, only one of which is the original vector with the target attribute values.  
With this set of siblings, for each member and non-member vector, we perform an MI attack. This produces a measure of membership confidence (either as attack model probability, prediction loss, or prediction confidence, c.f. Section~\ref{sec:aia_how}). From this measure, the sibling with the highest  membership confidence is regarded as the correct vector, and consequently containing the correct missing attributes. For AI, we regard the attack as a success when the recovered sibling is exactly equal to the original vector (Exp.~\ref{exp:attr-inf-somesh}). For AAI, we regard the attack a success when the recovered sibling is within a given $\alpha$ distance away from the correct attributes (Exp.~\ref{exp:app-attr-inf}).



\section{Membership Inference}
\label{sec:mia-exp}
We first show results from MI attacks highlighting the need for our definition of strong membership inference (SMI) (Exp.~\ref{exp:strong-mem-inf}). Two key findings are:
\begin{itemize}[noitemsep, topsep=0pt, leftmargin=*]
    \item MI attacks perform better if the non-members are at a greater distance from the training dataset. This  observation is crucial for attribute inference, as we shall see in the next section.
    \item MI attack performance is not uniform across all classes in the dataset. In fact, it is inversely related to the dominance of the class, i.e., the decision region of the class (Def.~\ref{eq:dr}).
\end{itemize}





\subsection{MI Attacks on Neural Networks}
\label{sec:mia-5adversaries}

We first inspect the performance of the five MI attacks (See Section~\ref{sec:mia-how}) on members and non-member vectors from the original dataset as a function of their distance from the training dataset (Def.~\ref{def:dist}). We observe that the vectors in the original dataset are quite far away from each other, consequently lacking MI performance information at small distances. Thus we follow this analysis with MI performance on synthetically generated vectors, to illustrate a complete picture of MI performance as a function of distance from the training dataset (Section~\ref{sec:mia-synthetic}). We also explore the relationship between MI attack performance and the decision region of a class (Section~\ref{sec:mia-labels}). 



\subsubsection{MI Performance on the Original Dataset as a Function of Distance}
\label{sec:mia-original}

   \begin{figure*}[!ht]
     \centering
     \subfloat[Conf MI\label{fig:nn_salem_nmvec_auc}]{%
       \includegraphics[width=0.32\textwidth]{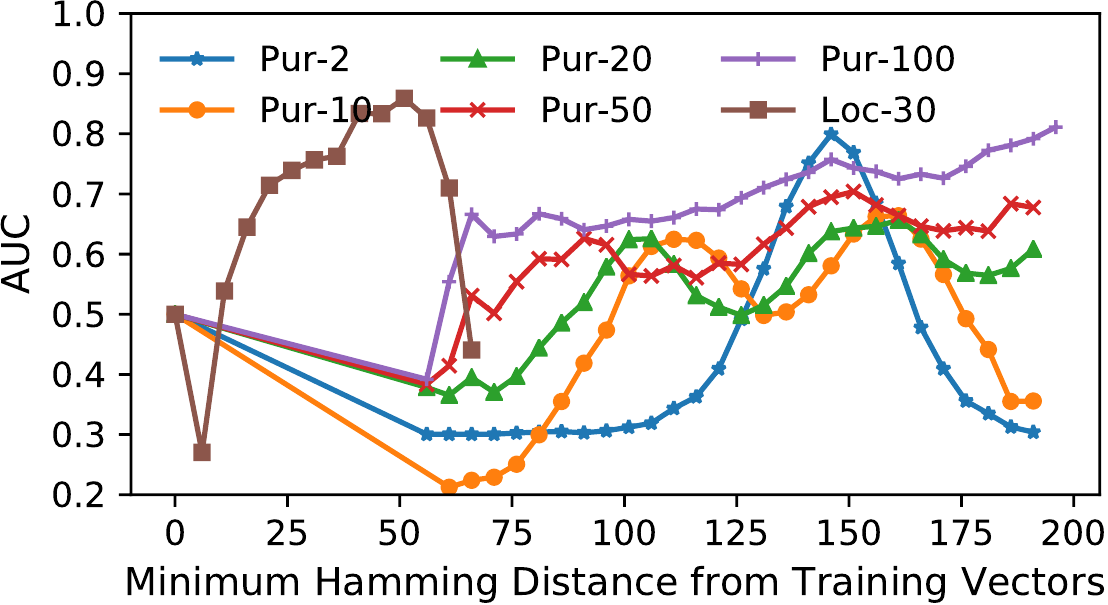}
     } \hfill
     \subfloat[Loss MI\label{fig:nn_yeom_nmvec_auc}]{%
       \includegraphics[width=0.32\textwidth]{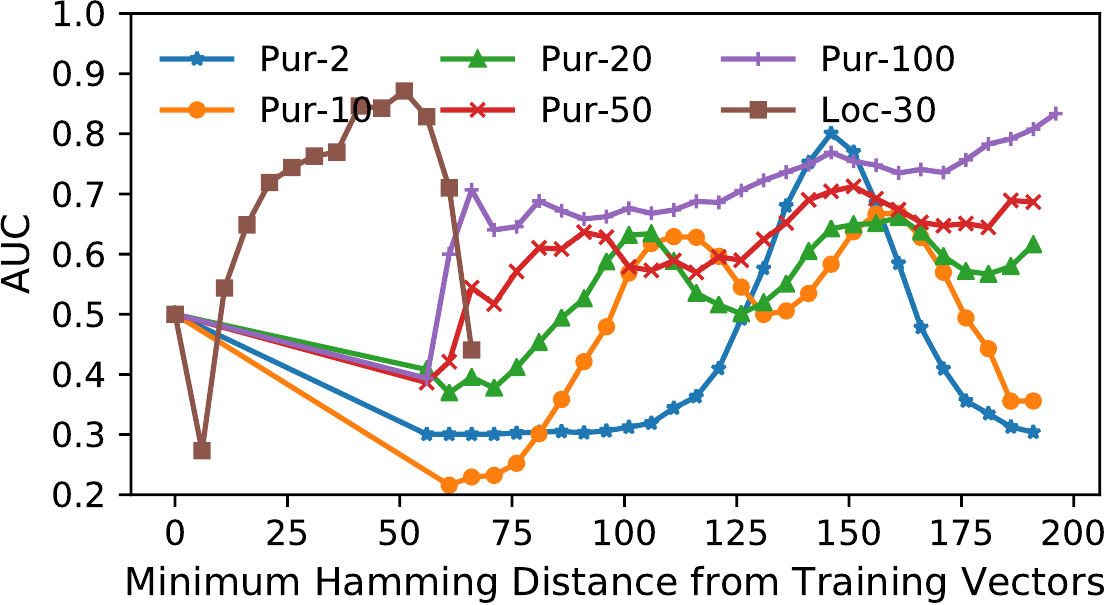}
     } \hfill
     \subfloat[Shadow MI\label{fig:nn_shokri_nmvec_auc}]{%
       \includegraphics[width=0.32\textwidth]{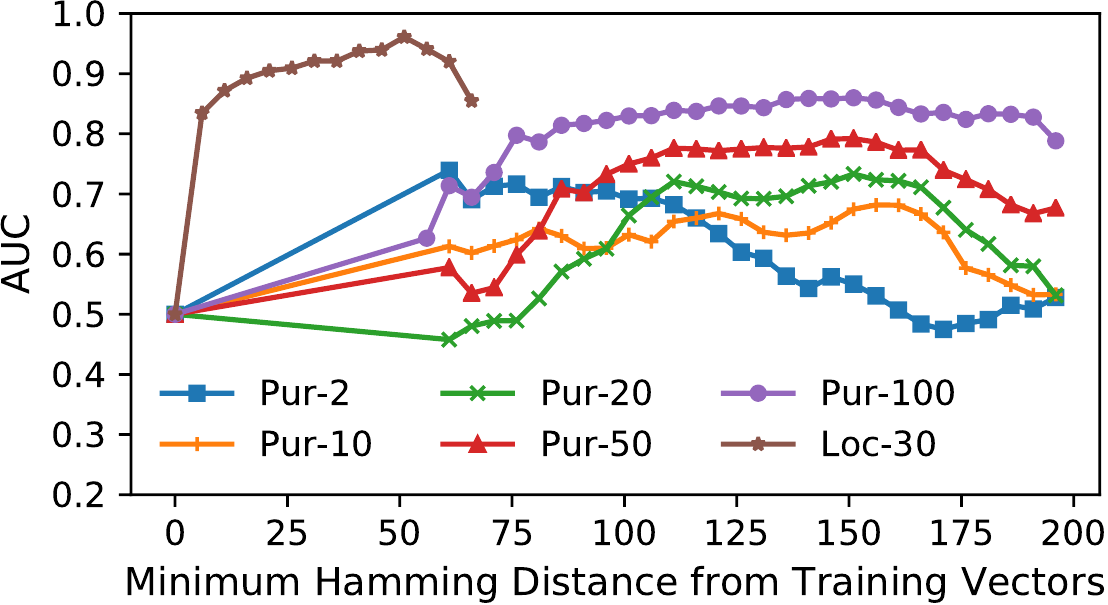}
     }
     \\ \vspace{-3mm}
     \subfloat[Local WB MI\label{fig:nn_local_nasr_nmvec_auc}]{%
       \includegraphics[width=0.32\textwidth]{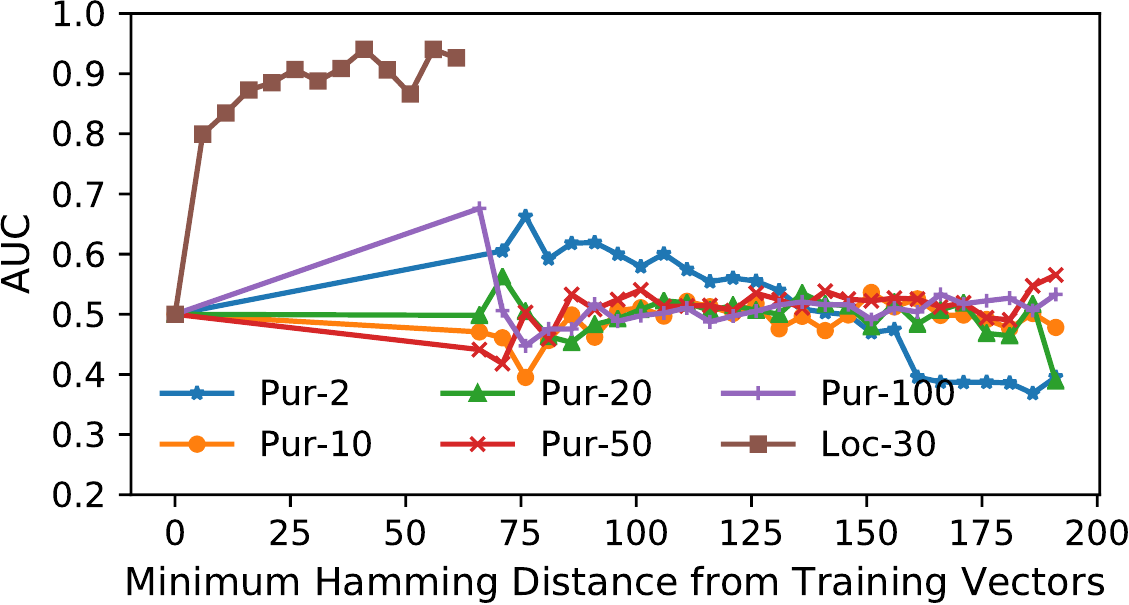}
     } \hfill
     \subfloat[Global WB MI\label{fig:nn_global_nasr_nmvec_auc}]{%
       \includegraphics[width=0.32\textwidth]{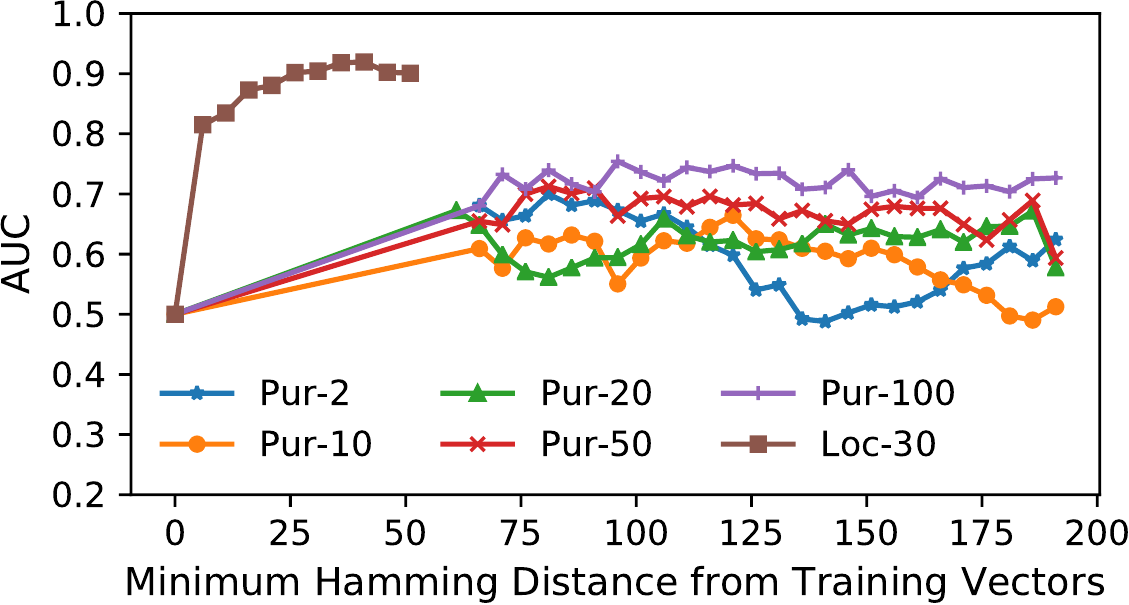}
     } \hfill
     \subfloat[CIFAR-100\label{fig:nn_cif100_nmvec_auc}]{%
       \includegraphics[width=0.31\textwidth]{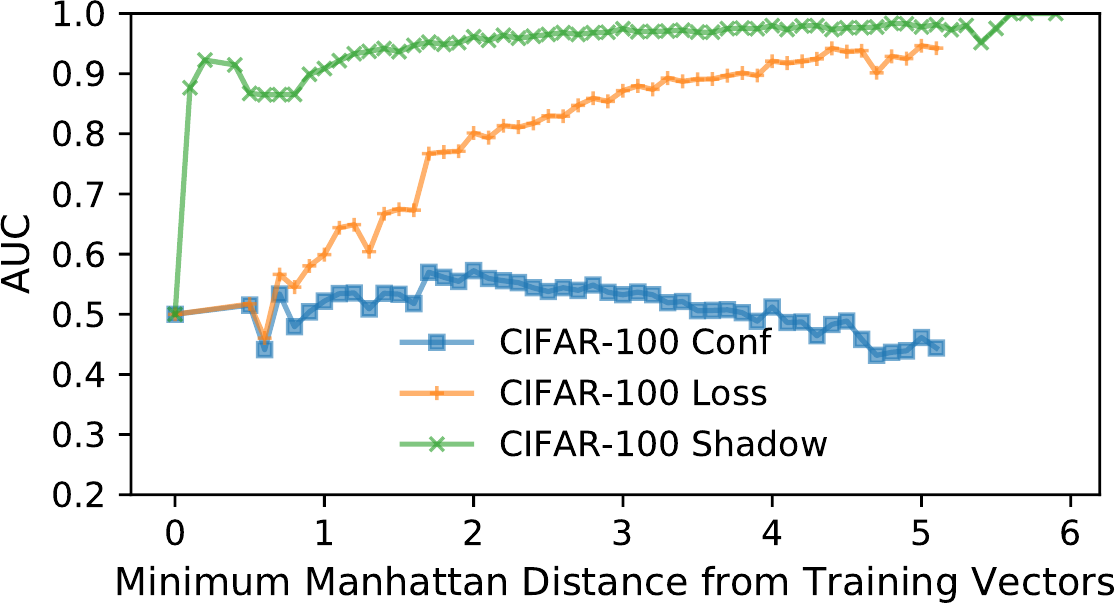}
     }
     \caption{Increasing AUC of various MI attacks with increasing Hamming distance of original non-members from the training dataset on target models. Subplot (f) compares the difference in attack AUC between MI attacks on CIFAR-100 (CIFAR-20 can be found in Appendix~\ref{sec:appendix-cifar20}).}
     \label{fig:mia_nmvec_auc}
     \vspace{-2mm}
   \end{figure*}

After training the target model, we compute the distance of each non-member vector from the training set. Recall from Section~\ref{sec:formal}, we use Hamming distance $d_H$ for Location and Purchase datasets (which are binary), and Manhattan distance $d_M$ for the continuous (normalized) CIFAR datasets. The vectors are then grouped according to their distance from the training dataset (the distance is 0 for members). We then calculate AUC for each distance by taking the membership score of each vector in this distance group as the negative class, and all member vectors as the positive class. This test is repeated 50 times (10 for the WB MI attacks due to computational resource limitations), and the AUC is computed on the aggregation of all confidence values (Fig.~\ref{fig:mia_nmvec_auc}).

\descr{Results.} From Figs.~\ref{fig:nn_salem_nmvec_auc} to \ref{fig:nn_global_nasr_nmvec_auc}, we observe that for the Location dataset the AUC improves as the distance of non-members from the training dataset increases in all five MI attacks, with the AUC being closer to random guess (0.5) for non-members closest to the training dataset. From the same figures, we can see that this trend is less obvious for the Purchase datasets. This is mainly because non-members in the Purchase datasets are at a greater distance from the training dataset. The same observation can be made for CIFAR-100 in Fig.~\ref{fig:nn_cif100_nmvec_auc} (results for CIFAR-20 are in Appendix~\ref{sec:appendix-cifar20}).
This gives a first indication that SMI (Exp.~\ref{exp:strong-mem-inf}) is less successful than MI (Exp.~\ref{exp:mem-inf-somesh}).



An issue with the results in Figure~\ref{fig:mia_nmvec_auc} is that there is a lack of vectors close to and farthest away from the training datasets. This is evident from the distribution of distances displayed in Fig.~\ref{fig:nmvec_hist_both}. 
Observe that there is little data available when we attempt to inspect AUC for distances close to the original dataset. As the non-members in the original Purchase datasets do not provide a full picture of how the MI performance behaves across all distances, and hence MI performance, in the next section, we generate synthetic vectors allowing us to control the distance (Hamming or Manhattan) from the training dataset providing a more complete picture.

A few other observations are worth highlighting:

\begin{itemize}[noitemsep, topsep=0pt, leftmargin=*]
    \item Consistent with what has been previously reported on MI attacks, the attack accuracy improves on target models with a greater number of classes~\cite{ml-leaks, shokri-mia}. Higher number of classes is also linked to a higher degree of overfitness (Table~\ref{tab:model_traintest}). 
    \item The AUC performance of the Loss and Conf MI attacks is almost identical. Recall that Conf MI uses the maximum confidence value of the prediction, while Loss MI uses the prediction loss. Note that the prediction loss for a classification model is simply the loss between the confidence of the true label and $1$. 
    Given that a (good) target model is likely to predict the correct label of the vector, it follows that, most of the times, the maximum prediction confidence (as used in Conf MI) will be equal to the confidence used to compute the loss in Loss MI.
    \item Some of the AUCs exhibit peaks; an increase as the distance from the training dataset increases followed by a decrease. This is due to the decision regions (DR) learnt by the classifiers. We shall elaborate on this in Sections~\ref{sec:mia-synthetic} and \ref{sec:mia-labels}.  
    \item Another peculiar observation is that some of the AUCs drop below 0.5, meaning that the strategy employed by the corresponding MI attack predicts flips and applies more to non-members than to members. The potential reason behind this is the same as the observation above which we shall explain in Section~\ref{sec:mia-labels}.
\end{itemize}



   \begin{figure}[t]
   \centering
     \subfloat[Hamming distance\label{fig:nmvec_hist}]{%
       \includegraphics[width=0.48\columnwidth]{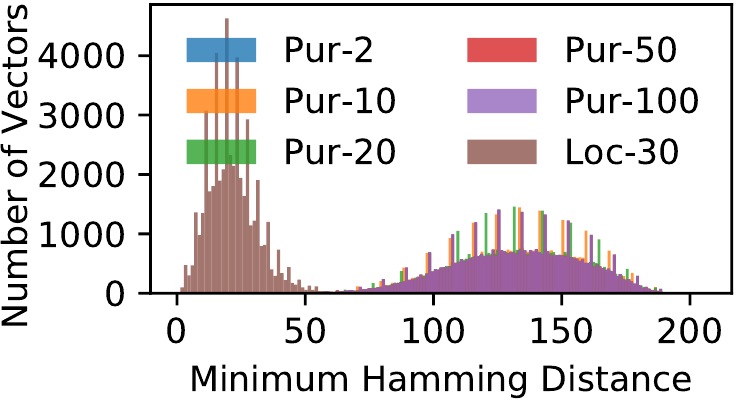}
     }
     \subfloat[Manhattan distance\label{fig:nmvec_hist_cifar}]{%
       \includegraphics[width=0.48\columnwidth]{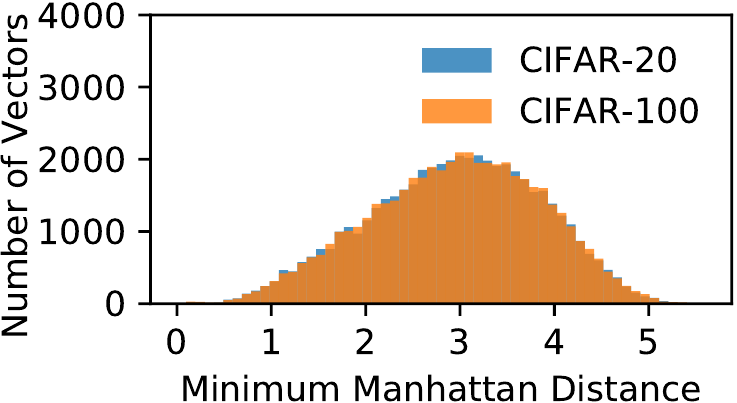}
     }
     \caption{Histogram of distances of non-members from members in our training datasets. This data distribution is consistent across all attacks.}
     \label{fig:nmvec_hist_both}
    \vspace{-3mm}
   \end{figure}

\begin{observation}
\label{obs:mi-dist}
In the MI attacks reported in literature, the distance of non-members from the training dataset is large. In general, an MI attack is more likely to accurately predict a non-member, the greater its distance from the training dataset. 
\end{observation}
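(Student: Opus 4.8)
The observation has two parts, which I would establish by different means. The first part --- that in the benchmark setups of the literature the non-members sit far from the training set --- is not really a mathematical claim but a property of the specific datasets (Location, Purchase, CIFAR) and sampling procedures used in \cite{shokri-mia, ml-leaks, nasr2018comprehensive, somesh-overfit}. I would establish it directly by computing, for each dataset, the distribution of the nearest-neighbor distance $\min_{\mathbf{x}' \in X} d(\mathbf{x}, \mathbf{x}')$ (cf.\ Definition~\ref{def:dist}) over the held-out non-member pool, using the conserving metric appropriate to the data ($d_H$ for the binary datasets, $d_M$ for CIFAR). The histograms of Fig.~\ref{fig:nmvec_hist_both} are exactly this measurement, and they show that almost no non-member lies within a few units of $X$; that suffices for the first sentence.

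For the second, general statement I would argue through the loss/confidence signal that every MI attack considered here ultimately thresholds on. Fix a non-member challenge $\mathbf{x}$ at distance $r$ from $X$, and let $\mathbf{x}_0$ be its nearest neighbor in $X$. Loss MI and Conf MI decide membership from $L(h_X(\mathbf{x}), c(\mathbf{x}))$ (equivalently the top confidence), and Shadow MI and the white-box attacks learn a decision rule that, in practice, tracks the same quantity. The plan is: (i) for small $r$, invoke the Indistinguishable Neighbor Assumption (Definition~\ref{def:smooth-dist}) --- and, more directly, the implied regularity of $h_X$ around points of $X$ --- to conclude that $h_X(\mathbf{x})$ is within $O(\epsilon(r))$ of $h_X(\mathbf{x}_0)$; since the target model is (over)fit, $L(h_X(\mathbf{x}_0), c(\mathbf{x}_0))$ is small, so the attack sees a ``member-like'' value and is near chance on $\mathbf{x}$; (ii) as $r$ grows, the conditional expectation of $L(h_X(\mathbf{x}), c(\mathbf{x}))$ over non-members at distance $r$ drifts from $L_{\text{tr}}(h)$ toward $L_{\text{test}}(h)$, and the gap $\text{err}(h)$ of Eq.~\ref{eq:gen-err} is precisely the separation a threshold rule exploits; (iii) layer on the decision-region picture of Definition~\ref{def:dr}: a vector far from $X$ is increasingly likely to fall in a decision region other than its would-be true class, which drives the true-label confidence down and reinforces the effect. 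Putting (i)--(iii) together gives that $\Pr[b' = 0 \mid b = 0, \text{dist} = r]$ grows with $r$, starting near the attack's false-negative rate on members and tending to $1$.

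The main obstacle is that this monotonicity is genuinely false in full generality: the AUC curves of Fig.~\ref{fig:mia_nmvec_auc} exhibit peaks and even dips below $0.5$, caused by the shapes of the learned decision regions (a distant vector can land back in the ``correct'' region). Hence a clean, assumption-free theorem of the form ``advantage is increasing in $r$'' cannot be proved, which is why the statement is recorded as an Observation rather than a theorem. The rigorous content that \emph{can} be extracted in this direction is already isolated as Theorems~\ref{the:mi-smi} and~\ref{the:nosmi-noai} (separating MI from SMI, and showing SMI is necessary for AI); the heuristic steps (i)--(iii) above are then confirmed quantitatively by the experiments of Section~\ref{sec:mia-exp}, which I would treat as the actual justification of the Observation, with the loss-drift argument serving only as the explanatory mechanism.
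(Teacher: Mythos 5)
Your proposal matches the paper's own treatment: Observation~\ref{obs:mi-dist} is justified empirically, with the nearest-neighbor distance histograms (Fig.~\ref{fig:nmvec_hist_both}) supporting the first sentence and the per-distance AUC curves on original and synthetic non-members (Figs.~\ref{fig:mia_nmvec_auc} and~\ref{fig:mia_genvec_auc}) supporting the second, exactly as you propose. Your caveat that strict monotonicity fails (peaks and sub-$0.5$ AUCs from decision-region effects) and your framing of the loss/confidence drift as explanatory mechanism rather than proof also mirror the paper's own discussion in Sections~\ref{sec:mia-synthetic} and~\ref{sec:mia-labels}, so the proposal is correct and essentially the same approach.
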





\subsubsection{MI Performance on Synthetic Non-Members as a Function of Distance}
\label{sec:mia-synthetic}

Ideally, synthetic vectors should follow the original data distribution. Unfortunately, this would not yield vectors close to the training dataset as can be seen from Figure~\ref{fig:nmvec_hist_both}. To circumvent this, we take existing vectors and create synthetic vectors by flipping or perturbing some of the features. This creates synthetic vectors \blue{that are deliberately off-manifold, but still close to a training vector, where the majority of unaltered features still follow the original data distribution}, while allowing us to control distance from the training dataset.

To generate synthetic vectors for the binary datasets (Location and Purchase), we (a) randomly select a member of the training set, (b) randomly select features to invert, (c) and vary the number of features and generate $5$ non-members for each distance group, ranging from Hamming distance $1$ to, $467$ for Location, and $599$ for Purchase.
For CIFAR datasets, we define Manhattan distance groups at increments of $0.05$ from the training dataset, starting from $0.05$ to $5$. We then produce non-members by randomly selecting features and adding additive perturbations to the feature values of the original vector. The process is repeated $5$ times for each Manhattan distance group. The entire process is repeated for all selected 1000 member vectors for each dataset. The distance to the training dataset is recomputed for all non-members, to cater for the event that the nearest neighbor of a non-member in the training dataset has changed. 
The vectors thus generated are non-members, with the same label as the original member, unless, by chance, any of them collides with a member, in which case we discard it. We also ensure that the nearest neighbor in the dataset of the newly generated vector is of the same label as the base member vector, if not, this generated vector is discarded.


\descr{Results.} The AUCs of the five MI attacks are displayed in Fig.~\ref{fig:mia_genvec_auc}. For all five attacks, we observe that the AUC is close to 0.5 for vectors close to the training dataset, and starts improving as the distance from training dataset increases. It is also evident that the higher the number of classes, the steeper the improvement in AUC as the Hamming distance increases for the Location and Purchase datasets. 
This is more obvious through the magnified Fig.~\ref{fig:mia_genvec_auc_zoom}, where we show AUC of the Conf MI attack on the Location, Purchase and CIFAR datasets at smaller distances from the datasets. The AUC is below 0.6 for Hamming distances of less than $5$ and Manhattan distance of less than $0.2$. This implies that the MI attack is not successful enough in the stronger sense, i.e., in the sense of SMI (Def.~\ref{def:smi-adv}). This has implications for attribute inference, as we shall see in Section~\ref{sec:aia-exp}.

On datasets with higher number of classes, the AUCs of Loss MI (Fig.~\ref{fig:nn_yeom_genvec_auc}), Local WB (Fig.~\ref{fig:nasrloc_genvec_auc}) and Global WB (Fig.~\ref{fig:nasrglob_genvec_auc}) MI, show little change after a certain distance, even if the distance of non-members from the training dataset increases. On the other hand, on the Purchase datasets, for smaller number of classes (2, 10 and 20), Conf (Fig.~\ref{fig:nn_salem_genvec_auc}), Loss (Fig.~\ref{fig:nn_yeom_genvec_auc}) and Shadow (Fig.~\ref{fig:nn_shokri_genvec_auc}) MI attacks observe an increase in AUC followed by a decrease. For the 10 and 20 class variants, we see a second incline in the AUC performance of Shadow MI around a Hamming distance of 250. The reason for this is that at certain distances a non-member vector $\mathbf{x}'$ with a class label $j$, might be in the decision region of another class, even when the nearest neighbor of $\mathbf{x}'$ in the dataset has the class label $j$. We elaborate this in the next section. {Interestingly, in Fig.~\ref{fig:nn_cif100_genvec_auc}, the AUC curves of Conf and Loss MI diverge as the Manhattan distance from the training dataset grows greater than 0.7-0.8. This is because at larger Manhattan distance, the target model starts giving incorrect label predictions. The Loss MI attack detects this (as it computes loss with the predicted confidence). On the other hand, Conf MI only uses the highest confidence. It is therefore unable to detect this, showing worse performance. Finally, we note that a few of the AUC lines are ragged, especially at distances furthest away from the datasets. This is exhibited by attack model based MI attacks (Shadow, Local and Global WB). This is because the underlying attack models have less exposure to vectors at large distances as a result of the data distribution (c.f. Fig.~\ref{fig:nmvec_hist}, corresponding to distances where the AUC lines becomes ragged). The AUC curves of Loss and Conf MI are smooth throughout.}



   \begin{figure*}[!ht]
   \centering
     \subfloat[Conf MI\label{fig:nn_salem_genvec_auc}]{%
       \includegraphics[width=0.32\textwidth]{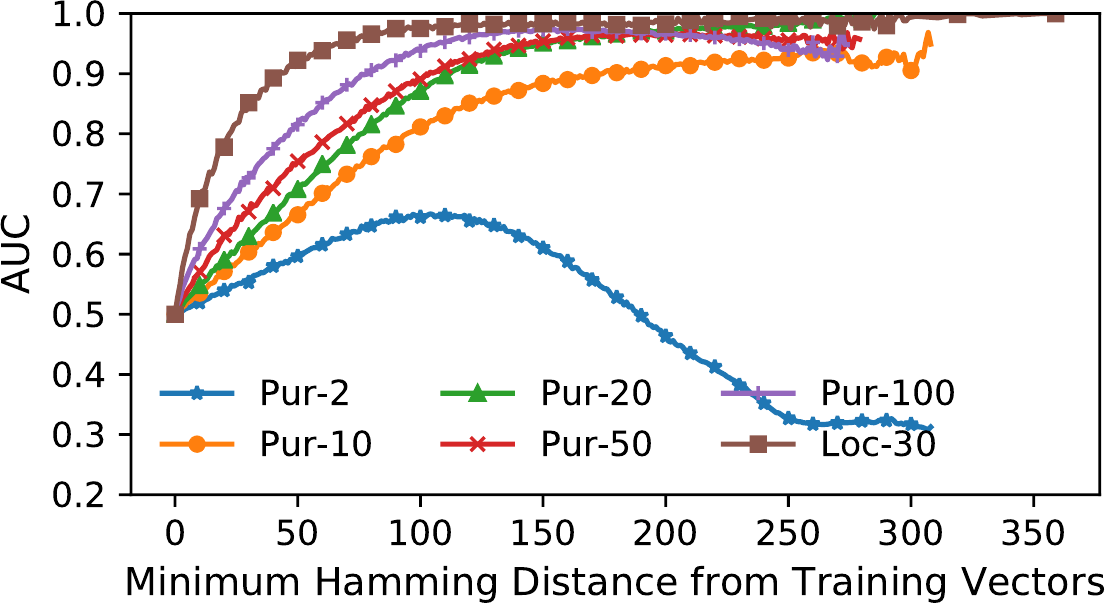}
     } \hfill
     \subfloat[Loss MI\label{fig:nn_yeom_genvec_auc}]{%
       \includegraphics[width=0.32\textwidth]{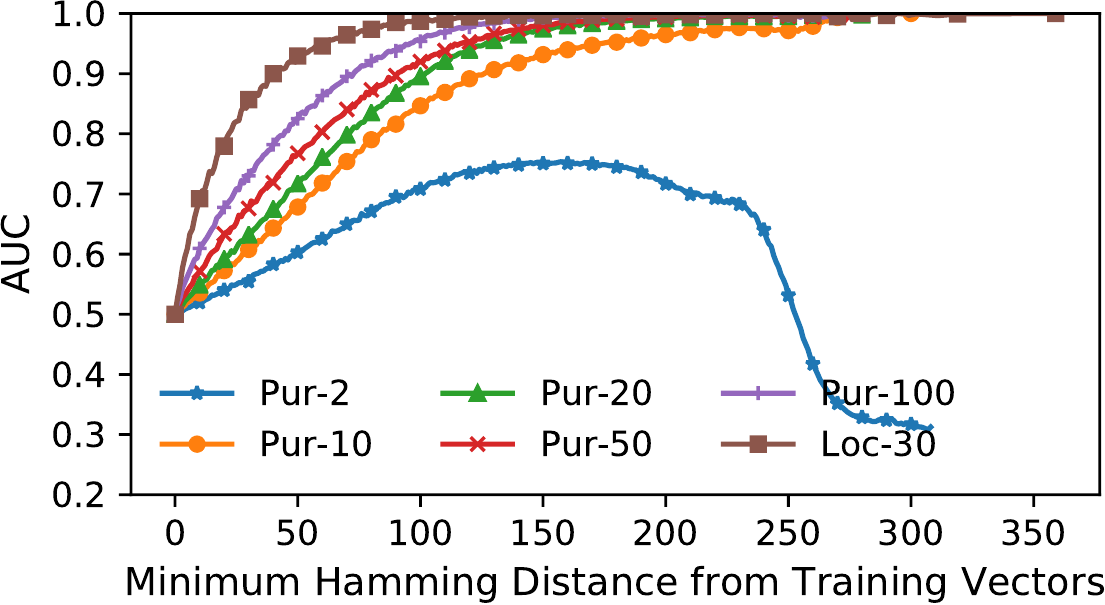}
     } \hfill
     \subfloat[Shadow MI\label{fig:nn_shokri_genvec_auc}]{%
       \includegraphics[width=0.32\textwidth]{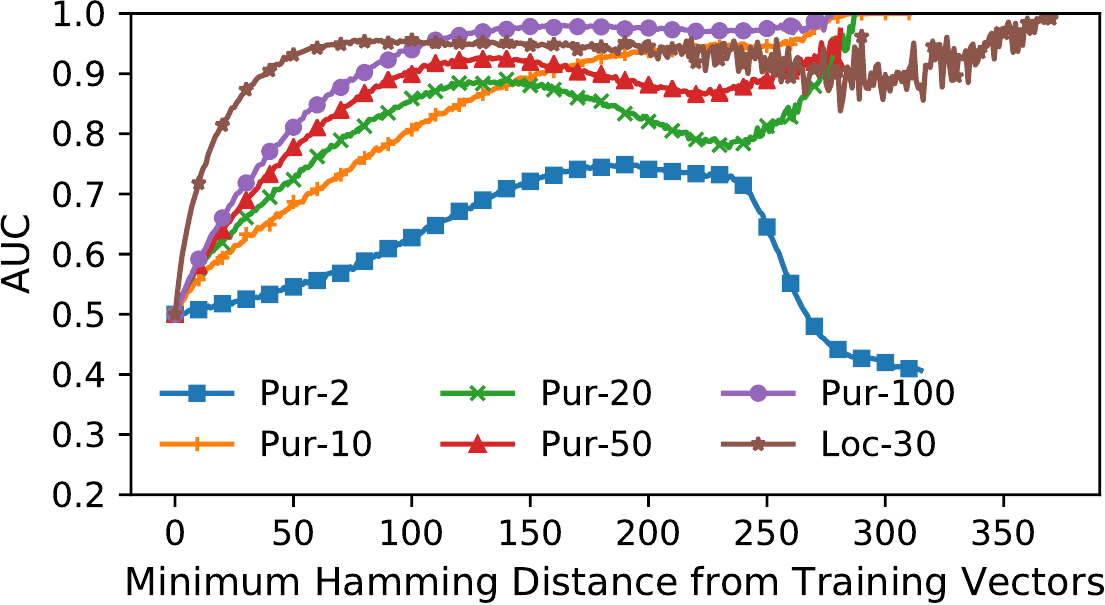}
     }\\ \vspace{-3mm}
     \subfloat[Local WB MI\label{fig:nasrloc_genvec_auc}]{%
       \includegraphics[width=0.32\textwidth]{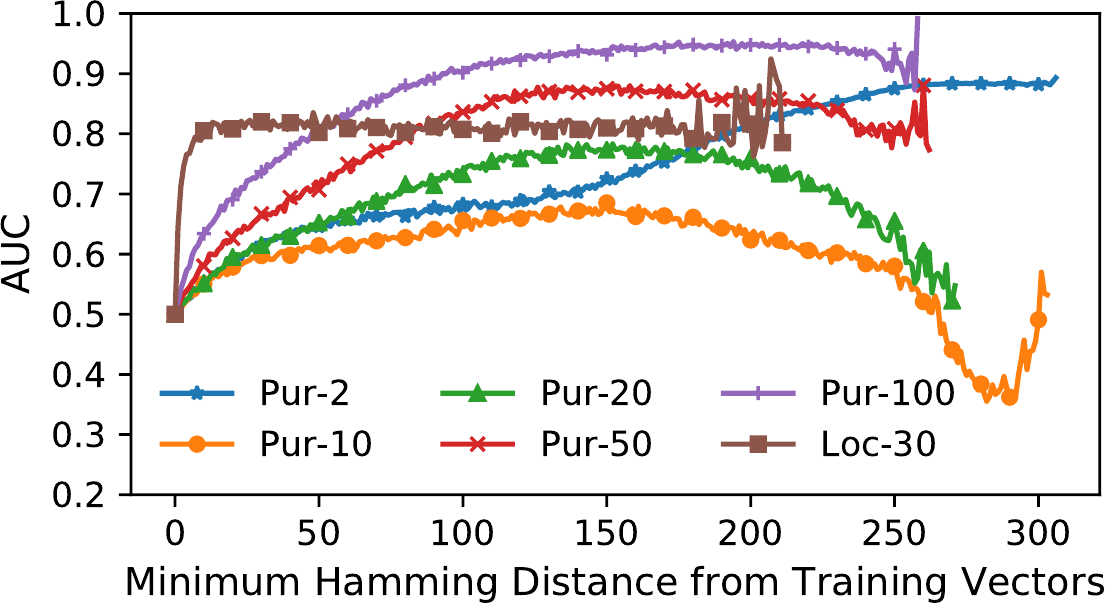}
     } \hfill
     \subfloat[Global WB MI\label{fig:nasrglob_genvec_auc}]{%
       \includegraphics[width=0.32\textwidth]{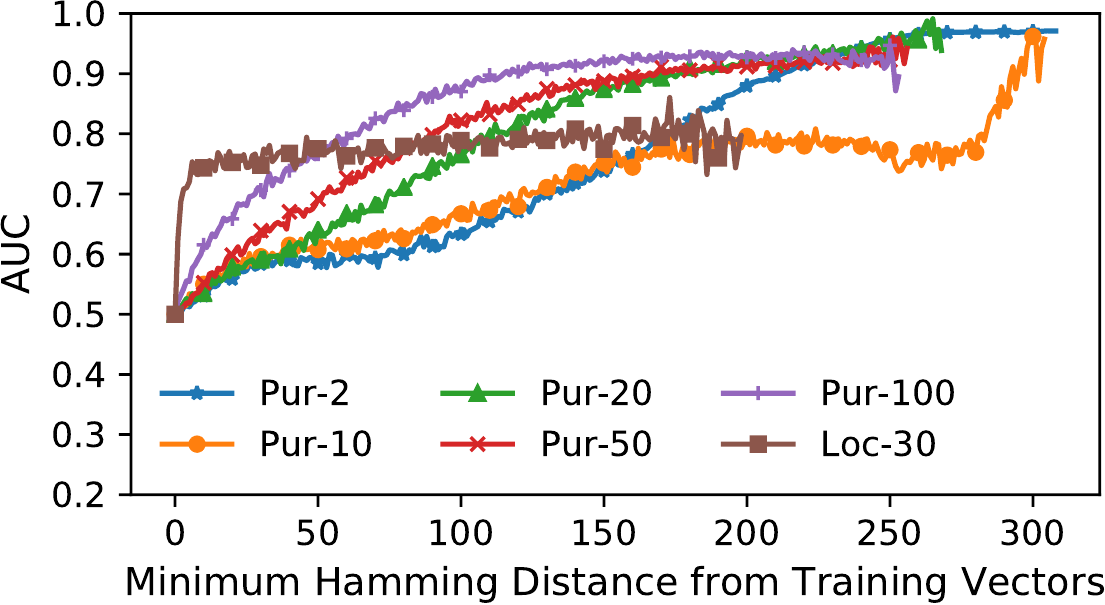}
     } \hfill
     \subfloat[CIFAR-100\label{fig:nn_cif100_genvec_auc}]{%
       \includegraphics[width=0.31\textwidth]{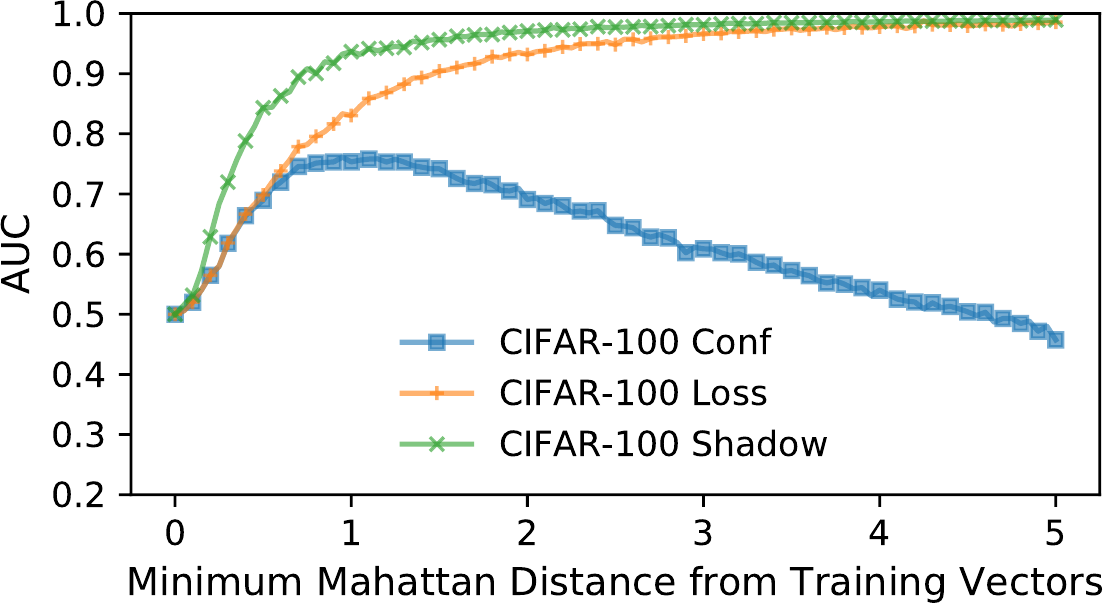}
     }
     \caption{Increasing AUC of various MI attacks with increasing Hamming distance of synthetic non-members from the training dataset on target models. (f) compares the difference in attack AUC between MI attacks on CIFAR-100 (CIFAR-20 can be found in Appendix~\ref{sec:appendix-cifar20}).}
     \label{fig:mia_genvec_auc}
   \end{figure*}

\begin{observation}
\label{obs:smi}
The existing success of MI is a consequence of most non-member vectors being very different to members in terms of distance. For non-member vectors very close to members, the MI attacks perform similar to a random guess (0.5 AUC), and hence fail in the sense of SMI. Thus, the incumbent definition of MI does not capture the behavior of an MI adversary for non-members at distances close to the training data, i.e., SMI, which is essential for launching attribute inference attacks (Theorem~\ref{the:nosmi-noai}). 
\end{observation}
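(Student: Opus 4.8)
The plan is to establish Observation~\ref{obs:smi} in two halves: an empirical half showing that every MI attack considered degrades to a coin flip against non-members at tiny distance from the training set, and a formal half turning that statement into the claimed failure of SMI and hence of AI. First I would make the claim precise. For a fixed target model $h_X$ and a fixed MI attack viewed as a membership-confidence scorer, define, for each distance $r$, the AUC of that scorer with the member set as the positive class and the set of non-members at distance $r$ as the negative class; call it $\mathrm{AUC}(r)$. The content of the observation is that (i) $\mathrm{AUC}(r)$ is near $1/2$ for small $r$ and increasing in $r$, and (ii) $\mathrm{AUC}(r)\approx 1/2$ translates into $\mathrm{Adv}_{\mathrm{SMI}}(\mathcal{A},h,r,n,\mathcal{D})\approx 0$, which by Theorem~\ref{the:nosmi-noai} forecloses any AI adversary with non-negligible advantage.

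For part (i) the obstacle is that real datasets contain essentially no non-members close to the training set (Fig.~\ref{fig:nmvec_hist_both}), so the small-$r$ regime is not directly observable. I would handle this with two experiments. On the raw non-members, bin by distance to the nearest training point and plot $\mathrm{AUC}(r)$ for all five attacks over all datasets and models, exhibiting the monotone-increasing trend and the near-$1/2$ values at the smallest observable distances (Fig.~\ref{fig:mia_nmvec_auc}). Then, to reach $r \to 0$, synthesize non-members by perturbing a random training vector in a controlled number of coordinates (bit flips for the binary datasets, bounded additive noise for CIFAR), discarding collisions with members and any perturbed vector whose nearest training neighbor changed class, and recompute $\mathrm{AUC}(r)$ per bin. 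The key fact to verify --- and the real crux --- is that $\mathrm{AUC}(r)\to 1/2$ for every attack and every target model, \emph{including the most overfit ones}: this is exactly the statement that the overfitting which powers ordinary MI does not power SMI.

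For part (ii) I would invoke the AUC--advantage relation recorded in the appendix: a scorer with $\mathrm{AUC}(r)=1/2+o(1)$ has $\mathrm{Adv}_{\mathrm{SMI}}=o(1)$; then $\mathrm{Adv}_{\mathrm{AI}}\le \mathrm{Adv}_{\mathrm{SMI}}+\epsilon(r)$ by Theorem~\ref{the:nosmi-noai}, and $\epsilon(r)$ is negligible for small $r$ under the Indistinguishable Neighbor Assumption (Def.~\ref{def:smooth-dist}), giving negligible AI advantage as the observation claims. I would also cross-check against Theorem~\ref{the:mi-smi}: that theorem builds a toy instance where $h$ is constant on small balls around training points, so MI succeeds while SMI has zero advantage; the experiments are meant to show real models behave essentially this way at small $r$, which is what makes the separation more than a pathology.

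The main obstacle is making the synthetic-vector experiment a faithful stand-in for the small-$r$ SMI game. One must argue that perturbing a handful of coordinates of a genuine training point keeps the result in (or extremely near) the support of $\mathcal{D}$; otherwise an MI attack could score well simply by flagging ``off-distribution'', which would be an artifact rather than a membership signal and would not really contradict the observation. This is precisely the role of the Indistinguishable Neighbor Assumption, which I would support independently (e.g., training a GAN on Purchase and checking it cannot separate a vector from its few-coordinate perturbations), so that the near-$1/2$ AUC at small $r$ genuinely reflects inability to tell a member from a nearby member rather than inability to manufacture in-distribution neighbors.
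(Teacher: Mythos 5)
Your proposal mirrors the paper's own justification of this observation: the same two-stage empirical argument (AUC binned by distance on original non-members, then synthetic non-members built by controlled bit-flips/perturbations with collision and nearest-neighbor-label filtering, showing $\mathrm{AUC}(r)\to 0.5$ at small $r$ for all five attacks), the same use of the AUC--advantage correspondence from Appendix~\ref{app:misc} to translate this into vanishing SMI advantage, the same appeal to Theorem~\ref{the:nosmi-noai} together with the Indistinguishable Neighbor Assumption (validated via a GAN on Purchase, as in Appendix~\ref{app:valid}) to conclude infeasibility of AI. This is essentially the paper's route, so no further comparison is needed.
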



\subsubsection{MI performance on Synthetic Non-Members as a Function of Class Label and Distance}
\label{sec:mia-labels}

   \begin{figure*}[t]
   \centering
     \subfloat[Conf MI\label{fig:nn_salem_label_auc}]{%
       \includegraphics[width=0.195\textwidth]{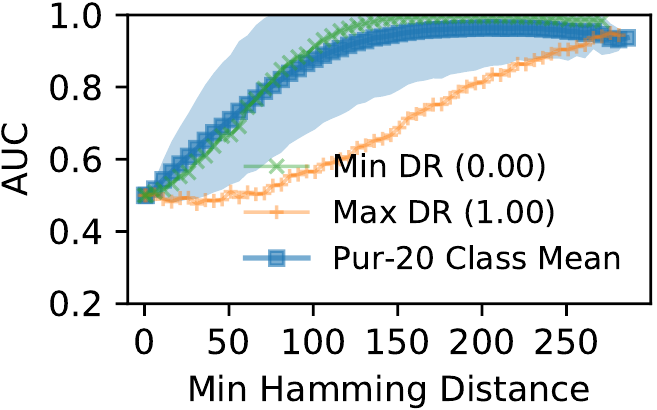}
     } 
     \subfloat[Loss MI\label{fig:nn_yeom_label_auc}]{%
       \includegraphics[width=0.195\textwidth]{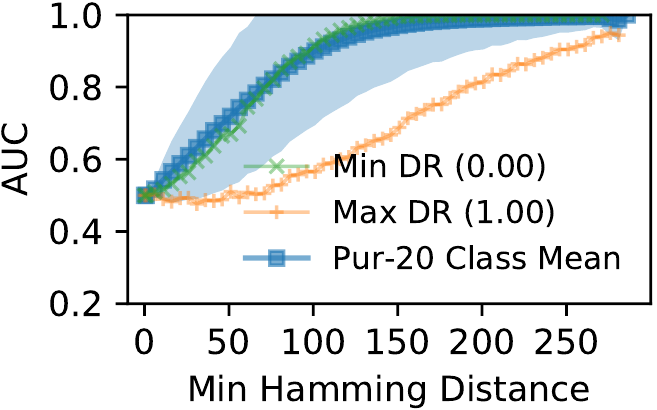}
     } 
     \subfloat[Shadow MI\label{fig:nn_shokri_label_auc}]{%
       \includegraphics[width=0.195\textwidth]{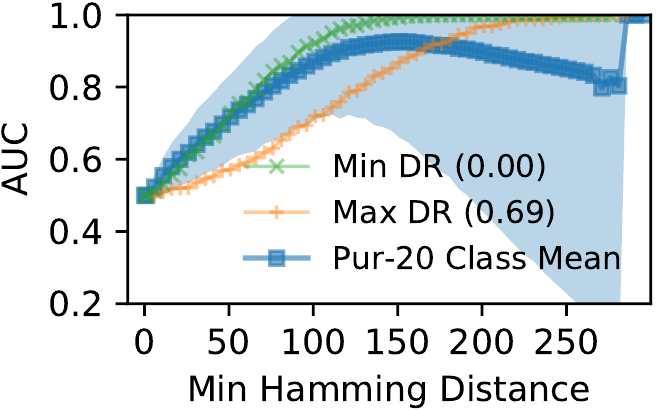}
     } 
     \subfloat[Local WB MI\label{fig:nasrloc_label_auc}]{%
       \includegraphics[width=0.195\textwidth]{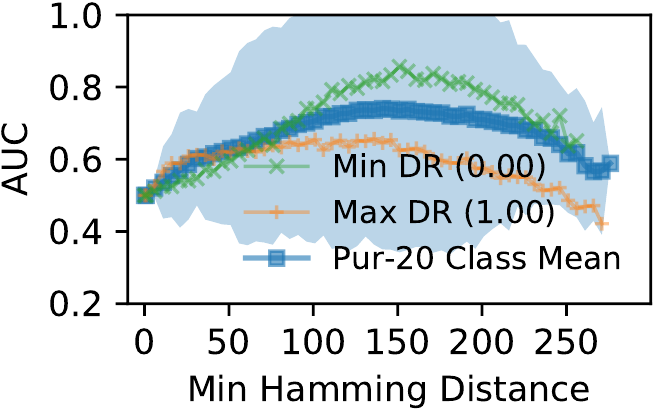}
     } 
     \subfloat[Global WB MI\label{fig:nasrglob_label_auc}]{%
       \includegraphics[width=0.195\textwidth]{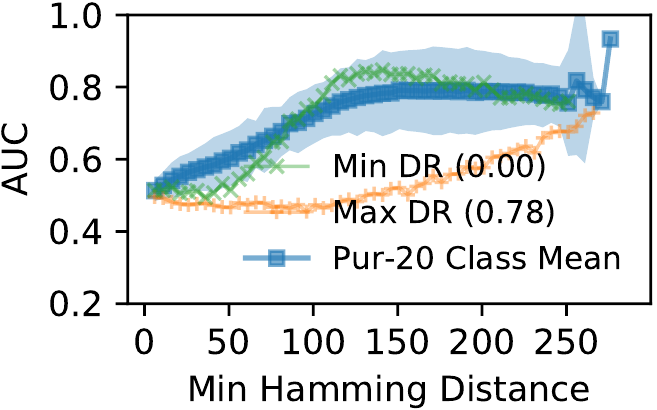}
     }
     \caption{Increasing AUC of various MI adversaries with increasing Hamming distance of synthetic non-members from the training dataset on target models, with a separation of class labels depending on the size of the Decision Region (DR), for the \textbf{Purchase-20} dataset.}
     \label{fig:mia_label_auc}
    \vspace{-3mm}
   \end{figure*}
   

The results thus far have been averaged over members and non-members from all classes. However, as we shall show, the performance of the MI attacks is not consistent over all classes. In fact, the more dominant a class, i.e., the larger the decision region (DR) of the class (Def.~\ref{def:dr}), the less likely it is to be susceptible to membership inference. We empirically measure the decision region of a given class by sampling one million vectors from the feature space by sampling each feature uniformly at random within feature bounds (see feature bounds in Section~\ref{sec:datasets}). A similar approach has been adopted in \cite{zhao2020resilience} for binary classification.

For per-class analysis, we train the target model and generate the synthetic vectors as before, except that now not only do we group synthetic vectors by the distance from the training dataset, but also according to the class label of the nearest training dataset vector. Due to space restrictions, we only show results for the Purchase-20 dataset. Results from the other datasets are in agreement with the conclusions drawn here, and are presented in Appendix~\ref{sec:appendix-perlabel}. 
In the figures, we highlight the AUC performance of the most dominant (largest DR) and least dominant (smallest DR) classes. 

\descr{Results.} Each plot in Fig.~\ref{fig:mia_label_auc} has 4 salient features. A blue line representing the mean AUC of all classes, an accompanying blue shaded area representing 2 standard deviations of AUC between classes, a green and blue line representing the class with the smallest DR, and the largest DR, respectively. From Fig.~\ref{fig:mia_label_auc}, we observe that across all MI attacks, the AUC of the most dominant class is well below the average. In particular, at distances close to the dataset. 

This can be explained as follows. Near the dataset, a non-member vector with class label $j$ (which is also the label of its nearest neighbor in the dataset) is likely to lie in the decision region $\mathcal{R}_j$ of class $j$. As we move away from the dataset, by varying the distance, the corresponding non-member vectors shift further away from the spot in the decision region occupied by their nearest neighbors in the dataset. At certain distance, depending on the target or attack model, the decision region changes to a decision region occupied by a different class, even though the nearest neighbor still has the class label $j$. These non-members are then likely to be misclassified as member vectors of another class, since they lie deep in the decision region of another class. This phenomenon is particularly true if one class overwhelmingly dominates other classes, thus occupying the bulk of the decision region. In this case, the attack will not be able to distinguish between members and non-members from the dominating class. 

This is most evident from the results on the 2-Purchase dataset (Fig.~\ref{fig:mia_label_auc-all}a-e in Appendix~\ref{sec:appendix-perlabel}), in which one of the two classes overwhelmingly dominates the other class (a DR of almost 1). The AUC performance of the dominant class is poor, whereas it is high for the other class, bringing the average AUC close to 0.5. This partly explains why the reported performance of MI attacks on 2-Purchase has always been comparatively poorer in the literature~\cite{ml-leaks, shokri-mia}. The per-class analysis on the remaining binary datasets is in Appendix~\ref{sec:appendix-perlabel}. 
\begin{observation}
\label{obs:dr}
If a class overwhelmingly dominates other classes, i.e., occupies a significant portion of the decision region in the feature space, then it is least susceptible to MI and SMI. An MI or SMI attack is unable to efficiently distinguish between members and non-members from this class.
\end{observation}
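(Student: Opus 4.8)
\descr{Proof proposal.} Since this is phrased as an empirical observation rather than a theorem, the plan is to give a stylized argument that isolates the mechanism and to flag where full rigor would require extra hypotheses. First I would restrict attention to \emph{output-only} attacks: an MI (or SMI) adversary $\mathcal{A}$ whose guess bit is a (possibly randomized) function $g$ of only the model response $h_X(\mathbf{x})$ on the challenge point and the true label $c(\mathbf{x})$. All five attacks of Section~\ref{sec:mia-5adversaries} fit this template — the white-box variants additionally read final-layer gradients and activations, but these are themselves fixed functions of $\mathbf{x}$ and the trained model, so the argument goes through verbatim. Fix the dominant class $j$, i.e.\ assume the fractional volume of the decision region $\mathcal{R}_j$ (Definition~\ref{def:dr}) is $1-\eta$ with $\eta$ small, and observe that overfitting to class $j$ forces $h_X(\mathbf{x}_0) = c(\mathbf{x}_0) = j$ for (almost) every member $\mathbf{x}_0$ of that class. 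The target quantity is the advantage conditioned on $c(\mathbf{x}_0)=j$, namely $\Pr[b'=1 \mid b=1] - \Pr[b'=1 \mid b=0]$ in Experiments~\ref{exp:mem-inf-somesh} and~\ref{exp:strong-mem-inf}.

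Second, I would formalize ``deep inside the decision region''. Write $\mathcal{R}_j^{\circ r}$ for the $r$-interior $\{\mathbf{x} : B_d(\mathbf{x},r) \subseteq \mathcal{R}_j\}$. A volume/counting argument shows the $\mathcal{D}$-mass of $\mathcal{R}_j \setminus \mathcal{R}_j^{\circ r}$ is small once $\eta$ and $r$ are both small; since $\mathcal{D}$ concentrates almost entirely on $\mathcal{R}_j$ for the dominant class and $X$ is drawn i.i.d.\ from $\mathcal{D}$, a random member $\mathbf{x}_0$ of class $j$ lands in $\mathcal{R}_j^{\circ r}$ except with some small probability $\eta' = \eta'(\eta,r)$. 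On that event every $r$-neighbor $\mathbf{x}'$ of $\mathbf{x}_0$ satisfies $h_X(\mathbf{x}') = h_X(\mathbf{x}_0) = j$; for the continuous target models (softmax networks, logistic regression) the full confidence vector $h_X$ is moreover Lipschitz, so on $B_d(\mathbf{x}_0,r)$ it stays within $O(r)$ of $h_X(\mathbf{x}_0)$, and one further Lipschitz step through the attack classifier carries this to the attack-model scores. Because the synthetic non-members of Section~\ref{sec:mia-synthetic} and the induced-distribution neighbors of Experiment~\ref{exp:strong-mem-inf} keep $c(\mathbf{x}') = c(\mathbf{x}_0) = j$, the input $(h_X(\mathbf{x}'), c(\mathbf{x}'))$ handed to $g$ differs from $(h_X(\mathbf{x}_0), c(\mathbf{x}_0))$ by $O(r)$ in the first coordinate and not at all in the second.

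Third, I would combine the pieces. Let $\omega_g$ be a modulus of continuity of the expected decision rule $g$ — for a threshold attack such as Loss MI or Conf MI this is essentially the $\mathcal{D}$-mass within $O(r)$ of the threshold, which the indistinguishable-neighbor assumption (Definition~\ref{def:smooth-dist}) bounds by $\epsilon(r)$. Then on the good event $\bigl|\Pr[b'=1 \mid \mathbf{x}=\mathbf{x}'] - \Pr[b'=1 \mid \mathbf{x}=\mathbf{x}_0]\bigr| \le \omega_g(O(r))$, and averaging over $\mathbf{x}_0$ gives a per-class advantage of at most $\eta' + \omega_g(O(r)) = o(1)$, both for MI and, a fortiori (since the relevant $r$ is smaller), for SMI. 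Finally, since the aggregate advantage is a class-frequency-weighted average of the per-class advantages and $\mathcal{D}$ puts almost all its mass on the dominant class, a sufficiently dominant class drags the overall MI/SMI advantage toward $0$ — exactly the 2-Purchase behaviour discussed after Figure~\ref{fig:mia_label_auc}.

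\descr{Main obstacle.} The load-bearing step is the second one: turning ``$\mathcal{R}_j$ has large fractional volume'' into ``a typical member of class $j$ is at distance at least $r$ from the decision boundary''. This implication fails for arbitrary shapes — a long thin sliver can have large volume while every point sits near its boundary — so a fully rigorous statement would need an isoperimetric-type hypothesis, a regularity assumption on the boundary the model learns, or an explicit Lipschitz bound on $h_X$, none of which hold universally. That is precisely why the paper records this as an observation validated by the experiments of Section~\ref{sec:mia-labels} rather than as a theorem; the point of the sketch is to show that the empirically observed monotonicity between decision-region size and (S)MI failure is the expected behaviour.
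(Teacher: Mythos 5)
Your sketch is right in spirit for the SMI half of the observation (members of an overwhelmingly dominant class sit inside a huge region $\mathcal{R}_j$, so nearby non-members receive essentially the same model response), and you correctly flag that ``large fractional volume'' does not by itself give distance from the decision boundary. But the paper's actual justification of this observation is the per-class experimental analysis of Section~\ref{sec:mia-labels} (Fig.~\ref{fig:mia_label_auc} and the appendix per-label plots) together with a \emph{comparative} decision-region argument that your sketch does not reproduce: for a non-dominant class, a non-member at moderate distance crosses into a foreign (typically the dominant) decision region and is misclassified by the target model, which is exactly the signal that Loss/Conf/Shadow MI pick up; for the dominant class, non-members at \emph{any} distance tend to remain inside $\mathcal{R}_j$ and are confidently labelled $j$ just like members, so the attack has nothing to key on. That large-distance part is where the MI (as opposed to SMI) claim lives, and your small-$r$ Lipschitz bound $\omega_g(O(r))$ cannot reach it: in Experiment~\ref{exp:mem-inf-somesh} the non-members drawn from $\mathcal{D}$ sit at large distances from the training set (Fig.~\ref{fig:nmvec_hist_both}), so the relevant $r$ is not small, and your ``a fortiori'' step runs in the wrong direction — it is MI, not SMI, that needs the extra argument.

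A second genuine misstep is the conflation of fractional decision-region volume with $\mathcal{D}$-mass. The DR of Definition~\ref{def:dr} is estimated by \emph{uniform} sampling of $\mathbb{D}^m$; a class whose DR has volume close to $1$ (as in Purchase-2) need not carry most of the data distribution or most training points. This undermines both your claim that ``$\mathcal{D}$ concentrates almost entirely on $\mathcal{R}_j$'' (membership of class-$j$ training points in $\mathcal{R}_j$ follows from the model fitting them, not from any mass statement) and your final aggregation step: the paper explains the near-$0.5$ average AUC on Purchase-2 as the average of one poor per-class AUC (dominant class) and one high per-class AUC (the other class), not as the dominant class dominating the class-frequency-weighted average. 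Since the statement is recorded as an observation, the expected ``proof'' is the per-class evidence plus this comparative mechanism; your proposal would need to (i) handle the large-distance regime by arguing that dominant-class non-members remain in $\mathcal{R}_j$ and receive member-like confidences even far from the training set, and (ii) keep the uniform-volume and data-mass notions separate, before it matches what the paper actually establishes.
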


{\descr{Tuning Attack Models for SMI.} It may be argued that these MI attacks are not specifically trained to distinguish between members and nearby (synthetic) non-members, which may explain their poor performance in terms of SMI. We performed additional experiments where we tuned the training process of these attack models to further include nearby synthetic non-members. We observe even with tuning, the attack model is unable to achieve SMI. Details appear in Appendix~\ref{sec:tune_model}.} 


\subsection{Generalization to Other Machine Learning Models}
\label{sec:mia-models}

In this section, we demonstrate that the previous observations are not just limited to neural networks, and generalize to other machine learning models as well. More specifically, we use Logistic Regression (LR), Support Vector Machines (SVM) and Random Forests (RF) classifiers as the target classification models. Since our observations are consistent across all MI attacks,   
we only evaluate the Conf MI attack as it requires the least amount of information about the target model, making it the most portable attack between different machine learning target models. 


\descr{Results.} Figs.~\ref{fig:lr_nmvec_auc}, \ref{fig:svm_nmvec_auc}, \ref{fig:rf_nmvec_auc} display the AUCs on the original non-members from the datasets. We see that, in general, they exhibit the same as the neural network: the AUC improves as the distance of non-members from the dataset increases, with the AUC performance closer to 0.5 near the dataset. 
This trend in the AUCs is more prominent on the synthetic non-members shown in Figs~\ref{fig:lr_genvec_auc}, \ref{fig:svm_genvec_auc}, \ref{fig:rf_genvec_auc}. An interesting observation is that the AUC of the RF model is very high even for non-member vectors close to the dataset, across all datasets.
The main reason for this is that the RF model in general is more overfitted than the other models (see Table~\ref{tab:model_traintest} of Appendix~\ref{sec:model_params}).
This may seem to suggest that it is possible to launch a successful SMI attack on an RF-based target model. However, if we zoom into distances close to the training dataset, i.e., inset Fig.~\ref{fig:rf_genvec_auc}, we see that the AUC is close to 0.5 for Hamming distance $\le 2$. Thus, it is still difficult to launch an SMI attack for small distances.

   \begin{figure}[t]
     \subfloat[LR Original\label{fig:lr_nmvec_auc}]{%
       \includegraphics[width=0.48\columnwidth]{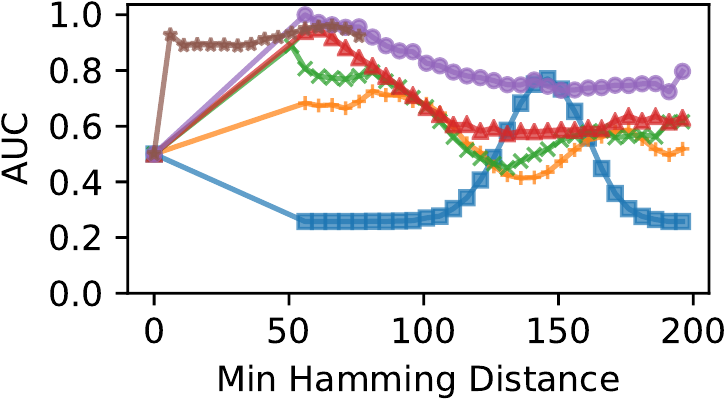}
     }\hfill
     \subfloat[LR Synthetic\label{fig:lr_genvec_auc}]{%
       \includegraphics[width=0.48\columnwidth]{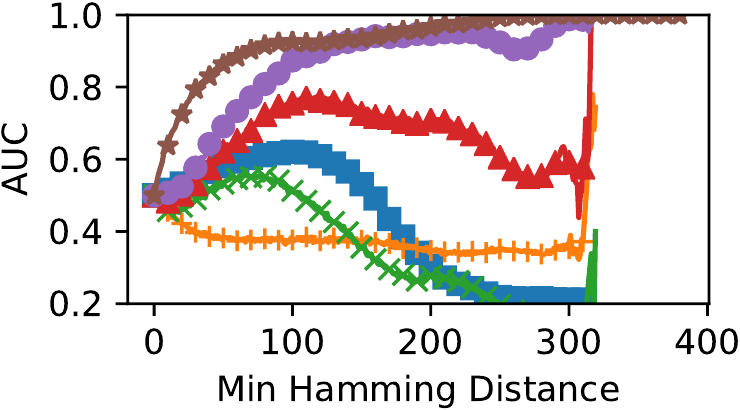}
     }\\\vspace{-2mm}
     \subfloat[SVM Original\label{fig:svm_nmvec_auc}]{%
       \includegraphics[width=0.48\columnwidth]{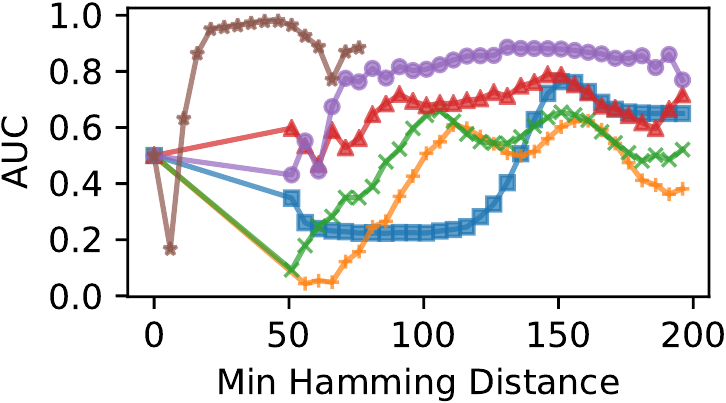}
     }\hfill
     \subfloat[SVM Synthetic\label{fig:svm_genvec_auc}]{%
       \includegraphics[width=0.48\columnwidth]{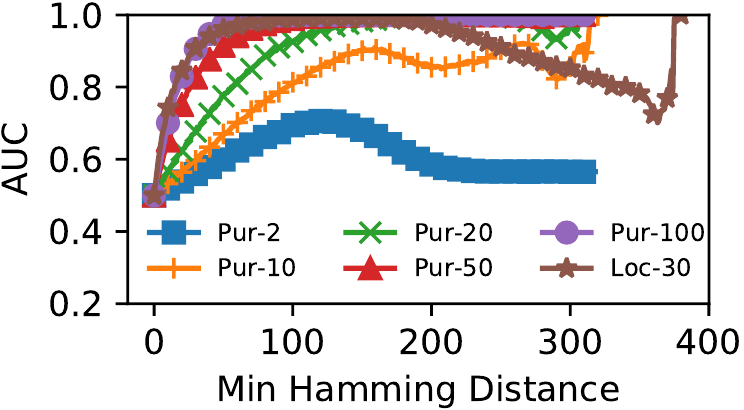}
     }\\\vspace{-2mm}
     \subfloat[RF Original\label{fig:rf_nmvec_auc}]{%
       \includegraphics[width=0.48\columnwidth]{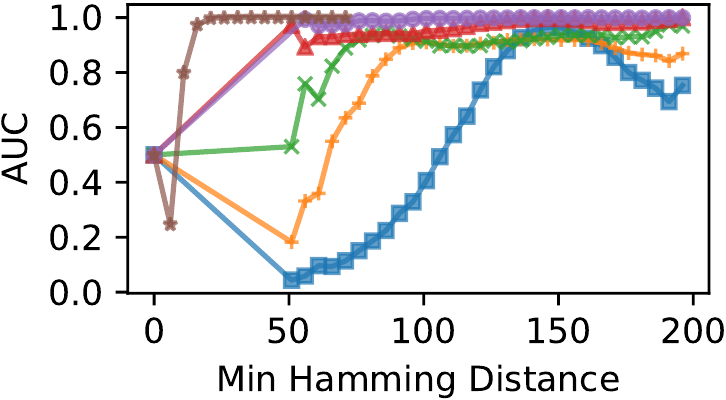}
     }\hfill
     \subfloat[RF Synthetic\label{fig:rf_genvec_auc}]{%
       \includegraphics[width=0.48\columnwidth]{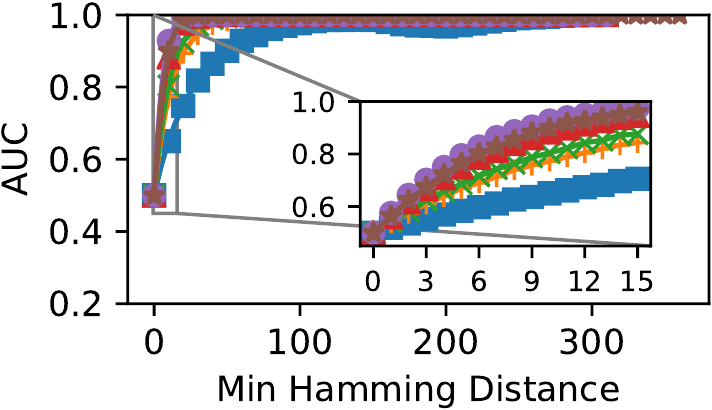}
     }
     \caption{Increasing AUC of MI with increasing Hamming distance of original and synthetic non-members from the training dataset on target models with various ML algorithms. 
     }
     \label{fig:mia_model_auc}
     \vspace{-2mm}
   \end{figure}

\begin{observation}
\label{obs:gen}
The observation that an MI attack is unable to distinguish between members and nearby non-members (strong membership inference) is consistent across different machine learning target models.
\end{observation}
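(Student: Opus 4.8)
This statement is empirical rather than formal, so the plan is to validate it in the same way Observations~\ref{obs:smi} and~\ref{obs:dr} were established for neural networks, but now instantiating the target model $h_X$ with a spread of qualitatively different learners: logistic regression, support vector machines, and random forests. Since all five MI attacks behaved alike on neural networks, I would restrict attention to the Conf MI attack, which needs only the top prediction confidence and is therefore the most portable across model families. For each (model, dataset) pair I would (i) train $h_X$ to match the accuracies of prior work, (ii) sample $1000$ members and $1000$ original non-members, bin the non-members by their distance from the training set ($d_H$ for the binary datasets, $d_M$ for CIFAR), and compute the AUC of Conf MI per distance bin, and (iii) repeat with synthetically generated non-members (feature flips for binary data, additive perturbations for continuous data) so that the small-distance regime is adequately populated, which the original data alone fails to cover (cf.\ the histograms of inter-point distances).

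The substance of the argument is then the same qualitative claim as Observation~\ref{obs:smi}: across LR, SVM and RF, the per-bin AUC should approach $0.5$ as the distance shrinks to a few Hamming flips (or a small Manhattan perturbation), so that SMI with small $r$ is no better than a coin flip regardless of the model family, even though the plain MI AUC is well above $0.5$ once the non-members are far from the training set. Conceptually this is inherited from the decision-region picture of Section~\ref{sec:mia-labels}: for a non-member within a tiny neighbourhood of a training point, the model's output vector (and hence Conf MI's membership score) is essentially unchanged from its value at that training point, so the attacker cannot separate the two. I would present the results side by side, as in Fig.~\ref{fig:mia_model_auc}, highlighting the common monotone-in-distance shape and the near-$0.5$ behaviour close to the training data for all three models.

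The main obstacle is the random forest. Because RF memorises the training set and has a far larger generalization gap than the other models (cf.\ Table~\ref{tab:model_traintest}), its per-bin AUC is already close to $1$ even for non-members that are only moderately far from the training set, which superficially looks like a successful SMI attack. To handle this I would zoom into the smallest distance bins (e.g.\ Hamming distance $\le 2$), where the argument still goes through: even the RF cannot distinguish a member from a non-member obtained by one or two feature flips, since such a perturbation typically leaves the vector in the same leaf and hence produces the same output, so the AUC there still collapses to $\approx 0.5$. The conclusion to state is that overfitting merely shifts the distance at which SMI starts to ``bite'' but does not remove the near-$0.5$ regime close to the training data; hence the observation holds uniformly across model families.
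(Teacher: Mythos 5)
Your proposal matches the paper's own validation of this observation almost step for step: restricting to the Conf MI attack for portability, measuring per-distance-bin AUC on both original and synthetic non-members under $d_H$/$d_M$ for LR, SVM and RF, and handling the more overfitted random forest by zooming into Hamming distance $\le 2$, where the AUC still collapses to roughly $0.5$ (cf.\ Fig.~\ref{fig:mia_model_auc} and Table~\ref{tab:model_traintest}). This is essentially the same approach as the paper, so no further comparison is needed.
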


\section{Attribute Inference}
\label{sec:aia-exp}

In this section, we first present the results of our experiments using the three attribute inference (AI) attacks described in Section \ref{sec:aia_how}. We show that all three AI attacks have negligible advantage in inferring the missing attributes of a target vector. On the other hand, for the same three attacks, we show that approximate attribute inference attack (AAI) advantage (Def.~\ref{def:aai-adv}) is significant, thereby suggesting that these attacks can approximately guess the missing attributes with a probability better than a random guess. We only focus on neural networks as the target model, since we have already shown that the results generalize to other machine learning models. We also study the effect of overfitting on the success advantage of both AI and AIA attacks in the last subsection.

\subsection{Attribute Inference Attacks}
\label{sec:ai-result}

To perform AI experiments (Exp.~\ref{exp:attr-inf-somesh}), we train the model exactly as described in Section~\ref{sec:infer-method}. We then (a) randomly select a member of the training set, or a non-member (from the testing set), (b) we mask a select number of most informative feature values as determined by mRMR~\cite{peng-mrmr} on the entire dataset to create the set $S$ of unknown features (15 binary features for Location and Purchase; 5 continuous features for CIFAR datasets), (c) and generate all possible siblings of the vector under $S$ (2 \blue{value bins} per feature for Location and Purchase, and up to 10 \blue{value bins} per feature for CIFAR). We then evaluate the AI attacks by giving each of the generated siblings to the underlying MI attack, and flagging those siblings that the corresponding MI attack identifies as a member vector. 
Again, the decision to use the most informative features from mRMR is to improve the likelihood of success for AI, as differences in the most informative features are likely to have the largest influence on the output of the classification model.
We determine the AI attack to be successful, if the original member vector is in this set of \emph{flagged siblings}. If there are more than one flagged sibling (excluding the original vector), we treat it as a tie and regard the attack as only partially successful. We add a fraction (determined by the number of ties) to its success count. For instance, 1/100 if there is a tie between 100 candidates. We then compute the AI advantage as the difference in the success counts between members, and non-members divided by the total counts of the tested members and non-members, respectively.
We note that we also performed Exp.~\ref{exp:attr-inf-somesh} on a single missing feature (as is done in other works~\cite{somesh-overfit, jayaraman2019evaluating}). The results are shown in Appendix.~\ref{sec:1feat_ai}. For this section, we focus on the expanded number of missing features, which is a more general case. The results for single feature AI, as we shall see, are only slightly better than multiple missing features.

\begin{table}[t]
\caption{Attribute Inference (Exp.~\ref{exp:attr-inf-somesh}) Advantage, where the adversary seeks to infer the exact attributes. The results below are normalized when dealing with ties.}
\label{tab:ai_adv}
\resizebox{\columnwidth}{!}{%
\begin{tabular}{|r|cccccccc|}
\hline
\textbf{AI}  & Loc-30 & Pur-2 & Pur-10 & Pur-20 & Pur-50 & Pur-100 & CIF-20  & CIF-100 \\ \hline
Conf  & 7.78E-4 & \ 1.38E-5   & -3.69E-4   & 2.16E-4    & 2.00E-3    & 1.65E-3     & -3.32E-7 & 4.14E-7  \\
Loss   & 7.76E-4 & -9.79E-5  & \ 5.57E-3    & 6.69E-3    & 4.59E-3    & 5.09E-3     & \ 3.33E-4  & 7.80E-4  \\
Shadow & 8.00E-4 & -2.00E-4  & \ 2.17E-3    & 2.63E-3    & 4.10E-3    & 4.20E-3     & \ 2.26E-4  & 7.99E-4  \\ \hline
\end{tabular}
}
\end{table}

   \begin{figure}[t]
   \centering
     \subfloat[Loss MI - Location and Purchase, 15 hamming distance.\label{fig:nn_yeom_f15}]{%
       \includegraphics[width=0.48\columnwidth]{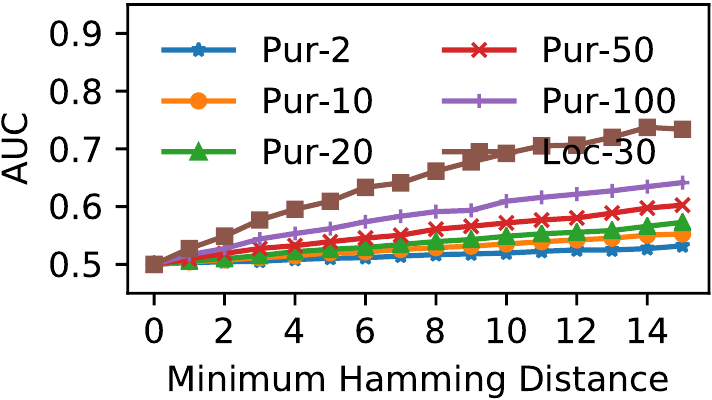}
     }\hfill
     \subfloat[CIFAR-100, zoomed to 0.5 Manhattan distance.\label{fig:nn_cifar_f05}]{%
       \includegraphics[width=0.48\columnwidth]{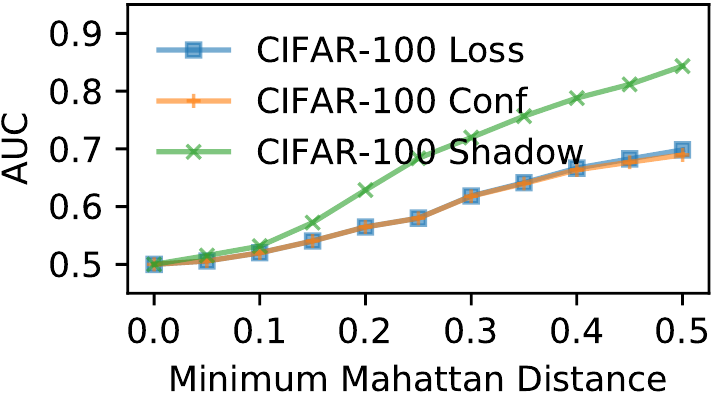}
     }
     \caption{Closer inspection of Hamming and Manhattan distance for select datasets and MI attacks previously seen in Fig.~\ref{fig:mia_genvec_auc}. Note at small distances from the training vectors, the AUC is close to 0.5, suggesting a poor AI attack.}
     \label{fig:mia_genvec_auc_zoom}
   \end{figure}

\descr{Results.} Across all attacks, we observe negligible AI advantages irrespective of the dataset and the attack (see Table \ref{tab:ai_adv}). Moreover, the advantages are also very low for more overfitted target models (Location-30, Purchase-50, Purchase-100). This suggests that an AI attack is difficult to launch, even though the same target model and datasets are susceptible to MI attacks. Our conclusion runs counter to the results from Yeom et al. on the success of attribute inference~\cite{somesh-overfit}, who demonstrate that on regression problems, a Loss AI attack can successfully infer attributes (using Loss MI attack as a subroutine), and the more overfit the target model, the more successful the attack. But this is easily reconciled by noting that our results apply to the classification problem, where the true label given to the attacker is discrete (class label). This is in contrast to the regression problem, where the true label (response) is a continuous value. The latter provides more information to the attack algorithm, which can be employed to launch a loss-based attack, i.e., Loss AI. The link to overfitting merits further exploration, and we defer this to Section~\ref{sec:ai-overfit}.    




A closer look at the Location dataset sheds more light on the reasons behind the failure of the AI attack. Previously, in Section~\ref{sec:mia-synthetic}, we observed that the performance of the Loss MI attack on the Location dataset reaches AUC greater than $\ge0.7$, significantly higher than other datasets. In  Fig.~\ref{fig:nn_yeom_f15} we focus on the Loss MI attack on non-members at Hamming distances $1$ to $15$   from the dataset. We can see that the AUC reaches $0.7$ at Hamming distance $10$ but remains close to $0.5$ between distance $1$ to $3$.
Thus, while the Loss MI attack should easily be able to discard siblings of the original vector at Hamming distances greater than $10$, it fails at closer distances and thereby resulting in an overall negligible advantage for the corresponding AI attack.   
The same reasoning applies to the CIFAR-100 dataset (Fig.~\ref{fig:nn_cifar_f05}), although under Manhattan distance.



\begin{observation}
\label{obs:no-ai}
It is difficult to infer (exact) attributes of a target vector in the training dataset from a machine learning model trained for a classification task, even if it is susceptible to membership inference.
\end{observation}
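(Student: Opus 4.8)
The plan is to justify Observation~\ref{obs:no-ai} by chaining the formal reduction of Theorem~\ref{the:nosmi-noai} to the empirical behavior of membership inference near the training set, and then corroborating the prediction with direct experiments. Recall that Theorem~\ref{the:nosmi-noai} gives $\neg\text{SMI}\Rightarrow\neg\text{AI}$: an AI adversary with advantage $\delta$ (masking $m'$ features) yields an $r$-SMI adversary with advantage at least $\delta-\epsilon(r)$, where $r$ is the radius at which all siblings of a member vector live, obtained from the ball-containment result of Appendix~\ref{app:metrics} (Corollary~\ref{cor:phi}) as a function of $m'$. So the first step is to fix, for the experimental parameters ($m'=15$ binary features for Location/Purchase, $m'=5$ binned features for CIFAR), the corresponding distance scale $r$, and observe that it is small — a handful of Hamming units, or $\lesssim 0.2$ in Manhattan distance after binning.

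The second step is to feed in Observation~\ref{obs:smi}: the measurements of Section~\ref{sec:mia-synthetic} (zoom-ins in Fig.~\ref{fig:mia_genvec_auc_zoom}) show that every MI attack considered has AUC $\approx 0.5$ for non-members within those small radii, across all datasets and target models, so the $r$-SMI advantage is empirically negligible. Combined with the reduction, $\delta \le \epsilon(r)+(r\text{-SMI advantage})$, which under the indistinguishable neighbor assumption (Definition~\ref{def:smooth-dist}) is itself negligible; this is the theoretical prediction that no AI attack of this flavor can succeed.

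The third step is to validate the prediction directly rather than rely on Theorem~\ref{the:nosmi-noai} alone: run the three concrete AI attacks built from Shadow/Loss/Conf MI on all eight datasets, following the sibling-enumeration and tie-handling protocol of Section~\ref{sec:infer-method}, and report the AI advantage of Definition~\ref{def:ai-adv} (Table~\ref{tab:ai_adv}); the claim is that every entry is within statistical noise of zero, including for the most overfit models. A mechanistic case study (Location under Loss MI, CIFAR-100 under Manhattan distance, Fig.~\ref{fig:nn_yeom_f15} and Fig.~\ref{fig:nn_cifar_f05}) then makes the failure mode explicit: the MI subroutine can reliably discard \emph{far} siblings (AUC climbing past $0.7$ at distance $\ge 10$) but not the \emph{near} ones, and exact recovery hinges precisely on discarding the near ones.

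The main obstacle is the legitimacy of the small-$r$ indistinguishability assumption: if $\epsilon(r)$ were non-negligible, the SMI-to-AI translation degrades and the clean chain breaks, so part of the argument must defend Definition~\ref{def:smooth-dist} (the GAN sanity check on Purchase in Appendix~\ref{app:valid}; the remark that minor perturbations of continuous attributes stay on-manifold). A secondary subtlety is the quantitative matching between the SMI radius implied by $m'$ masked informative features and the radii at which the empirical AUC is actually $\approx 0.5$ — one must ensure the generated siblings genuinely fall inside that flat regime rather than straddling the distance at which MI performance starts to rise; the case studies are what close this gap. Finally, the write-up should pre-empt the apparent clash with Yeom et al.~\cite{somesh-overfit} by noting that their positive AI result uses a continuous response label, which carries strictly more signal than a discrete class label, and hence does not contradict the classification-setting conclusion here.
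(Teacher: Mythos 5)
Your proposal follows essentially the same route as the paper: it combines the formal reduction of Theorem~\ref{the:nosmi-noai} with the empirical failure of SMI near the training set (Observation~\ref{obs:smi}, Fig.~\ref{fig:mia_genvec_auc_zoom}), validates directly via the Shadow/Loss/Conf AI experiments of Table~\ref{tab:ai_adv}, and reconciles the result with Yeom et al.\ by the regression-versus-classification distinction, which is exactly how Section~\ref{sec:ai-result} supports the observation. The only minor quibble is your claimed sibling radius for CIFAR ($\lesssim 0.2$ Manhattan) is optimistic relative to Corollary~\ref{cor:phi}, but this does not change the argument, which the paper likewise grounds in the observed flat-AUC regime rather than a tight radius bound.
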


\subsection{Approximate Attribute Inference Attacks}
\label{sec:app-ai-result}

\begin{table}[t]
\caption{Approximate AI Advantage (Def.~\ref{def:aai-adv}), where the adversary seeks to infer approximate attributes ($\alpha=7.5$ for Location and Purchase, $\alpha=3.33$ for CIFAR). Results with ties are normalized.}
\label{tab:approx_ai_adv}
\resizebox{\columnwidth}{!}{%
\begin{tabular}{|r|cccccccc|}
\hline
\textbf{AAI}  & Loc-30 & Pur-2 & Pur-10 & Pur-20 & Pur-50 & Pur-100 & CIF-20  & CIF-100 \\ \hline
Conf  & 0.1609 & 0.0366 & 0.0516 & 0.0502 & 0.0958 & 0.1307 & -0.0004 & 0.0016 \\
Loss   & 0.1030 & 0.0125 & 0.0516 & 0.0541 & 0.0789 & 0.1012 & 0.0300  & 0.0325 \\
Shadow & 0.0554 & 0.0054 & 0.0067 & 0.0149 & 0.0766 & 0.0964 & 0.0339  & 0.0445 \\ \hline
\end{tabular}
}
\end{table}

Since an MI attack starts performing better as the distance of non-member vectors from the dataset increases, this suggests that the relaxed notion of approximate attribute inference (AAI) defined in Exp.~\ref{exp:app-attr-inf} may be realizable in practice. Recall that an AAI adversary is given a portion $\mathbf{x^*}$ of a vector $\mathbf{x}$, and is asked to return a vector $\mathbf{x}'$ such that $d(\mathbf{x}, \mathbf{x}') \le \alpha$, where the parameter $\alpha$ determines closeness to the exact attributes. In this section, we evaluate AAI attacks. These are essentially AI attacks, but the success is determined by the parameter $\alpha$. To set an appropriate value of $\alpha$, we need to take into account any algorithm that randomly guesses the missing features without even using the output of the classifier. Over all challenge vectors, the average distance of the guessed vectors from the target vectors will approach the expected distance of a vector $\mathbf{x}'$ from $\mathbf{x}$ whose missing features are randomly generated.  We therefore set $\alpha$ equivalent to this expected distance. This means that any algorithm that successfully guesses more the missing features within an $\alpha$ distance of the target vector is non-trivial. Note that guessing missing features trivially due to correlations in the data distribution is already covered by the way our AAI definition is constructed, i.e., learning via the model versus via the distribution.
Thus, for the Location and Purchase datasets, where we have $15$ unknown features, we set $\alpha$~=~$7.5$, and for the CIFAR dataset, with $5$ unknown continuous features (normalized between $-1$ and $1$), we set $\alpha$~=~$3.33$, which is the average distance of a random guess from the original values
\iffull
(See Appendix~\ref{app:misc}).
\else 
(See full version of the paper).
\fi

\descr{Results.} Table~\ref{tab:approx_ai_adv} shows the AAI advantage (Def.~\ref{def:aai-adv}) of the three AI attacks on all datasets. Overall, the AAI advantage is considerably higher than the AI advantage (from Table~\ref{tab:ai_adv}), reaching up to $0.1609$ for the Loss AI attack on the Location dataset. However, the advantage obtained is still lower than the theoretical maximum of $1$. Furthermore, the advantage is higher for more overfitted datasets, i.e., Location, Purchase-50, Purchase-100, and CIFAR-100. This indicates that increasingly the level of overfitting may improve the attack accuracy, which we shall explore in the next section. Interestingly, Shadow AI either performs worse or comparable to Conf AI and Loss AI, even though the latter attacks have less information available to them. 
The advantages seen in Table~\ref{tab:approx_ai_adv} exceed AI with one missing feature (See Appendix~\ref{sec:1feat_ai}), despite the increased inference difficulty, with more missing features.

\blue{Like Yeom et al. \cite{somesh-overfit}, our current evaluation, regards the measure of success as an adversary's ability to infer attributes with a single guess, reported as an average over multiple vectors; However, we acknowledge there are additional measures of success. For example \emph{top-k}, whereby an attacker has the opportunity to submit their top $k$ guesses.}


\begin{observation}
\label{obs:aai}
It is possible to infer attributes approximately close to their true values with a success rate significantly greater than random guess when the target model is susceptible to membership inference. 
\end{observation}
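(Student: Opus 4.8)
The plan is to establish this observation in the same empirical-plus-structural style as Observations~\ref{obs:mi-dist}--\ref{obs:gen}, by combining the monotonicity of MI performance in distance (Observations~\ref{obs:mi-dist} and~\ref{obs:smi}) with the relaxed success criterion of Experiment~\ref{exp:app-attr-inf}. The starting point is the AI-via-MI construction of Section~\ref{sec:aia_how}: given a portion $\mathbf{x}^* = \phi_S(\mathbf{x})$, enumerate all siblings in $\Phi_S(\mathbf{x})$, query the underlying MI attack on each, and return the sibling with the highest membership confidence. The key point is that, although the MI subroutine cannot separate $\mathbf{x}$ from siblings at small distance (i.e.\ SMI fails, per Observation~\ref{obs:smi}), it \emph{can} reliably reject siblings that differ from $\mathbf{x}$ in many of the masked informative features, since those lie far from the training set where AUC is well above $0.5$ (Fig.~\ref{fig:mia_genvec_auc}). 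Hence the returned sibling is, with non-trivial probability, close to $\mathbf{x}$ in the chosen metric $d$ --- which is exactly what the AAI advantage in Definition~\ref{def:aai-adv} rewards once $\alpha$ is chosen above the radius at which the MI curves leave $0.5$.

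Concretely, the steps are: (i) fix the masked set $S$ to the $m'$ most informative features via mRMR, so the Hamming/Manhattan geometry of $\Phi_S(\mathbf{x})$ is maximally exposed to the classifier and the distance-dependent MI behaviour is in play; (ii) choose $\alpha$ equal to the expected distance of a uniformly random guess of the masked coordinates from the true vector, so that a confidence-ignoring guesser has AAI advantage $0$ by construction and any positive advantage is attributable to the model (giving $\alpha=7.5$ for the $15$-feature binary datasets and $\alpha=3.33$ for the $5$-feature CIFAR datasets); (iii) instantiate the attack with each of Conf, Loss, and Shadow MI as subroutine and compute, separately for members and non-members, the tie-normalized fraction of runs in which the returned vector lies within $\alpha$ of the challenge vector; (iv) take the difference of these fractions as the empirical AAI advantage. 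Running this across the eight datasets yields Table~\ref{tab:approx_ai_adv}; the observation then follows from the fact that these advantages are consistently positive and markedly larger than the near-zero exact-AI advantages of Table~\ref{tab:ai_adv}, and are largest on the more overfit targets.

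The main obstacle is the principled choice of $\alpha$ together with the correctness of the random-guess baseline: too small an $\alpha$ collapses the advantage (as in exact AI), too large an $\alpha$ makes every guess succeed and the advantage vanishes again, and one must argue that the random-guess distance is the right degenerate reference and that data-distribution correlations between observed and masked features are already neutralised by the member-versus-non-member differencing in Definition~\ref{def:aai-adv}. A secondary difficulty is handling ties, i.e.\ multiple siblings sharing the top confidence, which we resolve by crediting a fractional success inversely proportional to the number of tied candidates. To tie the empirical trend to the mechanism rather than leaving it as a black box, I would cross-reference the per-distance AUC curves of Fig.~\ref{fig:mia_genvec_auc} and the zoomed views of Fig.~\ref{fig:mia_genvec_auc_zoom}: the very curves that sit near $0.5$ at small $r$ (which is what kills exact AI, Observation~\ref{obs:no-ai}) rise well above $0.5$ at moderate $r$, and it is precisely this gap that makes the relaxed AAI notion attainable.
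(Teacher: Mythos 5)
Your proposal matches the paper's own treatment: Observation~\ref{obs:aai} is established empirically exactly as you describe, by running the Conf/Loss/Shadow AI attacks with the MI subroutine over mRMR-selected masked features, setting $\alpha$ to the expected distance of a random guess ($7.5$ for the binary datasets, $3.33$ for CIFAR), normalizing ties, and computing the member/non-member advantage of Definition~\ref{def:aai-adv} to obtain Table~\ref{tab:approx_ai_adv}, with the explanation tied to the distance-dependent MI AUC behaviour of Figs.~\ref{fig:mia_genvec_auc} and~\ref{fig:mia_genvec_auc_zoom}. No substantive difference from the paper's approach.
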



\subsection{AI, AAI and Relation to Overfitting}
\label{sec:ai-overfit}

\begin{table}[t]
\caption{Approximate AI (Exp.~\ref{exp:app-attr-inf}) Advantage, where the Shadow adversary seeks to infer approximate attributes ($\alpha=7.5$) from various states of generalized Purchase-100 Models}
\label{tab:aia-overfit}
\centering
\resizebox{\columnwidth}{!}{%
\begin{tabular}{|r|ccccccc|}
\hline
Dataset Size  & 20K & 40K & 60K & 80K & 100K & 150K & 200K \\
Overfitting   & 0.368 & 0.301 & 0.271 & 0.251 & 0.237 & 0.211 & 0.193 \\
Shadow AI & 0.0024 & 0.0046 & 0.0021 & 0.0052 & 0.0040 & 0.0049 & 0.0033 \\
Shadow AAI & 0.118 & 0.098 & 0.096 & 0.078& 0.066 & 0.046 & 0.026\\ \hline
\end{tabular}
}
\end{table}


In both AI and AAI attacks, we observed greater advantage on more overfitted target models. To explore this further, we focus on the Purchase-100 dataset and the Shadow AI attack. We define the overfitting level of a model as the generalization error (GE) as defined in Eq.~\ref{eq:gen-err}. To alter GE, and hence the degree of overfitting, we vary the amount of training data, while maintaining proportional splits between training and testing sets. As we increase the training data size from 20,000 (20K) to 200,000 (200K), the generalization error  decreases from 0.368 down to 0.193 as shown in Table~\ref{tab:aia-overfit}.





\descr{Results.} From the ``Shadow AI'' row of Table~\ref{tab:aia-overfit}, we can see that increasing the overfitting level has little to no impact on the AI advantage (the Shadow AI result in Table~\ref{tab:ai_adv} corresponds to a dataset size of 40K). Returning to the comparison with the findings of Yeom et al. on the effectiveness of AI on regression tasks in Section~\ref{sec:ai-result}, our results indicate that for a classification problem, AI remains ineffective even if we increase the degree of overfit.   
On the other hand, there is a positive correlation between overfitting level and the AAI advantage, evident from the row labeled ``Shadow AAI'' in Table~\ref{tab:aia-overfit}. As the overfitting level increases from 0.193 up to 0.368, the AAI advantage improves from 0.026 to 0.118.



\begin{observation}
The more overfitted a target classification model, the more susceptible it is to approximate attribute inference. On the other hand, attribute inference remains hard even with increased overfitting levels.  
\end{observation}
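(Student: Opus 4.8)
The plan is to reduce this (largely experimental) statement to the formal machinery already in place: Theorem~\ref{the:nosmi-noai} together with the $\text{SMI}\Leftrightarrow\text{AI}$ equivalence on one side, and the monotonicity of membership inference in the distance from the training set (Observations~\ref{obs:mi-dist} and \ref{obs:smi}) on the other. Both halves rest on the same picture. An \emph{exact} AI attack on a portion with $m'$ masked features must rank the true completion $\mathbf{x}$ strictly above \emph{every} one of its siblings, including the siblings that differ from $\mathbf{x}$ in a single coordinate; since those nearest siblings lie at distance $r=1$ in the relevant conserving metric, beating them is an SMI task at $r\to 0$. An AAI attack with parameter $\alpha$ --- chosen in Section~\ref{sec:app-ai-result} so that a random completion sits at expected distance $\approx\alpha$ from $\mathbf{x}$ --- only needs to flag \emph{some} sibling within $\alpha$, and therefore succeeds as soon as the underlying MI attack can separate $\mathbf{x}$ from completions at distance $\gtrsim\alpha$, a ``moderate-distance'' MI task. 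So I would tie the first behaviour to the SMI advantage at vanishing $r$ and the second to the MI advantage at $r\gtrsim\alpha$, and then track how each scales with the generalization error $\text{err}(h)$ of Eq.~\ref{eq:gen-err}.

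For the second half (exact AI stays hard), I would show that a single indistinguishable close sibling already caps the AI advantage. Instantiating the reduction behind Theorem~\ref{the:nosmi-noai} at the closest sibling gives $\text{Adv}_{\text{AI}}\le \text{Adv}_{\text{SMI}}(\cdot,h,1,n,\mathcal{D})+\epsilon(1)$; by Observation~\ref{obs:smi} the right-hand side is already $\approx 0$, and the point is that it stays $\approx 0$ as $n$ shrinks. Here I would invoke the mechanism behind Theorem~\ref{the:mi-smi} --- classifiers whose output is essentially locally constant on small balls around training vectors --- and argue that overfitting only \emph{sharpens} the memorization of individual training points, flattening rather than roughening the decision surface in their immediate neighbourhood (Definition~\ref{def:dr}), so neither $\epsilon(1)$ nor the local decision-region geometry around $\mathbf{x}$ improves for the attacker as the model becomes more overfit. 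This reproduces the negligible entries of Table~\ref{tab:ai_adv} and the flat ``Shadow AI'' row of Table~\ref{tab:aia-overfit}, in line with Observation~\ref{obs:no-ai}.

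For the first half (AAI improves with overfitting), I would instead lower-bound the AAI advantage by the moderate-distance MI signal. Flagging the highest-scoring sibling beats the random-guess baseline precisely when the MI attack can distinguish the true member $\mathbf{x}$ from the ``wrong-bulk'' siblings at distance $\gtrsim\alpha$, and by Observation~\ref{obs:mi-dist} this task carries genuinely positive advantage. That advantage is an increasing function of the train--test loss gap: in the loss- and confidence-threshold attacks the member's loss concentrates near $L_{\text{tr}}(h)$ while a moderate-distance non-member's loss is pulled toward $L_{\text{test}}(h)$, so the separating margin --- and hence the AAI advantage of Definition~\ref{def:aai-adv} --- grows monotonically with $|\text{err}(h)|$. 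Feeding this into the overfitting sweep predicts the positive, increasing ``Shadow AAI'' row of Table~\ref{tab:aia-overfit}, matching Observation~\ref{obs:aai}.

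The hardest part will be making rigorous the contrast on which everything hinges: that the \emph{local} geometry of $h$ around a training point (which governs exact AI) is insensitive to --- or even improved by --- overfitting, whereas its \emph{moderate-scale} geometry (which governs AAI) strengthens with overfitting. Both facts are distributional and learning-rule dependent, so a fully formal statement would require fixing a training algorithm and a data distribution and quantifying how $\epsilon(r)$ and the decision regions of Definition~\ref{def:dr} deform as $n$ decreases. I would therefore present this as a semi-formal argument anchored at its two proven endpoints --- Theorem~\ref{the:nosmi-noai} at $r\to 0$ and the loss-gap bound at $r\gtrsim\alpha$ --- and rely on the overfitting sweep of Table~\ref{tab:aia-overfit} as the experimental confirmation that the interpolation between them behaves as predicted.
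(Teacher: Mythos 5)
Your proposal correctly reproduces the paper's qualitative picture of \emph{why} exact AI fails while AAI can work (exact AI forces the attacker to beat siblings at distance $1$, i.e.\ an SMI task where Observation~\ref{obs:smi} and Theorem~\ref{the:nosmi-noai} kill the advantage; AAI only needs separation at distances $\gtrsim\alpha$, where MI has real signal). But that is the content of Observations~\ref{obs:no-ai} and~\ref{obs:aai}, not of the statement at hand. The statement here is specifically about the \emph{overfitting dependence}, and the two steps on which your argument hinges are exactly the parts you do not (and cannot, from the paper's machinery) establish: (i) that $\epsilon(1)$ and the local decision-region geometry around training points do not improve for the attacker as the model becomes more overfit, and (ii) that the moderate-distance MI advantage grows monotonically with the generalization error of Eq.~\ref{eq:gen-err}. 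Neither follows from Theorem~\ref{the:mi-smi} or Theorem~\ref{the:nosmi-noai}; Theorem~\ref{the:mi-smi} is an existence construction for a specific artificial classifier, not a claim about how real classifiers deform under overfitting, and your ``overfitting flattens rather than roughens the surface near training points'' assertion is a heuristic the paper never makes (indeed the heavily overfit RF model shows elevated AUC at moderate distances, and the paper's own explanation for why exact AI stays hard under overfitting is different: in classification the true label handed to the attacker is discrete, unlike the continuous response in the regression setting of Yeom et al., so the loss signal is too coarse). As written, (i) and (ii) are essentially the observation itself restated in different language, so the argument is circular where it matters.

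The paper does not prove this observation at all: it establishes it empirically in Section~\ref{sec:ai-overfit} by a controlled sweep on Purchase-100 with the Shadow attack, varying the training-set size from 20K to 200K so that the generalization error ranges from $0.368$ down to $0.193$, and reading off Table~\ref{tab:aia-overfit} that the Shadow AI advantage stays flat and negligible while the Shadow AAI advantage rises monotonically from $0.026$ to $0.118$. Your plan ultimately defers to the same table for ``confirmation that the interpolation behaves as predicted,'' which concedes that the proposed semi-formal bridge carries no independent weight. A further small mismatch: your margin argument (member loss near $L_{\text{tr}}$, distant non-member loss near $L_{\text{test}}$) is phrased for the loss/confidence threshold attacks, whereas the paper's overfitting sweep is run with the Shadow (attack-model) adversary, so even the heuristic does not directly address the experiment that supports the claim. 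If you want to keep a theoretical framing, you would need to state (i) and (ii) explicitly as assumptions about the training algorithm and data distribution, which is precisely the caveat the paper avoids by presenting this as an experimental observation.
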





\section{Related Work}
\label{sec:rw}
\vspace{-2mm}


The three black-box MI attacks evaluated in this paper were proposed by Shokri et al.~\cite{shokri-mia}, Salem et al.~\cite{ml-leaks} and Yeom et al.~\cite{somesh-overfit}. All three works have used a split of a real dataset into training and testing sets, and demonstrated the effectiveness of MI using the testing sets. We have shown that most vectors in the testing set, i.e., non-members, are expected to be far from the training set, which explains why the relationship of MI performance to distance from members was not identified in these works. We have also shown that our results apply in the white-box setting, by evaluating the MI attacks from Nasr et al.~\cite{nasr2018comprehensive}, who proposed passive and active white box attacks targeting both standalone and federated models. Of course, the research on MI is not limited to these works. For instance, in~\cite{hayes2019logan} black and white box MI attacks are evaluated on generative adversarial networks; in~\cite{merlin} a new MI attack is proposed based on the loss-based MI attack from Yeom et al. evaluated in our paper, and in~\cite{disparate-sub-group} the authors show that even if MI attacks are ineffective as a whole on a dataset, they have disparate effectiveness on different sub-groups in the dataset. 
We have already demonstrated that our observations generalize to other MI attacks and models, since the underlying principle remains the same, i.e., ML models are less susceptible to strong membership inference in the classification setting.    





The central theme of our paper is on the feasibility of attribute inference, also known as model inversion~\cite{model-inversion, pharma, yang2019neural, zhao2019adversarial}. A criticism of these works on model inversion is that they essentially exploit the correlation between the attributes and the true label, to infer the missing attributes~\cite{shokri-mia}. Finding such correlations is the very purpose of the learning task, and therefore, the missing attributes would be learned regardless of whether the challenge vector is a member or a non-member~\cite{shokri-mia}. The model inversion or attribute inference definition from Yeom et al.~\cite{somesh-overfit} avoids this issue by defining the AI advantage as the difference between inferring attributes with the model and without the model (i.e., through the distribution). Indeed, our definitions of AI and AAI use the same approach, based on their work. Yeom et al.~\cite{somesh-overfit} are also the first to formally relate MI attacks to AI attacks. They also formalise the role of overfitting to the effectiveness of MI and AI attacks, a link which was previously experimentally identified and demonstrated in~\cite{shokri-mia, ml-leaks}. 
As mentioned previously, they demonstrate that AI attacks are feasible on regression problems, with the accuracy of the attacks improving with the level of overfit. Although the AI attack performance is not as significant as the MI attack, it is still quite substantial reaching an advantage of up to 0.5 on one of the datasets~\cite{somesh-overfit}. We have shown that for classification problems, only approximate attribute inference seems to be feasible. 
Apart from~\cite{somesh-overfit}, Jayamaran and Evans~\cite{jayaraman2019evaluating} have also experimentally evaluated attribute inference attacks on classification models. Even though the goal of their analysis is to evaluate privacy leakage from classification models treated with differential privacy, their results with lower privacy (higher values of the privacy parameter $\epsilon$~\cite{dwork2014algorithmic}) can be considered as closer to the non-private setting. These results also show low AI advantages as compared to MI attacks, although the authors do not delve into the reasons.

Another related area is the investigation of factors effecting membership inference. Sablayrolles et al.~\cite{sablayrolles2019white} seek the optimal strategy for membership inference and find that such a strategy depends only on the loss function, implying that, asymptotically, knowledge of the model parameters (white box setting) does not provide any benefits over black box access. However, their treatment does not explore distance-based impact on membership inference as is done in our work. Long et al.~\cite{longpragmatic} explore the performance of membership inference focused on training data records which are more vulnerable, in contrast to looking at membership inference performance as an aggregate over the entire training dataset. They find that records which have fewer neighbors are more vulnerable, as their presence or absence has more influence on the model's output. They also state that it is difficult for an MI attack to distinguish between a member and its non-member neighbors. Unlike \cite{longpragmatic}, we formally prove the distinction between MI and SMI, and how this separation negatively impacts AI (and AAI) on classification models.




On the definitional side, Wu et al.~\cite{wu-influence} present an initial formal definition of attribute inference as the difference in inferring from the output of the model versus through the distribution (without access to the model). The definition from Yeom et al.~\cite{somesh-overfit}, which is the basis of our related definition, follows the same line of thinking. In addition to membership and attribute inference, Melis et al.~\cite{melis-exploit} also consider \emph{property inference}, which is a property of a subset of training points within a class but not true of the entire class. They show that it is possible to infer properties that are independent of what characterizes the class through unintended learning by the machine learning algorithm. Unlike membership or attribute inference which is tied to individual data points, their property inference relates to multiple training points (subsets).



This is similar to other attacks on machine learning models, such as \emph{model extraction}~\cite{tramer-stealing}, which apply to the entire model itself and not necessarily to individuals in the training dataset. In a model extraction attack, unknown parameters of the model are retrieved to construct similarly behaving models (hence stealing the model in a proprietary sense). On the defense side, it has been demonstrated that MI and AI attacks can be mitigated by the use of \emph{differential privacy}~\cite{dwork2014algorithmic, jayaraman2019evaluating, abadi2016deep}, although, this comes at a potential loss in utility~\cite{jayaraman2019evaluating, farokhi2020modelling, zhao2019adversarial}.
Our findings on the infeasibility of AI attacks indicate that we may only need protection against (the weaker) approximate attribute inference, for which tailored differentially private learning algorithms can be constructed offering better utility. This is particularly useful for applications where membership inference is less of a concern, or may even be desirable. A case in point being machine learning auditors, based on membership inference attacks, to prevent unauthorized use of personal data~\cite{miao2019audio, song2019auditing}. {Additionally only evaluating defenses against AI may mask potential privacy leakage though AIA, an arguably simpler attack and thus a more difficult task to defend.}

Finally Adversarial examples are vectors with applied perturbations close to the original target that result in large variations in the model's behavior, commonly observed as a mis-prediction~\cite{goodfellow2014explaining}. In the setting of MI or AI, given an adversarial example of a vector within the training dataset, the large difference between the behavior of the known and adversarial example would allow for their distinction. However, as Long et al.~\cite{longpragmatic} state, the majority of the neighborhood around the vector would have a minimal difference on the model output; with the adversarial example behaving as an exception, rather than the norm. Though combative methods have been developed to train models robust to adversarial examples ~\cite{gu2014towards}, we speculate that robust adversarial models will only have a minor positive impact on the mitigation of the MI/AI attack, as robust models should preserve the regular behavior of the model, to only mitigate the behavior of the adversarial examples. Though this warrants further investigation.

\section{Conclusion}

Our results show that it is infeasible for an attacker to correctly infer missing attributes of a target individual whose data is used to train a machine learning model for a classification problem owing to the inability of membership inference attacks to distinguish between members and nearby non-members. For applications, where the privacy concern is attribute inference, and not membership inference, defense mechanisms tailored to protect against approximate attribute inference can be constructed. As a future direction, it will be interesting to explore whether the approximate attribute inference attacks mentioned in this paper can be improved to infer missing attributes as close as possible to the original attributes.




\section*{Acknowledgments}
This work was conducted with funding received from the Optus Macquarie University Cyber Security Hub, in partnership with the Defence Science \& Technology Group and Data61-CSIRO, through the Next Generation Technologies Fund. Benjamin Zhao has also been funded by an Australian Government RTP scholarship. 


\bibliographystyle{plain}
\bibliography{air-ref}


\appendices

\section{Model Parameters}
\label{sec:model_params}

\begin{table}[t]
\centering
\caption{Summary of training and testing accuracies, with MI AUC for all machine learning classifiers.}
\label{tab:model_traintest}
\resizebox{\columnwidth}{!}{%
\begin{tabular}{|l||l|c|c|c||l|c|c|c|}
\hline
Dataset      & Model & Train Acc & Test Acc & MI AUC & Model - MI          & Train Acc & Test Acc & MI AUC \\ \hline
             & LR - Conf    & 1.000        & 0.582    & 0.897  & NN - Conf   & 1.000        & 0.794    & 0.705  \\ 
\multirow{2}{*}{Loc-30}  & SVM - Conf   & 1.000        & 0.731    & 0.916  & NN - Loss   & 1.000        & 0.794    & 0.710  \\ 
             & RF - Conf    & 1.000        & 0.566    & 0.975  & NN - Shadow & 1.000        & 0.666    & 0.909  \\ 
             & NN - Local & 0.998 & 0.430 & 0.891 & NN - Global & 0.998 & 0.430 & 0.886 \\
             \hline
             & LR - Conf    & 1.000        & 0.484    & 0.765  & NN - Conf   & 0.999        & 0.765    & 0.708  \\ 
\multirow{2}{*}{Pur-100} & SVM - Conf   & 1.000        & 0.799    & 0.855  & NN - Loss   & 0.999        & 0.765    & 0.720  \\ 
             & RF - Conf    & 1.000        & 0.606    & 0.998  & NN - Shadow & 1.000        & 0.700    & 0.842  \\ 
             & NN - Local & 0.538 & 0.487 & 0.508 & NN - Global & 0.538 & 0.487 & 0.719 \\
             \hline
             & LR - Conf    & 0.995        & 0.601    & 0.614  & NN - Conf   & 0.998        & 0.832    & 0.629  \\ 
\multirow{2}{*}{Pur-50}  & SVM - Conf   & 1.000        & 0.857    & 0.716  & NN - Loss   & 0.998        & 0.832    & 0.638  \\ 
             & RF - Conf    & 1.000        & 0.724    & 0.980  & NN - Shadow & 1.000        & 0.778    & 0.763  \\ 
             & NN - Local & 0.692 & 0.657 & 0.520 & NN - Global & 0.692 & 0.657 & 0.668 \\
             \hline
             & LR - Conf    & 0.973        & 0.785    & 0.552  & NN - Conf   & 0.999        & 0.889    & 0.577  \\ 
\multirow{2}{*}{Pur-20}  & SVM - Conf   & 1.000        & 0.906    & 0.584  & NN - Loss   & 0.999        & 0.889    & 0.582  \\ 
             & RF - Conf    & 1.000        & 0.813    & 0.917  & NN - Shadow & 1.000        & 0.841    & 0.690  \\ 
             & NN - Local & 0.803 & 0.781 & 0.505 & NN - Global & 0.803 & 0.781 & 0.626 \\
             \hline
             & LR - Conf    & 0.973        & 0.878    & 0.521  & NN - Conf   & 0.999        & 0.911    & 0.558  \\ 
\multirow{2}{*}{Pur-10}  & SVM - Conf   & 1.000        & 0.932    & 0.530  & NN - Loss   & 0.999        & 0.911    & 0.561  \\ 
             & RF - Conf    & 1.000        & 0.840    & 0.902  & NN - Shadow & 1.000        & 0.868    & 0.644  \\ 
             & NN - Local & 0.836 & 0.818 & 0.503 & NN - Global & 0.836 & 0.818 & 0.608 \\
             \hline
             & LR - Conf    & 1.000        & 0.986    & 0.499  & NN - Conf   & 0.998        & 0.959    & 0.521  \\ 
\multirow{2}{*}{Pur-2}   & SVM - Conf   & 1.000        & 0.987    & 0.502  & NN - Loss   & 0.998        & 0.959    & 0.522  \\ 
             & RF - Conf    & 1.000        & 0.921    & 0.781  & NN - Shadow & 0.999        & 0.944    & 0.580  \\ 
             & NN - Local & 0.914 & 0.906 & 0.505 & NN - Global & 0.914 & 0.906 & 0.567 \\
             \hline

\multirow{2}{*}{CIFAR-20}   & NN - Conf     & 0.920 & 0.322 & 0.544 & NN - Loss   & 0.920 & 0.322 & 0.799  \\
                            & NN - Shadow   & 0.999 & 0.281 & 0.925 & \multicolumn{1}{|c|}{-} & - & - & - \\ \hline
\multirow{2}{*}{CIFAR-100}  & NN - Conf     & 0.831 & 0.214 & 0.524 & NN - Loss   & 0.831 & 0.214 & 0.844  \\ 
                            & NN - Shadow   & 0.999 & 0.170 & 0.967 & \multicolumn{1}{|c|}{-} & - & - & - \\ \hline
\end{tabular}
}
\vspace{-4mm}
\end{table}

\subsection{Target Models}
We will first describe the \textbf{Neural Network (NN)} based target models used in the bulk of our experiments, followed by the configurations of the classifiers in Section~\ref{sec:mia-models}. The training and testing accuracies can be found in Table~\ref{tab:model_traintest}.
{\textbf{Location:}}
The model was trained in keras as a fully connected NN with 1 hidden layer of 128 nodes with the ``tanh'' activation function. We replicate the training and testing accuracy of \cite{shokri-mia}'s target model. 
%
{\textbf{Purchase:}}
The target model was trained in keras as a fully connected neural network with 1 hidden layer of [128] nodes with a ``tanh'' activation function. This architecture replicates the training and testing accuracy for the target model as previously reported in \cite{shokri-mia}.  
%
{\textbf{CIFAR:}}
The target model is a multilayer perceptron, consisting of two hidden layers of 256 units, with relu activation layer and a softmax output layer. This is the same architecture used in \cite{jayaraman2019evaluating}. 

  

  
\textbf{Logistic Regression (LR)}: The parameter C was set at 100 for all datasets, with all other parameters remain at the default values.
\textbf{Support Vector Machine (SVM)}: We select a linear kernel for all the datasets. We keep parameters at default values.
\textbf{Random Forest (RF)}: The number of estimators was chosen to be 100 with no depth specified, the remaining parameters were kept as defaults.

The training and testing accuracies for each algorithm, and for each datasets are noted in Table~\ref{tab:model_traintest}.


\subsection{MI Attack Configurations}
\label{sec:mia-config-how}

Due to the different data requirements for each attack, the way the data is partitioned differs, we note these differences in this section. The average MI AUC can be found in Table~\ref{tab:model_traintest}. For the Conf and Loss attacks, we do not require additional data to train an attack model.

\subsubsection{Conf and Loss attacks}

\textbf{Location:}
We take the full dataset and divide it into 2 parts. 20\% is used for training the target model and remainder 80\% is kept for testing purposes. 
\textbf{Purchase:}
We sample 20,000 records from the dataset and divide it into 2 parts. The first 80\% is used for training the target model and remaining 20\% is kept for testing purposes. 
\textbf{CIFAR:}
50,000 records are sampled from the dataset to constitute our experimental dataset, from this 20\% is reserves as the training data, and the remaining 80\% is use for testing. 

\subsubsection{Shadow MI}
\noindent\textbf{Location:}
We take the full dataset and divide it into 3 parts. The first 20\% is used for training the target model, 64\% for training the shadow models and the remaining 16\% is retained for testing. Our Shadow MI attack is from the open-source library~\cite{mia-library}. The training and testing accuracies are found in Table~\ref{tab:model_traintest}. Our models are as follows:
\begin{enumerate}[leftmargin=*]
  \item \textbf{Shadow Models:} We select 60 attack models for Location dataset, consistent with \cite{shokri-mia}. The architecture of these shadow models and the size of their training dataset are equivalent to the target model.
  \item \textbf{Attack Model:} The attack model is multilayer perceptron with a 64-unit hidden layer and a sigmoid output layer. This architecture replicates the precision and recall as previously reported in \cite{shokri-mia}. For the Location-30 dataset our MI attack obtains a precision of 0.93 and recall of 0.82
\end{enumerate}

\noindent\textbf{Purchase:}
We sample 40000 records from the dataset and divide it into 3 parts. The first 25\% is used for training the target model, 67.5\% for training the shadow models and the last 7.5\% is kept for testing. The setup for running this attack on the Purchase datasets are as follows:
\begin{enumerate}[leftmargin=*]
  \item \textbf{Shadow Models} We chose the number of shadow models as 20 for Purchase dataset. The architecture of these shadow models and the size of their training dataset are the same as the target model.
  \item \textbf{Attack Model} The attack model is multilayer perceptron with a 64-unit hidden layer and a sigmoid output layer. This architecture replicates the precision and recall observed in \cite{shokri-mia}. We obtain precision of 0.66, 0.78, 0.81, 0.85, 0.89 and recalls of 0.54, 0.57, 0.6, 0.67, 0.76 for Purchase-2, 10, 20, 50, 100, respectively.
\end{enumerate}

\noindent\textbf{CIFAR:}
We sample complete dataset(around 50000 records) from the dataset and divide it into 3 parts. The first 20\% is used for training the target model,next 72\% for training the shadow model and the rest 8\% is kept for testing purposes. The setup for running this attack on this dataset is as follows:
\begin{enumerate}[leftmargin=*]
  \item \textbf{Shadow Models} We chose the number of attack models as 5 for CIFAR dataset which is the same as \cite{jayaraman2019evaluating}. The architecture of this shadow model and the size of the training dataset is the same as the target model.
  \item \textbf{Attack Model} A multilayer perceptron (two 64 unit hidden layer with ``tanh'' activation layer and a sigmoid output layer). This architecture matches the precision and recall of the attack model previously reported in \cite{shokri-mia}. We achieve 0.98 precision and 0.9 recall for CIFAR-100.
\end{enumerate}

\subsection{Local and Global White Box Inference Attacks~\cite{nasr2018comprehensive}}
\label{sec:appendix-wb-how}
As a result of the federated setting, the target models for our datasets differ.
The target models and attack model architecture, as well as the training and testing setup, originally described by \cite{nasr2018comprehensive} are utilized in this study. 

\textbf{Target Model}
Our target model for both datasets consisted of five layers (1024, 512, 256, 128, 100) with “tanh” activation, replicated from \cite{nasr2018comprehensive}. 
Each party as well as the server is trained on this model across 100 epochs with an Adam optimizer with learning rate of 0.0001 and cross entropy loss.

\textbf{Attack Model}
The attack model takes in a number of different inputs from the target model, which are trained on 'submodules' before being combined in a final network. These inputs described below, with c being equal to the number of classes of the dataset: 
\begin{itemize}[leftmargin=*]
    \item Gradient loss of the final layer - One convolutional layer (1000) with kernel size (1, c) and three hidden layers (1024, 512, 128) 
    \item One hot encoded true label - 2 hidden layers (128, 64)  
    \item Predicted labels - 2 hidden layers (100, 64)  
    \item Output for the correct label – 2 hidden layers (c, 64)
\end{itemize}

The combined input is trained using three hidden layers (256, 126, 64, 1). 
"ReLu" activation is used throughout the attack model, with an Adam optimiser with learning rate of 0.00001 and mean square error loss. 

\textbf{Datasets}
During target model training the Location and Purchase datasets were both split with 20\% (30,000 for Purchase, 1,158 for Location) used for the initial target model training, and 80\% (150,000 for Purchase, 5,790 for Location) for testing, as described for the purchase dataset in \cite{nasr2018comprehensive}). 
The data was further split equally amongst the three parties so that each party had a training and testing set of the same size.  
The attack model was subsequently trained with half of the original training data and the same amount of the original testing data (representing members and nonmembers, respectively). 
Each batch was designed to have 50\% of members and nonmembers.
The remaining samples were used for testing.







    \newcommand{\figwide}{0.16}

   \begin{figure*}[t]
   \centering
     \subfloat[Loc-30 Conf MI\label{fig:nn_salem_label_auc-loc30}]{%
       \includegraphics[width=\figwide\textwidth]{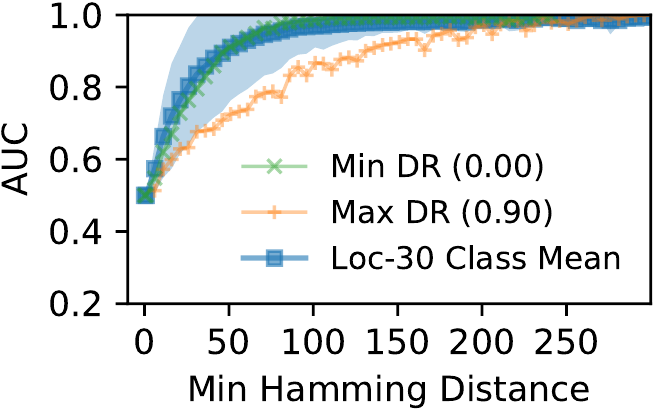}
     }
     \subfloat[Loc-30 Loss MI\label{fig:nn_yeom_label_auc-loc30}]{%
       \includegraphics[width=\figwide\textwidth]{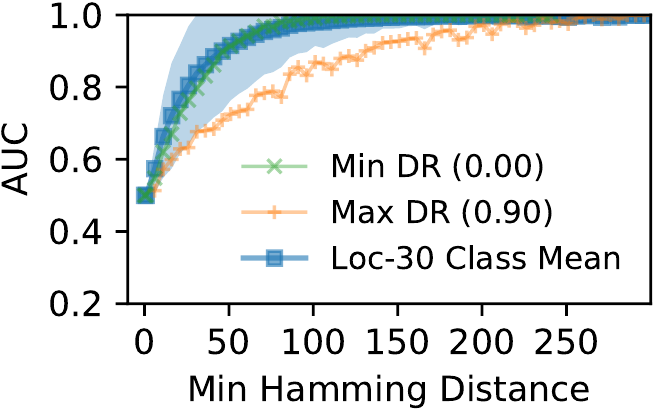}
     }
     \subfloat[Loc-30 Shadow MI\label{fig:nn_shokri_label_auc-loc30}]{%
       \includegraphics[width=\figwide\textwidth]{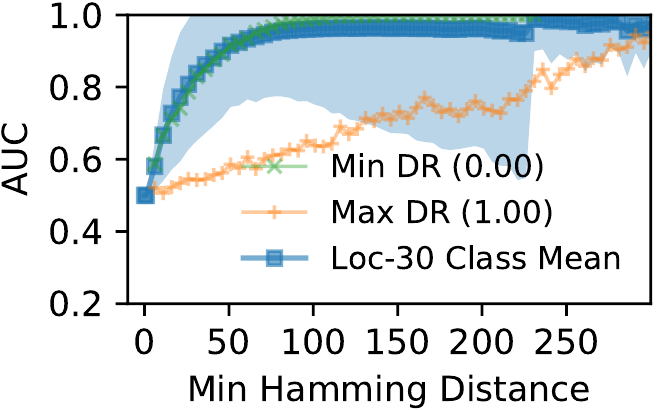}
     }
     \subfloat[Loc-30 Local WB\label{fig:nasrloc_label_auc-loc30}]{%
       \includegraphics[width=\figwide\textwidth]{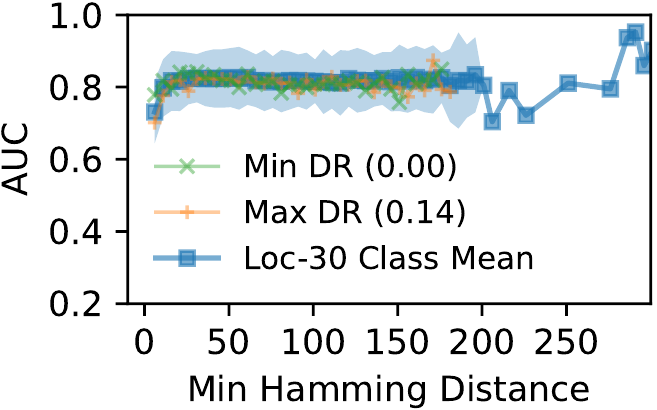}
     }
     \subfloat[Loc-30 Global WB\label{fig:nasrglob_label_auc-loc30}]{%
       \includegraphics[width=\figwide\textwidth]{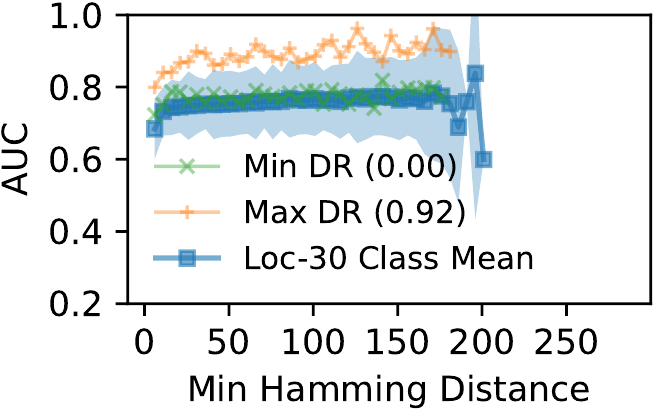}
     }\\ \vspace{-2mm}
     \subfloat[Pur-2 Conf MI\label{fig:nn_salem_label_auc-pur2}]{%
       \includegraphics[width=\figwide\textwidth]{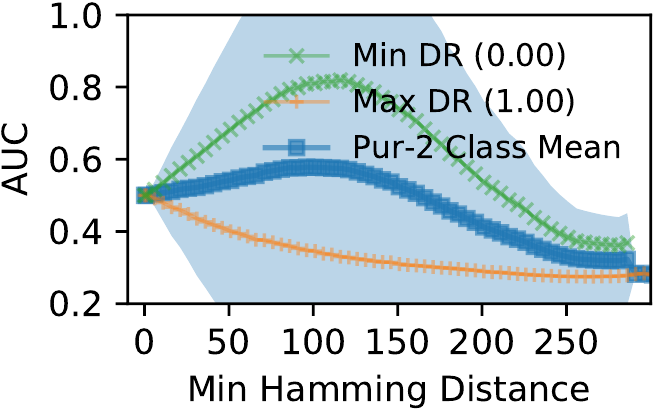}
     }
     \subfloat[Pur-2 Loss MI\label{fig:nn_yeom_label_auc-pur2}]{%
       \includegraphics[width=\figwide\textwidth]{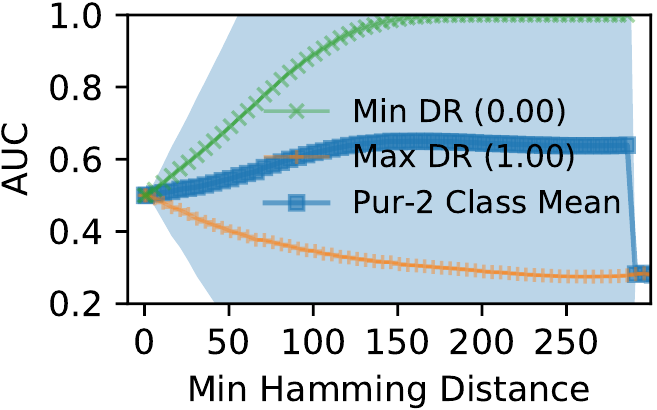}
     }
     \subfloat[Pur-2 Shadow MI\label{fig:nn_shokri_label_auc-pur2}]{%
       \includegraphics[width=\figwide\textwidth]{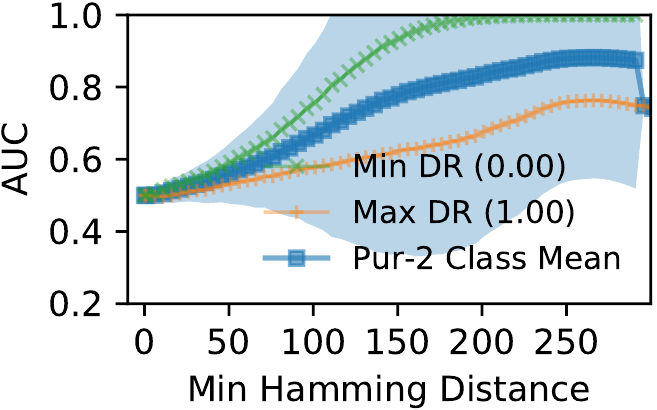}
     }
     \subfloat[Pur-2 Local WB\label{fig:nasrloc_label_auc-pur2}]{%
       \includegraphics[width=\figwide\textwidth]{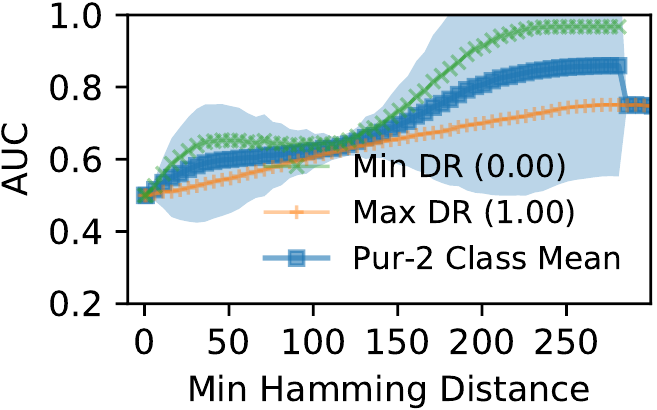}
     }
     \subfloat[Pur-2 Global WB\label{fig:nasrglob_label_auc-pur2}]{%
       \includegraphics[width=\figwide\textwidth]{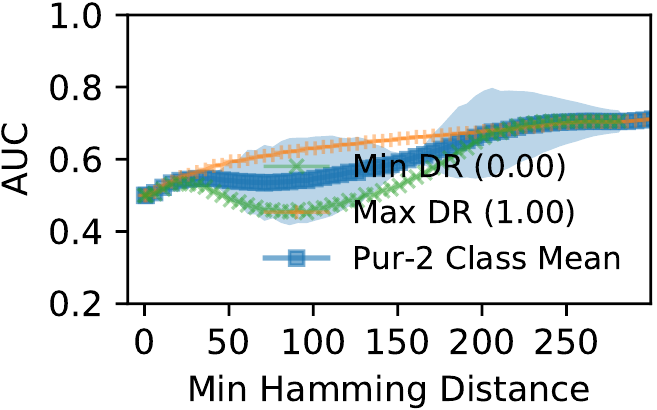}
     }\\ \vspace{-2mm}
     
     \subfloat[Pur-10 Conf MI\label{fig:nn_salem_label_auc-pur10}]{%
       \includegraphics[width=\figwide\textwidth]{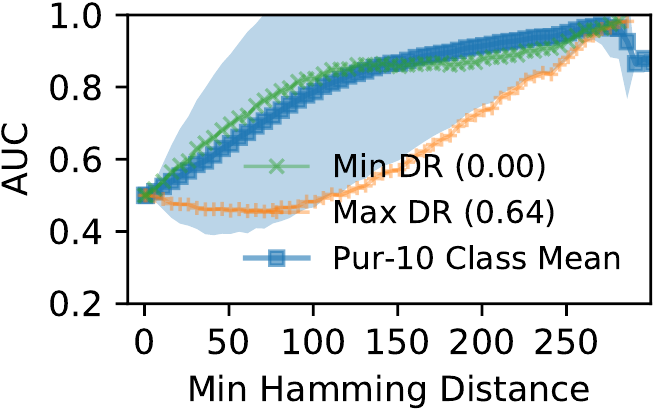}
     }
     \subfloat[Pur-10 Loss MI\label{fig:nn_yeom_label_auc-pur10}]{%
       \includegraphics[width=\figwide\textwidth]{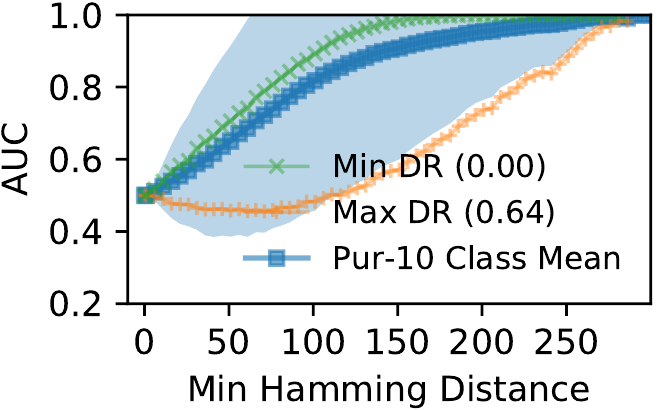}
     }
     \subfloat[Pur-10 Shadow MI\label{fig:nn_shokri_label_auc-pur10}]{%
       \includegraphics[width=\figwide\textwidth]{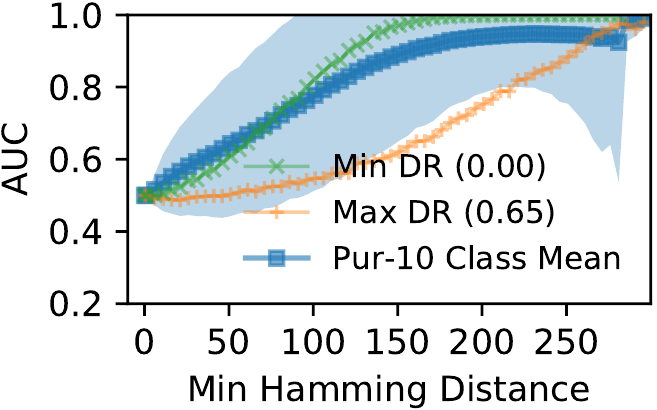}
     }
     \subfloat[Pur-10 Local WB\label{fig:nasrloc_label_auc-pur10}]{%
       \includegraphics[width=\figwide\textwidth]{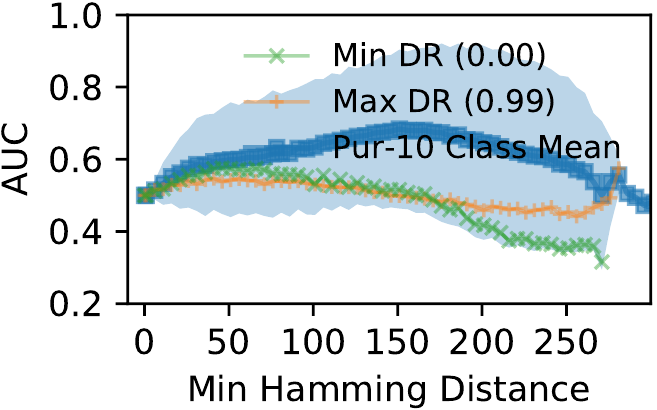}
     }
     \subfloat[Pur-10 Global WB\label{fig:nasrglob_label_auc-pur10}]{%
       \includegraphics[width=\figwide\textwidth]{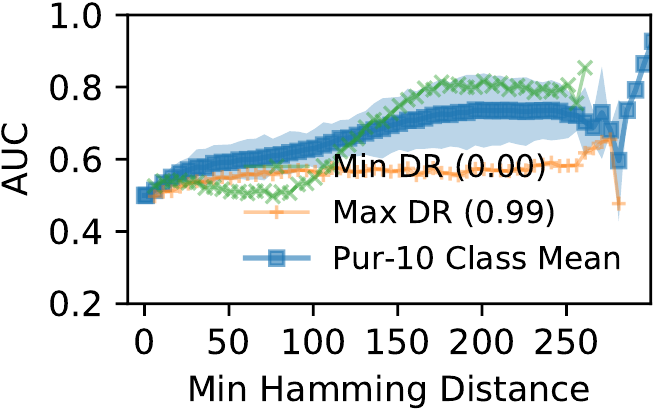}
     }\\ \vspace{-2mm}
     
     \subfloat[Pur-50 Conf MI\label{fig:nn_salem_label_auc-pur50}]{%
       \includegraphics[width=\figwide\textwidth]{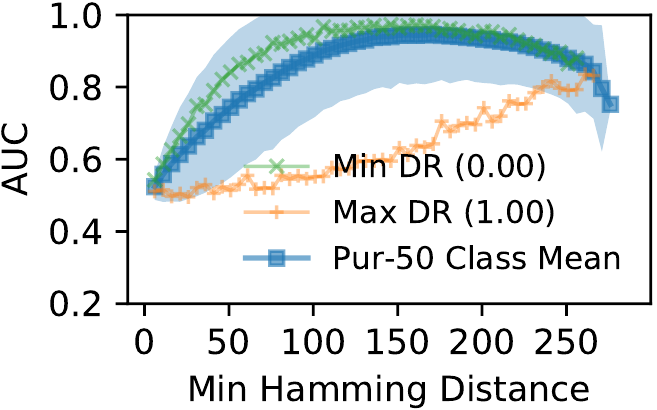}
     }
     \subfloat[Pur-50 Loss MI\label{fig:nn_yeom_label_auc-pur50}]{%
       \includegraphics[width=\figwide\textwidth]{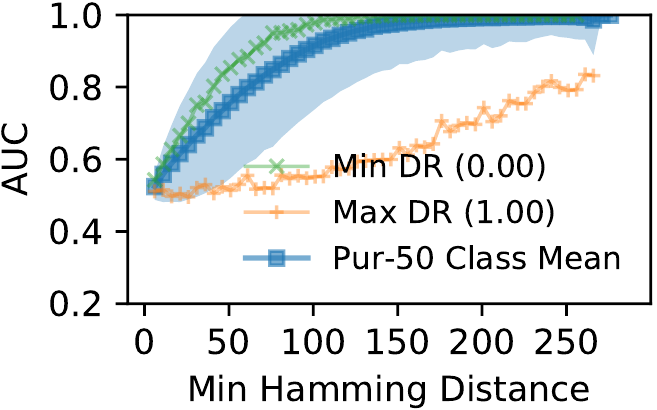}
     }
     \subfloat[Pur-50 Shadow MI\label{fig:nn_shokri_label_auc-pur50}]{%
       \includegraphics[width=\figwide\textwidth]{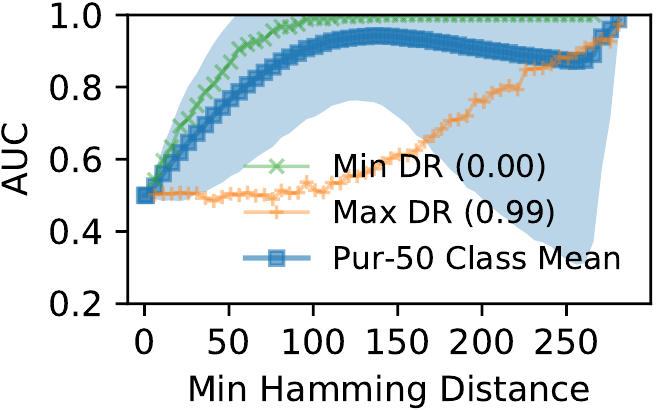}
     }
     \subfloat[Pur-50 Local WB\label{fig:nasrloc_label_auc-pur50}]{%
       \includegraphics[width=\figwide\textwidth]{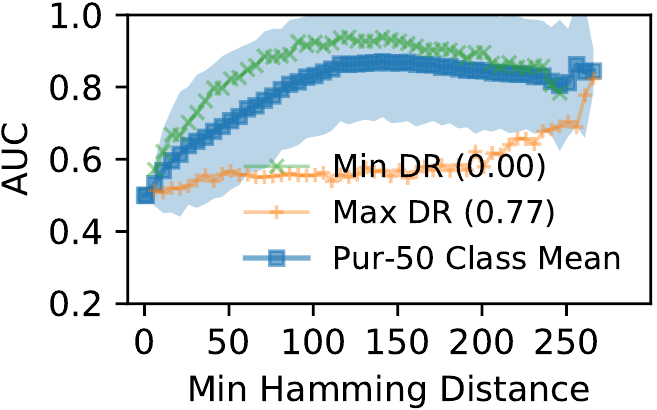}
     }
     \subfloat[Pur-50 Global WB\label{fig:nasrglob_label_auc-pur50}]{%
       \includegraphics[width=\figwide\textwidth]{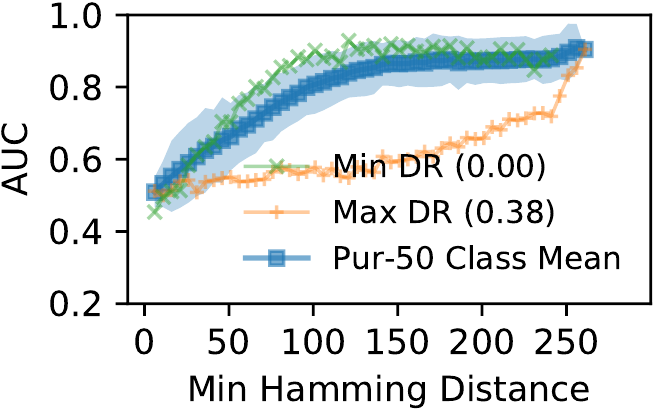}
     }\\ \vspace{-2mm}
     
     \subfloat[Pur-100 Conf MI\label{fig:nn_salem_label_auc-pur100}]{%
       \includegraphics[width=\figwide\textwidth]{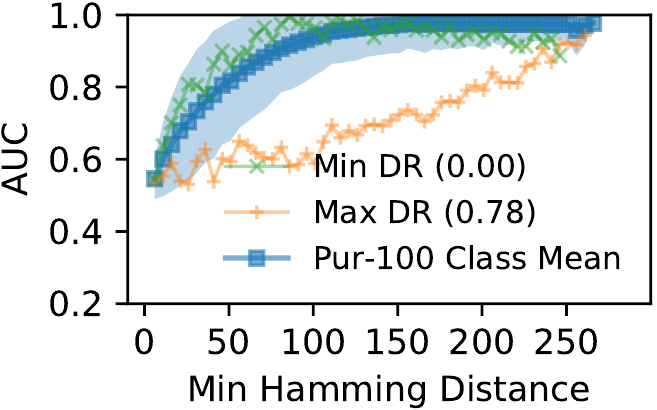}
     }
     \subfloat[Pur-100 Loss MI\label{fig:nn_yeom_label_auc-pur100}]{%
       \includegraphics[width=\figwide\textwidth]{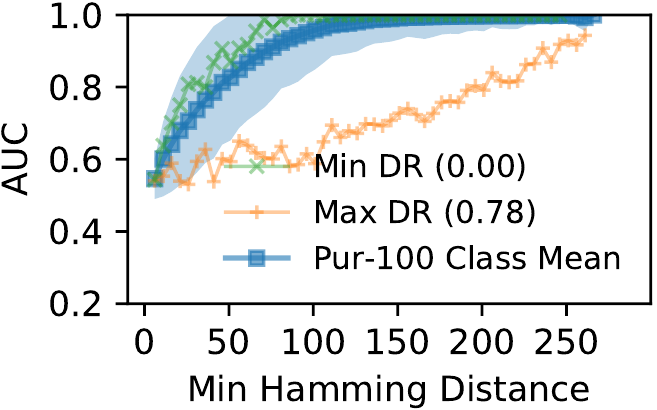}
     }
     \subfloat[Pur-100 Shadow MI\label{fig:nn_shokri_label_auc-pur100}]{%
       \includegraphics[width=\figwide\textwidth]{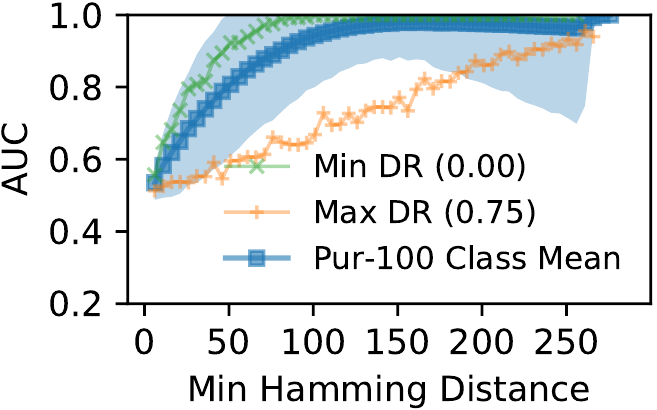}
     }
     \subfloat[Pur-100 Local WB\label{fig:nasrloc_label_auc-pur100}]{%
       \includegraphics[width=\figwide\textwidth]{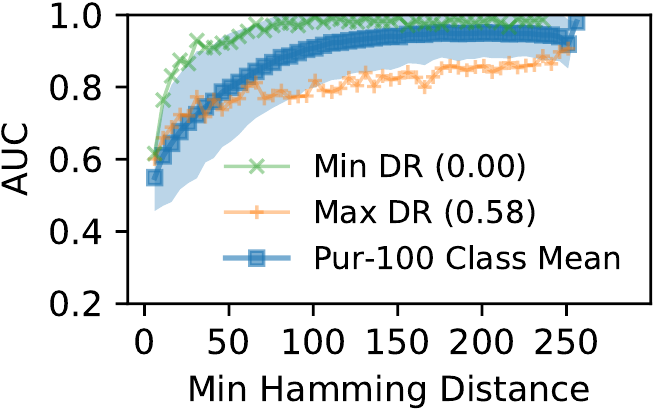}
     }
     \subfloat[Pur-100 Global WB\label{fig:nasrglob_label_auc-pur100}]{%
       \includegraphics[width=\figwide\textwidth]{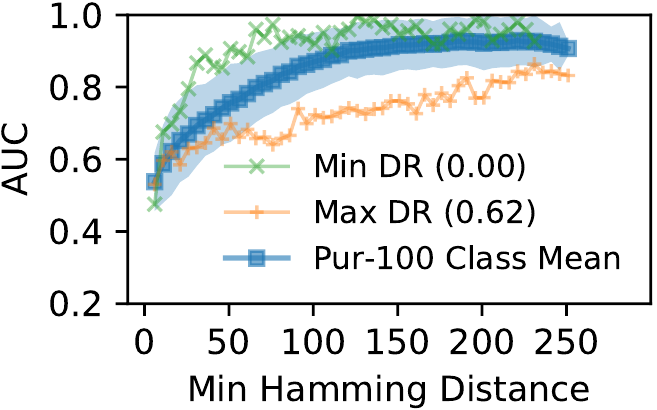}
     }
     \caption{Increasing AUC of MIA with increasing distance of synthetic non-members from the training dataset, with a separation of class labels depending on the size of the DR, for the \textbf{Loc-30}, \textbf{Pur-2, 10, 20, 50, 100} datasets.}
     \label{fig:mia_label_auc-all}
     \vspace{-3mm}
   \end{figure*}

\section{Additional Figures and Experimentation}
\subsection{Additional Plots}
\textbf{CIFAR-20 Plots}
\label{sec:appendix-cifar20}
In Section~\ref{sec:mia-exp}, we presented results for CIFAR-100, here we provide accompanying plots in Figure~\ref{fig:nn_cif20_nmvec_auc} and \ref{fig:nn_cif20_genvec_auc} for CIFAR-20, which demonstrates the same trends as those observed in CIFAR-100. We do note that the AUC curves for CIFAR-20 are slightly lower than the respective CIFAR-100 curves. An expected result due to the reduction in the number of class labels.

    

\begin{figure}[t]
    \vspace{-2mm}
    \centering
    \subfloat[Original vectors\label{fig:nn_cif20_nmvec_auc}]{%
    \includegraphics[width=0.48\columnwidth]{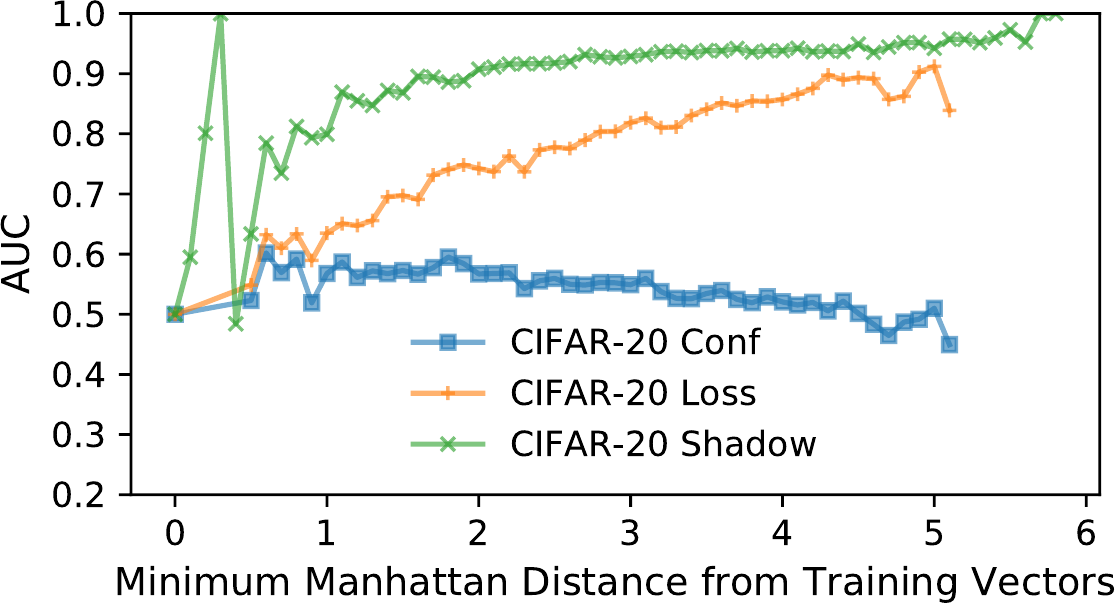}
    } 
    \subfloat[Generated vectors\label{fig:nn_cif20_genvec_auc}]{%
    \includegraphics[width=0.48\columnwidth]{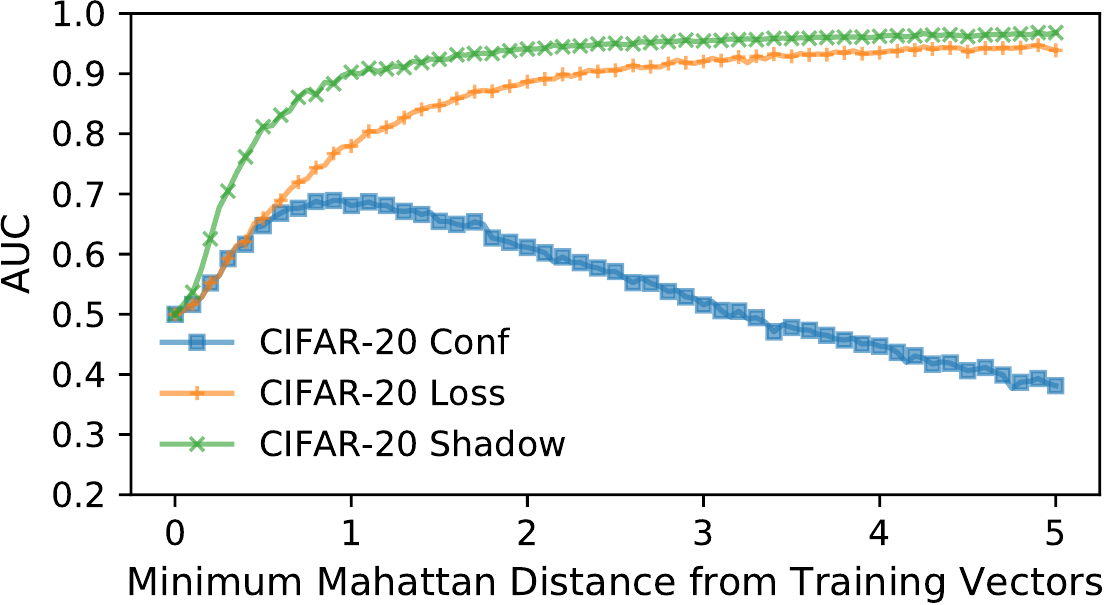}
    }
    \caption{AUC of MI attacks on original and synthetic non-member vectors of the CIFAR-20 dataset as a function of Manhattan distance.}
    \label{fig:nn_cif20_auc}
\end{figure}

\textbf{Per-Label Plots}
\label{sec:appendix-perlabel}
As previously discussed in Section~\ref{sec:mia-labels}, we had only shown the Purchase-20 dataset. We now provide the per-label plots of our remaining binary datasets in Fig.~\ref{fig:mia_label_auc-all}.

\blue{
\subsection{Validating the Indistinguishable Neighbor Assumption}
\label{app:valid}
To demonstrate that the indistinguishable neighbor assumption from Definition~\ref{def:smooth-dist} holds for real-world datasets, we train a Generative Adversarial Network (GAN) to produce and discriminate between real and perturbed vectors from the Purchase dataset. We train the GAN over 50 epochs with 90\% of the data, and evaluate with the remaining 10\%. We use a 100 length noise input to the generator. In Figure~\ref{fig:indist_neighbor}, it is clear that at small distances ($r$-values) there is little advantage in distinguishing between a real vector and a perturbed vector. The advantage increases, and becomes significant, as the distance increases, validating our theoretical assumption.

   \begin{figure}[t]
   \centering
   \includegraphics[width=0.65\columnwidth]{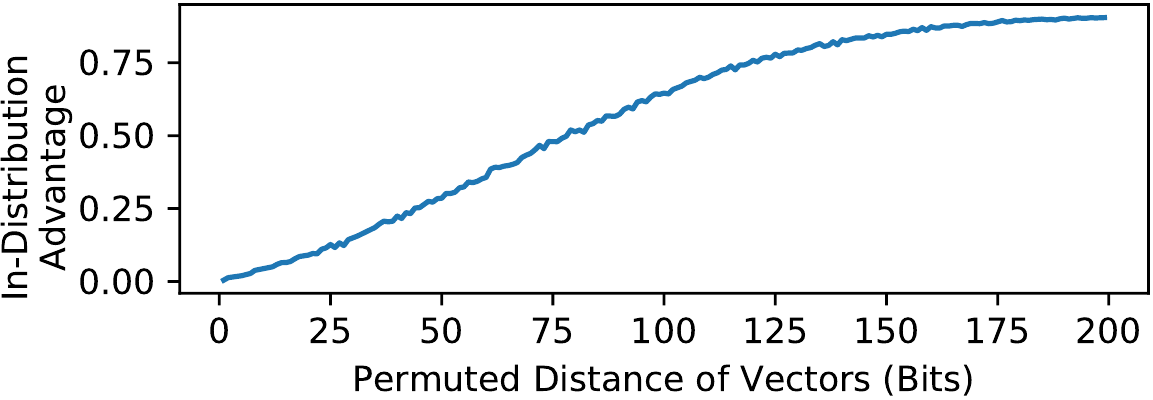} \caption{\blue{Advantage of the GAN distinguisher in distinguishing between real and perturbed vectors from the Purchase dataset at increasing distances.}}
   \label{fig:indist_neighbor}
  \vspace{-3mm}
   \end{figure}
}

\subsection{Exact AI on a Single Missing feature}\label{sec:1feat_ai}
In this section we present an equivalent AI attack to that in Section~\ref{sec:ai-result}, with the exception that only the single most informative feature is to be inferred. Compared to Table~\ref{tab:ai_adv}, we see that AI advantages for a single missing feature are better than their counterparts for multiple missing features. This is intuitively clear since with more feature information withheld from an attacker (15 features as in Section~\ref{sec:ai-result}), the difficulty of the attack increases, and the likelihood of AI success will decrease. However, when compared to Table~\ref{tab:model_traintest}, we note that the significant MI performance (in terms of AUC) is not reflected in the AI performance of Table~\ref{tab:1feat_ai}. For a single missing feature, AI is equivalent to AAI, since in a binary dataset, with only one missing feature, it is either correct or incorrect. Thus, we only perform AAI for the case of multiple missing features, as is done in Section~\ref{sec:ai-result}.



\begin{table}[t]
\caption{Attribute Inference (Exp.~\ref{exp:attr-inf-somesh}) Advantage, where the adversary seeks to infer the exact attribute, when a single most informative feature is missing. The results below are normalized when dealing with ties.}
\label{tab:1feat_ai}
\resizebox{\columnwidth}{!}{%
\begin{tabular}{|r|cccccc|}
\hline
\textbf{AI}  & Loc-30 & Pur-2 & Pur-10 & Pur-20 & Pur-50 & Pur-100                          \\ \hline
Salem Advantage  & 0.0700   & 0.0051     & 0.0266      & 0.0396      & 0.0815      & 0.0917       \\
Yeom Advantage   & 0.0581   & 0.0069     & 0.0191      & 0.0294      & 0.0655      & 0.0791       \\
Shokri Advantage & 0.0377   & -0.0057    & 0.0445      & 0.0581      & 0.0318      & 0.0251       \\
\hline
\end{tabular}
}
\vspace{-3mm}
\end{table}

\subsection{Tuning Attack Models for SMI}\label{sec:tune_model}
{
It may be argued that these MI attacks are not specifically trained to distinguish between members and nearby (synthetic) non-members, which may explain their poor performance in SMI. To investigate if we can improve their performance of SMI, we tune the training process of these attack models to further include nearby synthetic non-members. This augmented training process is only applicable to the MI attacks that employ an attack model, i.e., Shadow, Local WB, and Global WB. The other two MI attacks, i.e., Conf and Loss MI, directly inspect the outputs of the target model for their MI decision, and hence tuning the decision based on member and nearby synthetic non-member vectors is not applicable.


To perform this experiment we take the same experimental steps as Section~\ref{sec:mia-synthetic}, select the Shadow MI attack, and augment the tuning step with synthetic non-members generated from both members and non-members of the attack model training set. For each training vector (member or non-member), we generate two synthetic vectors at all Hamming distances up to 10.
These synthetic non-members are then used to update the attack model. 


\begin{figure}
    \centering
    \subfloat[SMI View\label{fig:tune_attack_model_16}]{%
       \includegraphics[width=0.45\columnwidth]{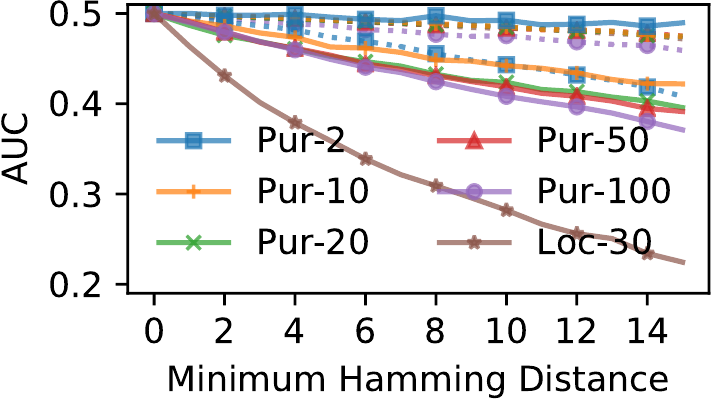}
     }\hfill
     \subfloat[Extended View\label{fig:tune_attack_model_200}]{%
       \includegraphics[width=0.45\columnwidth]{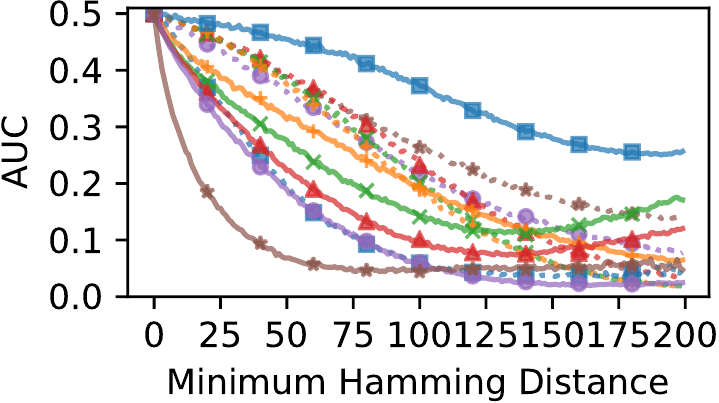}
     }
    \caption{AUC performance on Shadow MI tuned with additional close vectors (dotted lines). The existing Shadow MI results (solid lines) have been mirrored on 0.5 to allow for easier comparison pre and post tuning. 
    }
    \label{fig:tune_attack_model}
    \vspace{-3mm}
\end{figure}

From Fig.~\ref{fig:tune_attack_model}, it can be observed that the AUC of the attack at distances close to the dataset still remains close to 0.5, while at larger distances, the AUC approaches 0, indicating that the attack can distinguish between members and non-members as we move away from the dataset, although with membership label reversed, i.e., more members are now classified as non-members and vice versa. Upon closer inspection, the attack model had no advantage in inferring membership of member vectors (near 0.5 AUC across all datasets). On the other hand, the attack model erred more towards mislabeling non-members (both original and synthetic) as members. We hypothesize this output label `flipping` of the trend is due to the numerous additional close non-members provided to the attack model, which ``confuses'' the model in distinguishing members from non-members, producing an AUC below 0.5.  
Regardless, for all datasets tuning the attack model for SMI does not show any improvement in detecting non-members close to the dataset compared to the original attack model. We also carried out an additional repetition of the experiment with one synthetic vector generated per member and non-member, at each Hamming distance up to 50. This demonstrated worse AUC performance over all distances. 


We conclude that despite the retraining the attack model with additional nearby non-members, the attack failed to achieve SMI. In fact, MI performance generally decreased, due to the similarity of members and the synthetic nearby non-members.  
}

\section{Metrics, Balls and Siblings}
\label{app:metrics}
\blue{The results from Section~\ref{sec:definition} do not apply to any arbitrary distance metric. For instance, given any distance metric $d$, the metric $C \cdot d$, where $C > 0$ is a constant is also a distance metric. But this introduces arbitrarily large (artificial) distance between vectors. We, therefore, restrict ourselves to metrics that do not exhibit arbitrarily large deviation given small perturbation in vectors. This leads to the notion of conserving metric~\cite[\S 1.6]{metric-spaces} to be introduced shortly.}

\begin{theorem}[Metrics]
Let $d_1$ be a metric on $\mathbb{D}$. Let $\mathbf{x}, \mathbf{x}' \in \mathbb{D}^m$. Then the functions
\begin{enumerate}[itemsep=3pt]
    \item $d_M(\mathbf{x}, \mathbf{x}') = \sum_{i=1}^m d_1(x_i, x'_i)$,
    \item $d_E(\mathbf{x}, \mathbf{x}') = \sqrt{\sum_{i=1}^m (d_1(x_i, x'_i))^2}$,
    \item $d_\infty(\mathbf{x}, \mathbf{x}') = \max_{i \in [m]} (d_1(x_i, x'_i))$,
\end{enumerate}
are metrics on the product space $\mathbb{D}^m$. Moreover, for every $\mathbf{x}, \mathbf{x}' \in \mathbb{D}^m$, we have $d_\infty(\mathbf{x}, \mathbf{x}') \le d_E(\mathbf{x}, \mathbf{x}') \le d_M(\mathbf{x}, \mathbf{x}')$ ~\cite[\S 1.6]{metric-spaces}.
\end{theorem}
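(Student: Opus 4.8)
The plan is to verify the three metric axioms (nonnegativity with identity of indiscernibles, symmetry, triangle inequality) for each of the candidate functions $d_M$, $d_E$, $d_\infty$, reducing everything to the corresponding properties of the base metric $d_1$ applied coordinatewise, and then to establish the inequality chain by an elementary argument on nonnegative real tuples. Since the statement cites a textbook, the point is that every piece is routine except for one inequality.

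First I would dispose of the easy axioms. For any of $d_M, d_E, d_\infty$, nonnegativity is immediate because each is built (by summing, by summing squares then taking a square root, or by taking a maximum) from the nonnegative quantities $d_1(x_i, x'_i)$; symmetry follows termwise from the symmetry of $d_1$. For the identity of indiscernibles, each of the three functions vanishes if and only if $d_1(x_i, x'_i) = 0$ for every $i \in [m]$ — for $d_M$ and $d_E$ because a sum of nonnegative terms is zero only when each term is, and for $d_\infty$ directly — which by the corresponding property of $d_1$ is equivalent to $x_i = x'_i$ for all $i$, i.e. $\mathbf{x} = \mathbf{x}'$.

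The substantive step is the triangle inequality. Fix $\mathbf{x}, \mathbf{y}, \mathbf{z} \in \mathbb{D}^m$ and set $a_i = d_1(x_i, y_i)$, $b_i = d_1(y_i, z_i)$, so that $d_1(x_i, z_i) \le a_i + b_i$ for each $i$ by the triangle inequality for $d_1$. For $d_M$, summing over $i$ gives $d_M(\mathbf{x}, \mathbf{z}) \le \sum_i (a_i + b_i) = d_M(\mathbf{x}, \mathbf{y}) + d_M(\mathbf{y}, \mathbf{z})$. For $d_\infty$, from $d_1(x_i, z_i) \le a_i + b_i \le \max_j a_j + \max_j b_j$ for every $i$, taking the maximum over $i$ gives the claim. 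For $d_E$ the real work appears: from $d_1(x_i, z_i) \le a_i + b_i$ we get $d_E(\mathbf{x}, \mathbf{z})^2 \le \sum_i (a_i + b_i)^2$, and then I would invoke Minkowski's inequality $\sqrt{\sum_i (a_i + b_i)^2} \le \sqrt{\sum_i a_i^2} + \sqrt{\sum_i b_i^2}$, which is itself a one-line consequence of Cauchy--Schwarz $\sum_i a_i b_i \le \sqrt{\sum_i a_i^2}\,\sqrt{\sum_i b_i^2}$ upon expanding $\sum_i (a_i + b_i)^2$. This Euclidean case is the only place a genuine inequality is used, so it is the main — though not difficult — obstacle.

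Finally, for the comparison $d_\infty(\mathbf{x}, \mathbf{x}') \le d_E(\mathbf{x}, \mathbf{x}') \le d_M(\mathbf{x}, \mathbf{x}')$, set $a_i = d_1(x_i, x'_i) \ge 0$. Then $\max_i a_i = \sqrt{\max_i a_i^2} \le \sqrt{\sum_i a_i^2}$ gives the left inequality, and $\sum_i a_i^2 \le \big(\sum_i a_i\big)^2$ — because the difference equals $2\sum_{i<j} a_i a_j \ge 0$ — gives $\sqrt{\sum_i a_i^2} \le \sum_i a_i$, the right inequality. These are standard facts, so in the paper I would simply cite the reference, but the sketch above is self-contained should a full proof be wanted.
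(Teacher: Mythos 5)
Your proof is correct: the axioms for $d_M$ and $d_\infty$ reduce coordinatewise to those of $d_1$, the Euclidean triangle inequality is correctly handled via Minkowski (itself from Cauchy--Schwarz), and both comparisons $d_\infty \le d_E \le d_M$ are justified by the standard facts $\max_i a_i^2 \le \sum_i a_i^2$ and $\sum_i a_i^2 \le \bigl(\sum_i a_i\bigr)^2$ for nonnegative $a_i$. The paper itself gives no proof of this theorem, simply citing the metric-spaces textbook, and your argument is exactly the standard one that reference supplies, so there is nothing to add.
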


\begin{definition}[Conserving metric]
A metric $d$ is called a conserving metric~\cite[\S 1.6]{metric-spaces} on the product space $\mathbb{D}^m$ if for all $\mathbf{x}, \mathbf{x}' \in \mathbb{D}^m$, we have 
\[
\pushQED{\qed} 
d_\infty(\mathbf{x}, \mathbf{x}') \le d(\mathbf{x}, \mathbf{x}') \le d_M(\mathbf{x}, \mathbf{x}').
\qedhere
\popQED
\]

\end{definition}
Examples of conserving metrics include the Hamming distance over $\mathbb{D}^m = \{0, 1\}^m$, where $d_1(x, x') = |x-x'|$, $x, x' \in \{0, 1\}$, the Euclidean distance over $\mathbb{D}^m = [0, 1]^m$, where $d_1(x, x') = |x-x'|$, $x, x' \in [0, 1]$, and the Manhattan distance ($d_M$) over $\mathbb{D}^m = [-1, 1]^m$, where $d_1(x, x') = |x-x'|$, $x, x' \in [-1, 1]$. Henceforth we will assume the metric $d$ to be a conserving metric on $\mathbb{D}^m$.

For any subset $X \subseteq \mathbb{D}^m$, the diameter of $X$, denoted $\mathsf{diam}_d(X)$ is defined as $\max \{d(\mathbf{x}, \mathbf{x}') \mid \mathbf{x}, \mathbf{x}' \in X\}$.

\descr{Bounded Feature Space.} We assume $\mathbb{D}$ to be bounded, i.e., $\mathsf{diam}_{d_1}(\mathbb{D}) < \infty$. Since $d$ is a conserving metric it follows that $\mathsf{diam}_d(\mathbb{D}^m) < \infty$, and hence the feature space is also bounded. This is equivalent to saying that for any $\mathbf{x} \in \mathbb{D}^m$, there exists an $R > 0$ such that $\mathbb{D}^m = B_d(\mathbf{x}, R)$~\cite[\S 7.1]{metric-spaces}. 

\descr{Siblings.} Overloading notation, we also define
\[
\Phi_i(\mathbf{x}) = \bigcup_{\substack{S \subseteq [m]\\ |S| = i}} \Phi_S(\mathbf{x}),
\]
where $1 \le i \le m - 1$.

\begin{proposition}
\label{prop:find-r}
Let $1 \le i \le m - 1$. Let $r \ge i \times \mathsf{diam}_{d_1}(\mathbb{D})$. Then for every feature vector $\mathbf{x} \in \mathbb{D}^m$, we have $\Phi_i(\mathbf{x}) \subseteq B_d(\mathbf{x}, r)$. 
\end{proposition}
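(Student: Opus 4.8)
The plan is to reduce everything to a bound on the Manhattan product metric $d_M$ and then invoke the conserving property of $d$. First I would fix an arbitrary $\mathbf{x} \in \mathbb{D}^m$ and an arbitrary element $\mathbf{x}' \in \Phi_i(\mathbf{x})$. By definition of $\Phi_i(\mathbf{x})$ as a union over index sets of size $i$, there is some $S \subseteq [m]$ with $|S| = i$ such that $\mathbf{x}' \in \Phi_S(\mathbf{x})$, i.e.\ $\phi_S(\mathbf{x}') = \phi_S(\mathbf{x})$. Unpacking Definition~\ref{def:portions}, this says exactly that $x'_j = x_j$ for every $j \notin S$, while $x'_j$ may be arbitrary (in $\mathbb{D}$) for $j \in S$.

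Next I would estimate $d_M(\mathbf{x}, \mathbf{x}')$ directly. Since the summands vanish off $S$, we have $d_M(\mathbf{x}, \mathbf{x}') = \sum_{j=1}^{m} d_1(x_j, x'_j) = \sum_{j \in S} d_1(x_j, x'_j)$. For each $j \in S$, both $x_j$ and $x'_j$ lie in $\mathbb{D}$, so $d_1(x_j, x'_j) \le \mathsf{diam}_{d_1}(\mathbb{D})$ by the definition of diameter (which is finite by the bounded feature space assumption). Summing the $|S| = i$ terms gives $d_M(\mathbf{x}, \mathbf{x}') \le i \cdot \mathsf{diam}_{d_1}(\mathbb{D}) \le r$, where the last inequality is the hypothesis on $r$.

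Finally, because $d$ is a conserving metric on the product space $\mathbb{D}^m$, the right-hand inequality in the definition of conserving metric gives $d(\mathbf{x}, \mathbf{x}') \le d_M(\mathbf{x}, \mathbf{x}') \le r$, so $\mathbf{x}' \in B_d(\mathbf{x}, r)$. Since $\mathbf{x}'$ was an arbitrary element of $\Phi_i(\mathbf{x})$, this proves $\Phi_i(\mathbf{x}) \subseteq B_d(\mathbf{x}, r)$, and since $\mathbf{x}$ was arbitrary the statement holds for every feature vector. There is no real obstacle here: the only point needing a little care is the book-keeping that a sibling under $S$ differs from $\mathbf{x}$ in at most $|S| = i$ coordinates, after which the conserving-metric inequality does all the work. (One could also remark that the bound is essentially tight when $d = d_M$, but that is not needed for the statement.)
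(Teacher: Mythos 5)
Your proof is correct and follows essentially the same route as the paper's: identify the index set $S$ with $|S|=i$ where the sibling may differ, bound $d_M(\mathbf{x},\mathbf{x}')$ by $i \cdot \mathsf{diam}_{d_1}(\mathbb{D})$ termwise, and conclude via the conserving-metric inequality $d \le d_M$. No gaps.
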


\begin{proof}
Consider any $\mathbf{x}' \in \Phi_i(\mathbf{x})$. Then $\mathbf{x}' \in \Phi_S(\mathbf{x})$, for some $S \subseteq [m]$ where $|S| = i$. Then, as $d$ is a conserving metric,
\begin{align*}
    d(\mathbf{x}, \mathbf{x}') &\le d_M(\mathbf{x}, \mathbf{x}') \le \sum_{j=1}^m d_1(x_j, x'_j) = \sum_{j \in S} d_1(x_j, x'_j) \\
    &\le \sum_{j \in S} \mathsf{diam}_{d_1}(\mathbb{D}) = i \times \mathsf{diam}_{d_1}(\mathbb{D}) \le r. 
\end{align*}
Hence $\mathbf{x}' \in B(\mathbf{x}, r)$.
\end{proof}

For metrics $d_E$ and $d_M$, we define $d_i$ to be the restriction of $d_E$ or $d_M$ to $i$ dimensions in a natural way, where $1 \le i \le m$.

\begin{proposition}
If $\mathsf{diam}_{d_1}(\mathbb{D}) = \delta > 0$, then $\mathsf{diam}_{d_{1}}(\mathbb{D}) < \mathsf{diam}_{d_{2}}(\mathbb{D}^{2}) < \mathsf{diam}_{d_{3}}(\mathbb{D}^{3}) < \cdots$. 
\end{proposition}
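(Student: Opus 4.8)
The plan is to compute $\mathsf{diam}_{d_i}(\mathbb{D}^i)$ explicitly for each $i$ — separately in the two cases where $d_i$ is the restriction of $d_M$ and where it is the restriction of $d_E$ — and then observe that the resulting expression is a strictly increasing function of $i$ precisely because $\delta > 0$.

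First I would record that, since $\mathsf{diam}_{d_1}(\mathbb{D})$ is defined as a maximum, there exist $a, b \in \mathbb{D}$ with $d_1(a, b) = \delta$. For the upper bound: given any $\mathbf{x}, \mathbf{x}' \in \mathbb{D}^i$, each coordinate satisfies $d_1(x_j, x'_j) \le \delta$, so $d_M(\mathbf{x}, \mathbf{x}') = \sum_{j=1}^i d_1(x_j, x'_j) \le i\delta$ and $d_E(\mathbf{x}, \mathbf{x}') = \sqrt{\sum_{j=1}^i d_1(x_j, x'_j)^2} \le \sqrt{i}\,\delta$. For attainment, take the constant vectors $\mathbf{a} = (a, \dots, a)$ and $\mathbf{b} = (b, \dots, b)$ in $\mathbb{D}^i$; then $d_M(\mathbf{a}, \mathbf{b}) = i\delta$ and $d_E(\mathbf{a}, \mathbf{b}) = \sqrt{i}\,\delta$. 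Hence $\mathsf{diam}_{d_i}(\mathbb{D}^i) = i\delta$ when $d_i$ is the restriction of $d_M$, and $\mathsf{diam}_{d_i}(\mathbb{D}^i) = \sqrt{i}\,\delta$ when $d_i$ is the restriction of $d_E$. Since $\delta > 0$, both maps $i \mapsto i\delta$ and $i \mapsto \sqrt{i}\,\delta$ are strictly increasing on the positive integers, which gives $\mathsf{diam}_{d_1}(\mathbb{D}) < \mathsf{diam}_{d_2}(\mathbb{D}^2) < \mathsf{diam}_{d_3}(\mathbb{D}^3) < \cdots$ in either case.

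The computation itself is routine; the one point I would be careful to state explicitly rather than gloss over is that the maximum defining each product diameter is genuinely attained — this is exactly what the constant vectors built from the diameter-realizing pair $(a, b)$ provide, and it is what lets me turn the coordinatewise bound $d_1(x_j, x'_j) \le \delta$ into an equality for the diameter. A secondary remark worth keeping in mind is that the proposition is stated only for $d_E$ and $d_M$ and not for an arbitrary conserving metric: for a general conserving $d$ one only has $d_\infty \le d \le d_M$, and strict monotonicity can fail (e.g. $d = d_\infty$, for which $\mathsf{diam}_{d_\infty}(\mathbb{D}^i) = \delta$ for all $i$), so no attempt at greater generality is needed here.
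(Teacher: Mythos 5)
Your proof is correct and rests on the same key construction as the paper's: repeating a diameter-realizing pair $(a,b)$ across coordinates to witness a large distance in $\mathbb{D}^i$. The only difference is cosmetic — the paper lower-bounds $\mathsf{diam}_{d_{i+1}}(\mathbb{D}^{i+1})$ and appeals to induction, whereas you pair that construction with the coordinatewise upper bound to get the exact values $i\delta$ (for $d_M$) and $\sqrt{i}\,\delta$ (for $d_E$), which makes the strict monotonicity immediate; your closing remark that the claim fails for general conserving metrics such as $d_\infty$ is also accurate and consistent with the paper's restriction to $d_E$ and $d_M$.
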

\begin{proof}
Consider the metric to be $d_E$. Consider $i = 1$. Then there exist $x, x' \in \mathbb{D}$ such that $\delta = d(x, x')$. Construct the $2$-dimensional vectors $\mathbf{x} = (x, x)$ and $\mathbf{x}' = (x', x')$. Then, 
\begin{align*}
    \mathsf{diam}_{d_2}(\mathbb{D}^2) &\ge \sqrt{(d_1(x, x'))^2 + (d_1(x, x'))^2} \\
    &= \sqrt{2}\delta > \delta = \mathsf{diam}_{d_1}(\mathbb{D}).
\end{align*}
The rest of the proof follows by induction. The case for $d_M$ is similar. 
\end{proof}

\begin{proposition}
Let $1 \le i \le m - 1$. Let $\mathsf{diam}_{d_{i+1}}(\mathbb{D}^{i+1}) > r \ge \mathsf{diam}_{d_i}(\mathbb{D}^i)$, where $d_j$ is $d_E$ restricted to $j$ dimensions. Then, 
\begin{enumerate}
    \item For any feature vector $\mathbf{x} \in \mathbb{D}^m$, we have $\Phi_i(\mathbf{x}) \subseteq B_{d_E}(\mathbf{x}, r)$.
    \item There exists a feature vector $\mathbf{x} \in \mathbb{D}^m$, such that $\Phi_{i+1}(\mathbf{x}) \not\subseteq B_{d_E}(\mathbf{x}, r)$.
\end{enumerate}
Furthermore, the same holds for the metric $d_M$, and $d_j$ being $d_M$ restricted to $j$ dimensions. 
\end{proposition}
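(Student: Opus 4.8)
The plan is to prove both parts by direct computation of $d_E$ (and then $d_M$) on siblings, exploiting the single structural fact that a sibling of $\mathbf{x}$ differs from $\mathbf{x}$ only on a bounded set of coordinates, so its distance to $\mathbf{x}$ is exactly a distance taken inside a lower-dimensional product space. This refines Proposition~\ref{prop:find-r}, which only used the coarse conserving-metric bound $i \times \mathsf{diam}_{d_1}(\mathbb{D})$; here we want the sharp threshold, so we must keep track of $\mathsf{diam}_{d_i}(\mathbb{D}^i)$ itself.

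For part 1, I would take an arbitrary $\mathbf{x}' \in \Phi_i(\mathbf{x})$, so that $\mathbf{x}' \in \Phi_S(\mathbf{x})$ for some $S \subseteq [m]$ with $|S| = i$; by Definition~\ref{def:siblings}, $\mathbf{x}$ and $\mathbf{x}'$ agree on every coordinate outside $S$. Hence $d_E(\mathbf{x}, \mathbf{x}')^2 = \sum_{j \in S}(d_1(x_j, x'_j))^2$, which is precisely $d_i$ (the $E$-metric restricted to the $i$ coordinates of $S$, after identifying $S$ with $[i]$) evaluated at the two points of $\mathbb{D}^i$ obtained by restricting $\mathbf{x}$ and $\mathbf{x}'$ to $S$. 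Therefore $d_E(\mathbf{x}, \mathbf{x}') \le \mathsf{diam}_{d_i}(\mathbb{D}^i) \le r$, so $\mathbf{x}' \in B_{d_E}(\mathbf{x}, r)$; as $\mathbf{x}'$ was arbitrary, $\Phi_i(\mathbf{x}) \subseteq B_{d_E}(\mathbf{x}, r)$.

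For part 2, the hypothesis $\mathsf{diam}_{d_{i+1}}(\mathbb{D}^{i+1}) > r$ supplies a pair $\mathbf{u}, \mathbf{v} \in \mathbb{D}^{i+1}$ with $d_{i+1}(\mathbf{u}, \mathbf{v}) > r$. I would build a witness $\mathbf{x} \in \mathbb{D}^m$ (legitimate since $i + 1 \le m$) whose first $i+1$ coordinates equal $\mathbf{u}$ and whose remaining coordinates are any fixed element of $\mathbb{D}$, set $S = \{1, \dots, i+1\}$, and define $\mathbf{x}'$ to agree with $\mathbf{x}$ outside $S$ while its first $i+1$ coordinates equal $\mathbf{v}$. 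Then $\mathbf{x}' \in \Phi_S(\mathbf{x}) \subseteq \Phi_{i+1}(\mathbf{x})$ and $d_E(\mathbf{x}, \mathbf{x}') = d_{i+1}(\mathbf{u}, \mathbf{v}) > r$, so $\mathbf{x}' \notin B_{d_E}(\mathbf{x}, r)$, giving $\Phi_{i+1}(\mathbf{x}) \not\subseteq B_{d_E}(\mathbf{x}, r)$. The argument for $d_M$ is identical word for word: replace sums of squares by sums of the $d_1$-values, and note that $d_M$ restricted to a coordinate subset again coincides with the full $d_M$-distance whenever the vectors agree off that subset.

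I do not expect a genuine obstacle. The only point requiring a moment's care is that $\mathsf{diam}_{d_{i+1}}(\mathbb{D}^{i+1}) > r$ really yields a concrete pair at distance exceeding $r$ — immediate if the diameter is a maximum (as the definition states), and still fine if one reads it as a supremum. The monotonicity proposition immediately preceding already encapsulates the intuition that enlarging the coordinate block strictly increases the achievable distance, which is exactly what makes the threshold $r$ with $\mathsf{diam}_{d_i}(\mathbb{D}^i) \le r < \mathsf{diam}_{d_{i+1}}(\mathbb{D}^{i+1})$ tight; everything else is routine bookkeeping about restrictions of product metrics.
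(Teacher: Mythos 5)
Your proof is correct and follows essentially the same route as the paper: part 1 by observing that a sibling under $S$ with $|S|=i$ is at distance equal to a distance inside $\mathbb{D}^i$, hence at most $\mathsf{diam}_{d_i}(\mathbb{D}^i)\le r$, and part 2 by embedding a far pair of $\mathbb{D}^{i+1}$ into the first $i+1$ coordinates and padding the remaining coordinates identically. The only difference is cosmetic: the paper additionally checks that the diameter-achieving pair differs in every one of the $i+1$ coordinates, a step that is not actually needed since Definition~\ref{def:siblings} only requires agreement outside $S$, so your leaner argument (which also works when the diameter is merely a supremum) is fine.
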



\begin{proof}
For part (1), consider any $\mathbf{x}' \in \Phi_i(\mathbf{x})$. Then $\mathbf{x}' \in \Phi_S(\mathbf{x})$, for some $S \subseteq [m]$ where $|S| = i$. Then,
\begin{align*}
    d_E(\mathbf{x}, \mathbf{x}') &= \sqrt{\sum_{j=1}^m (d_1(x_j, x'_j))^2} \\
    &= \sqrt{\sum_{j \in S} (d_1(x_j, x'_j))^2} \le \mathsf{diam}_{d_i}(\mathbb{D}^i) \le r.
\end{align*}
Hence $\mathbf{x}' \in B_{d_E}(\mathbf{x}, r)$. For part (2), let $\delta = \mathsf{diam}_{d_{i+1}}(\mathbb{D}^{i+1})$. Then their exist $(i+1)$-dimensional vectors $\mathbf{x}', \mathbf{x}'' \in \mathbb{D}^{i+1}$ such that $d_{i+1}(\mathbf{x}', \mathbf{x}'') = \delta$. Furthermore, $d_1(x'_j, x''_j) \neq 0$, for all $j \in [i+1]$. Suppose not, and wlog assume that $d_1(x'_{i+1}, x''_{i+1}) = 0$. Then, we can discard the last element from both vectors, and the resulting $i$-dimensional vectors have distance $\delta$ according to $d_i$, which is greater than $\mathsf{diam}_{d_i}(\mathbb{D}^i)$; a contradiction. Now, sample any $(m - i - 1)$-dimensional vector from $\mathbb{D}^{m - i - 1}$ and append it to both $\mathbf{x}'$ and $\mathbf{x}''$. Let us call the resulting vectors $\mathbf{x}_1$ and $\mathbf{x}_2$. Let $S = \{1, 2, \ldots, i+1\}$. Then, $|S| = i + 1$, and $\mathbf{x}_2 \in \Phi_S(\mathbf{x}_1) \subseteq \Phi_{i+1}(\mathbf{x}_1)$, but 
\begin{align*}
    d_E(\mathbf{x}_1, \mathbf{x}_2) &= \sqrt{\sum_{j=1}^m (d_1(x_j, x'_j))^2} \\
    &= \sqrt{\sum_{j \in S} (d_1(x_j, x'_j))^2} = \delta > r.
\end{align*}
Hence $\mathbf{x}_2 \notin B_{d_E}(\mathbf{x}_1, r)$. 

A similar proof holds for the metric $d_M$.
\end{proof}
\begin{corollary}
\label{cor:phi}
Let $i$ and $\mathbf{x}$ be as in the statement of the previous proposition. Define $d_1(x, x') = | x - x'|$ for $x, x' \in \mathbb{D}$. 
\begin{enumerate}
    \item Let $d_H$ be the Hamming distance on $\mathbb{D} = \{0, 1\}^m$. Let $r \ge i$. Then $\Phi_i(\mathbf{x}) \subseteq B_{d_H}(\mathbf{x}, r)$.
    \item Let $d_M$ be the Manhattan distance on $\mathbb{D} = [-1, 1]^m$. Let $r \ge 2i$. Then $\Phi_i(\mathbf{x}) \subseteq B_{d_M}(\mathbf{x}, r)$.
    \item Let $d_E$ be the Euclidean distance on $\mathbb{D} = [-1, 1]^m$. Let $r \ge \sqrt{4i}$. Then $\Phi_i(\mathbf{x}) \subseteq B_{d_E}(\mathbf{x}, r)$.
\end{enumerate}
\end{corollary}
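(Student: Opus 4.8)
The plan is to obtain Corollary~\ref{cor:phi} as three direct instantiations of the two preceding propositions, specialised to the concrete bounded domains. Since the Hamming distance on $\{0,1\}^m$, the Manhattan distance on $[-1,1]^m$, and the Euclidean distance on $[-1,1]^m$ are all conserving metrics (as recorded right after the definition of conserving metric), every structural hypothesis is already in place; only the relevant diameters need to be computed, and the index restriction $1 \le i \le m-1$ is inherited verbatim from the statement of the previous proposition.

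For part (1), take $\mathbb{D} = \{0,1\}$ with $d_1(x,x') = |x-x'|$. Then $d_1$ takes only the values $0$ and $1$, so $\mathsf{diam}_{d_1}(\mathbb{D}) = 1$. Substituting this into Proposition~\ref{prop:find-r} gives $\Phi_i(\mathbf{x}) \subseteq B_{d_H}(\mathbf{x}, r)$ for every $\mathbf{x} \in \{0,1\}^m$ as soon as $r \ge i \cdot 1 = i$, which is exactly the stated condition. For part (2), take $\mathbb{D} = [-1,1]$ with $d_1(x,x') = |x-x'|$; now $\mathsf{diam}_{d_1}(\mathbb{D}) = |1 - (-1)| = 2$, so Proposition~\ref{prop:find-r} yields $\Phi_i(\mathbf{x}) \subseteq B_{d_M}(\mathbf{x}, r)$ whenever $r \ge 2i$.

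For part (3) I would deliberately not use Proposition~\ref{prop:find-r}, since with $\mathsf{diam}_{d_1}([-1,1]) = 2$ it would only give the weaker bound $r \ge 2i$; instead I would invoke part (1) of the proposition stated immediately before the corollary, which permits the radius to be the tighter quantity $\mathsf{diam}_{d_i}(\mathbb{D}^i)$, the diameter of the $i$-dimensional restriction of $d_E$. Over $\mathbb{D}^i = [-1,1]^i$ the extreme pair is $(-1,\dots,-1)$ and $(1,\dots,1)$, so $\mathsf{diam}_{d_i}([-1,1]^i) = \sqrt{\sum_{j=1}^i 2^2} = \sqrt{4i}$. That proposition then gives $\Phi_i(\mathbf{x}) \subseteq B_{d_E}(\mathbf{x}, r)$ for all $\mathbf{x}$ whenever $r \ge \sqrt{4i}$, as claimed.

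There is essentially no obstacle: each computation is a single line. The only points requiring a little care are (a) choosing the $\mathsf{diam}_{d_i}(\mathbb{D}^i)$ version of the preceding proposition in the Euclidean case so as to land on the sharp radius $\sqrt{4i} = 2\sqrt{i}$ rather than the crude $2i$, and (b) confirming — as already recorded earlier in this section — that each of $d_H$, $d_M$, $d_E$ is genuinely conserving on its respective product space, which is precisely what licenses applying both propositions.
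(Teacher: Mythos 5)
Your proposal is correct and matches the intended derivation: the paper leaves the corollary as an immediate instantiation of the two preceding propositions, using Proposition~\ref{prop:find-r} with $\mathsf{diam}_{d_1}(\{0,1\})=1$ and $\mathsf{diam}_{d_1}([-1,1])=2$ for parts (1) and (2), and the $\mathsf{diam}_{d_i}(\mathbb{D}^i)=\sqrt{4i}$ bound from part (1) of the final proposition for the Euclidean case. The only pedantic remark is that the final proposition's hypothesis also carries the upper bound $r<\mathsf{diam}_{d_{i+1}}(\mathbb{D}^{i+1})$, which its part (1) never uses, and in any case the inclusion extends to all larger $r$ by monotonicity of balls, so your argument goes through as written.
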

\blue{The above corollary can be used to select an $r$ such that all siblings of a portion are within the $r$-ball. This is used, for instance, by the AI adversary to employ an SMI attack as a subroutine to infer attributes in Section~\ref{sec:definition}.}


\section{Relationship between Inference Notions}
\label{app:inf-relations}

\descr{Proof of Theorem~\ref{the:mi-smi}.}
\begin{proof}
We essentially show that a membership inference (MI) adversary does not imply a strong membership inference (SMI) adversary, i.e., $\text{MI} \centernot\Rightarrow \text{SMI}$. Let $r > 0$ be fixed. Let $k \ge 2$ be a fixed number of labels. Let $N \gg n$. Sample $N$ points from $\mathbb{R}^m$ such that for all pairs of points $\mathbf{x}, \mathbf{x}'$ in this sample, with $\mathbf{x} \neq \mathbf{x}'$, we have $d(\mathbf{x}, \mathbf{x}') > 3r$.\footnote{There can be many such vectors, which can be found using a greedy algorithm~\cite{gil-var-bound}. For instance, if $\mathbb{D}=\{0, 1\}$, $r = 1$, and $d$ is the Hamming distance, then the Gilbert-Varshamov bound states that there are at least 
$2^m / \sum_{i = 0}^3 \binom{m}{i},$
vectors with minimum Hamming distance $>3r = 3$~\cite{gil-var-bound, asymp-gv-bound}.
} 
Let us call this sample $S_1$. For each $\mathbf{x} \in S_1$, assign it an arbitrary label from the $k$ labels and set $c(\mathbf{x})$ to this label. Initialize an empty set $S_2$. Now for each $\mathbf{x} \in S_1$, sample a random point from $B(\mathbf{x}, r) - \{ \mathbf{x} \}$, and add to $S_2$, and assign it the same label as $\mathbf{x}$, i.e., $c(\mathbf{x})$. Let $S = S_1 \cup S_2$. Notice that every vector in $S$ has precisely one $r$-neighbor in $S$. To see this, first note that every vector in $S_1$ is not an $r$-neighbor of any other vector in $S_1$ by construction. Next, we take a vector $\mathbf{x}$ in $S_1$, and see if it has more than one $r$-neighbors in $S_2$. Let $\mathbf{y}$ be the $r$-neighbor guaranteed by construction. Assume now that $\mathbf{w} \in S_2$ different from $\mathbf{y}$ is another $r$-neighbor of $\mathbf{x}$. Let $\mathbf{z} \in S_1$ be the $r$-neighbor of $\mathbf{w}$ in $S_1$ guaranteed by construction. Then,
\[
 d(\mathbf{x}, \mathbf{z}) \le d(\mathbf{x}, \mathbf{w}) + d(\mathbf{w}, \mathbf{z}) 
    \Rightarrow d(\mathbf{x}, \mathbf{z}) \le r + r = 2r,
\]
a contradiction. Next, we will look at vectors in $S_2$. We will check if any vector from $S_2$ has more than one $r$-neighbor in $S_1$. Then, we will check if the vectors in $S_2$ have any $r$-neighbors in $S_2$. This exhausts the cases. 

Let $\mathbf{y}$ be the $r$-neighbor in $S_2$ of some $\mathbf{x} \in S_1$. This is true by construction. Let $\mathbf{z}$ be some other vector in $S_1$. Then, $d(\mathbf{x}, \mathbf{y}) \le r$, and $d(\mathbf{x}, \mathbf{z}) > 3r$. Therefore,
\begin{align*}
    d(\mathbf{x}, \mathbf{z}) &\le d(\mathbf{x}, \mathbf{y}) + d(\mathbf{y}, \mathbf{z}) \\
\Rightarrow 3r &<  d(\mathbf{x}, \mathbf{y}) + d(\mathbf{y}, \mathbf{z}) \\
\Rightarrow 3r &< r + d(\mathbf{y}, \mathbf{z}) 
\Rightarrow 2r < d(\mathbf{y}, \mathbf{z}), 
\end{align*}
hence $\mathbf{y}$ is not an $r$-neighbor of any other $\mathbf{z}$ in $S_1$. Now consider some $\mathbf{w} \in S_2$ not equal to $\mathbf{y}$. Assume to the contrary that $d(\mathbf{y}, \mathbf{w}) \le r$. Let $\mathbf{z}$ be the $r$-neighbor of $\mathbf{w}$ in $S_1$ (again by construction, it should exist). Then,
\begin{align*}
    d(\mathbf{x}, \mathbf{z}) &\le d(\mathbf{x}, \mathbf{y}) + d(\mathbf{y}, \mathbf{w}) + d(\mathbf{y}, \mathbf{z}) \\
    \Rightarrow d(\mathbf{x}, \mathbf{z}) &\le r + r + r = 3r,
\end{align*}
which is a contradiction. 

Let $\mathbb{D}^m = S$. Define the distribution $\mathcal{D}$ as the uniform distribution over $S$. Sample a dataset $X \leftarrow \mathcal{D}^n$. Define a classifier $h_X$ which given a point $\mathbf{x}$ in $X$, assigns its label $c(\mathbf{x})$ to all vectors within the ball $B(\mathbf{x}, r)$, i.e., all $r$-neighbors of $\mathbf{x}$ have the constant label. The classifier $h_X$, when queried for a point $\mathbf{x} \in X$, simply outputs the label $c(\mathbf{x})$. For any point $\mathbf{x} \notin X$, it checks if there is some $\mathbf{x}' \in X$ such that $d(\mathbf{x}', \mathbf{x}) \le r$. If yes, it returns the label $c(\mathbf{x}')$. Otherwise, it returns an arbitrary label from the $k$ labels. 

Now consider an MI adversary $\adversary$ which given $(\mathbf{x}, c(\mathbf{x}))$, queries $h_X$ with $\mathbf{x}$, and outputs 1 (member) if $h_X(\mathbf{x}) = c(\mathbf{x})$ and 0 (non-member) otherwise. Let us calculate the probabilities in:
\[
\Pr [b' = 1 \mid b = 1] - \Pr [b' = 1 \mid b = 0],
\]
which define the adversary's advantage (Definition~\ref{def:mi-adv}). If $\mathbf{x}$ is a member, then the adversary does not make a mistake, as the label returned by $h_X$ is exactly the label $c(\mathbf{x})$ by construction. Therefore,
\[
\Pr [b' = 1 \mid b = 1] = 1.
\]
Now consider the other probability, i.e., $\Pr [b' = 1 \mid b = 0]$. The adversary could erroneously output $\mathbf{x}$ as a member either if its $r$-neighbor was in $X$, or if its $r$-neighbor was not part of $X$, but the classifier gives it the correct label by chance. Thus
\begin{align*}
    \Pr [b' = 1 \mid b = 0] &= \left( 1 - \left(\frac{2N-2}{2N-1}\right)^n \right) \\
    &+ \left(\frac{2N-2}{2N-1}\right)^n \left( \frac{1}{k}\right) \\
    &= 1 - \left(1 - \frac{1}{2N-1}\right)^n \left(\frac{k-1}{k}\right)
\end{align*}
Subtracting this from the above, we see that the advantage is
\[
\left(1 - \frac{1}{2N-1}\right)^n \left(\frac{k-1}{k}\right)
\]
By Bernoulli's inequality~\cite{ber-ineq}, we have
\[
\left(1 - \frac{1}{2N-1}\right)^n  \ge 1 - \frac{n}{2N-1},
\]
and noting that $N > n$, we get $2N - 1 \ge 2n$. And therefore,
\[
1 - \frac{n}{2N-1} \ge 1 - \frac{n}{2n} = \frac{1}{2}.
\]
Finally, we get the advantage of at least $\frac{1}{2} \frac{k-1}{k}$, which is a constant.\footnote{Note that if the adversary just guesses randomly, the advantage is 0. This is significantly greater than 0.} However, the same adversary if used as a subroutine in Experiment~\ref{exp:strong-mem-inf}, will always output 1 if queried on $\mathbf{x}$ and its $r$-neighbor, since every $r$-neighbor of a member $\mathbf{x} \in X$, is assigned the true label (even if it is not in $X$, by construction). Hence, the resulting adversary has no advantage in the sense of SMI.  
\end{proof}

\section{Miscellaneous Results}
\label{app:misc}

\descr{Relationship between AUC and Advantage.} The MI advantage from Definition~\ref{def:mi-adv} denoted
$\text{Adv}_{\text{MI}}(\adversary, h_X, n, \mathcal{D}) $ can be empirically estimated as $\text{TPR}(\tau) - \text{FPR}(\tau)$\footnote{i.e., $\Pr [b' = 1 \mid b = 1] = \frac{\Pr [b' = 1 \wedge b = 1]}{\Pr [b = 1]}  
=\text{TPR}$ and $\Pr [b' = 1 \mid b = 0] = \frac{\Pr [b' = 1 \wedge b = 0]}{\Pr [b = 0]} 
=\text{FPR}$} with $\tau$ denoting the threshold parameter of the given classifier $h_X$ and $\text{TPR}(\tau)$ and $\text{FPR}(\tau)$ denoting the True Positive Rate and False Positive Rate respectively at $\tau$. The AUC-ROC statistic captures the aggregate performance of the classifier $h_X$ for all possible values of the threshold $\tau$ and is computed as $\text{AUC}=\int_{\text{FPR}(\tau)=0}^1 \text{TPR}(\tau) d(\text{FPR}(\tau)) = \int_{x=0}^1 \text{TPR}(\text{FPR}^{-1}(x)) dx$. 

When $ \text{Adv}_{\text{MI}}(\adversary, h_X, n, \mathcal{D})) = \text{Adv}_m$ for all possible values of $\tau$ (\textit{i.e.} Advantage is same for all values of the threshold parameter), the $\text{AUC}$ is computed as $ \int_{x=0}^1  (\text{FPR}(\text{FPR}^{-1}(x)) + \text{Adv}_m) dx = \frac{1}{2} + \text{Adv}_m$. Thus, $\text{AUC} - \frac{1}{2}$ equals the advantage from Definition~\ref{def:mi-adv}. Even when the advantages vary with $\tau$, $\text{AUC}-\frac{1}{2}$ is a good approximation for the average advantage.

Similarly, the Advantage in the strong membership inference definition, $\text{Adv}_{\text{SMI}}(\adversary, h_X, r, n, \mathcal{D})$ can be empirically estimated as $\text{TPR}(\tau) - \text{FPR}(\tau)$ as long as $B_d(\mathbf{x}_0, r)$ is assumed to have a small number of samples from $X$, i.e., in general $B_d(\mathbf{x}_0, r)$ would contain more elements outside of $X$.

\descr{Average Manhattan Distance.} Let $\mathbb{D}^m = [-1, 1]^m$. Given a vector $\mathbf{x} \in \mathbb{D}^m$, we want to find the Manhattan distance $d_M$ between $\mathbf{x}$ and a vector $\mathbf{y} \in \mathbb{D}^m$, each of whose elements is sampled uniformly at random from the set $\mathbb{D} = [-1, 1]$. Define the distance as $\alpha_m$. Consider first $m = 1$. Then, $\alpha_1$, the expected Manhattan distance between $x$ and $y$, can be defined as
\[
\alpha_1 = \frac{1}{R} \int_{-1}^{+1} \int_{-1}^{+1} |x - y|\, dx\, dy,
\]
where $R = 4$ is the area of the square $[-1, 1] \times [-1, 1]$. Integrating the above we get,
\begin{align*}
    \alpha_1 &= \frac{1}{4} \int_{-1}^{+1} \left(\int_{-1}^{y} (y-x)\, dx + \int_{y}^{+1} (x-y)\, dx \right) \, dy \\
            &= \frac{1}{4} \int_{-1}^{+1} (y^2 + 1) \, dy 
            = \frac{1}{4} \cdot \frac{8}{3} = \frac{2}{3}.
\end{align*}
By independence, we get $\alpha_m = m \alpha_1 = 2m/3$. For $m = 5$, we get $\alpha_5 = 10/3 \approx 3.33$. Thus, we set $\alpha = 3.33$ as the benchmark for a random guess with 5 missing features in the CIFAR dataset.


\end{document}